\documentclass{article}

\usepackage{arxiv}

\usepackage[utf8]{inputenc}
\usepackage[T1]{fontenc}
\usepackage{microtype}
\usepackage{nicefrac}
\usepackage{natbib}
\setcitestyle{numbers,square}
\usepackage{amsmath,amssymb,amsthm,bm,mathtools}
\usepackage{hyperref}
\usepackage{url}
\usepackage{booktabs}
\usepackage{xcolor}
\usepackage{enumitem}
\usepackage{algorithm}
\usepackage{algorithmic}
\usepackage{graphicx}
\usepackage{array}
\usepackage{multirow}
\usepackage{longtable}
\usepackage{caption}
\usepackage{bm}
\usepackage{makecell}
\usepackage{multirow}
\usepackage{rotating}

\usepackage{subcaption}

\theoremstyle{plain}
\newtheorem{theorem}{Theorem}[section]
\newtheorem{proposition}[theorem]{Proposition}
\newtheorem{lemma}[theorem]{Lemma}
\newtheorem{corollary}[theorem]{Corollary}
\theoremstyle{definition}
\newtheorem{definition}[theorem]{Definition}

\theoremstyle{remark}
\newtheorem{remark}[theorem]{Remark}

\newcommand{\HEDGE}{\textsc{Hedge}}
\newcommand{\R}{\mathbb{R}}
\newcommand{\E}{\mathbb{E}}
\newcommand{\1}{\mathbf{1}}

\newcommand{\diag}{\mathrm{diag}}

\newcommand{\vecop}{\mathrm{vec}}
\newcommand{\matop}{\mathrm{mat}}
\newcommand{\offdiag}{\mathrm{offdiag}}

\newcommand{\AH}{\mathcal A_H}
\newcommand{\Law}{\operatorname{Law}}

\newcommand{\N}{\mathcal{N}}

\title{Hypergraph Generation via Structured Stochastic Diffusion}

\author{%
  Christopher Nemeth \\
  School of Mathematical Sciences \\
  Lancaster University\\
  \texttt{c.nemeth@lancaster.ac.uk}
}

\begin{document}
\maketitle

\begin{abstract}
Hypergraphs model higher-order interactions, but realistic hypergraph generation remains difficult because incidence, hyperedge-size heterogeneity, and overlap structure are not faithfully captured by pairwise reductions. We propose \HEDGE, a generative model defined directly on relaxed incidence matrices via a structured stochastic diffusion. The forward process combines a hypergraph-specific two-sided heat operator with an Ornstein--Uhlenbeck component, preserving structure-aware noising near the data while yielding an explicit Gaussian terminal law. Conditional on an observed hypergraph, this forward process is linear-Gaussian, so conditional means, covariances, scores, and reverse-drift targets are available in closed form. We therefore learn a permutation-equivariant state-only reverse-drift field in incidence space by regressing onto exact conditional targets, and generate samples by simulating a learned reverse-time SDE from the Gaussian base law. We establish exactness in the ideal state-only setting together with finite-horizon stability guarantees, and empirically show improved hypergraph generation quality relative to strong baselines.
\end{abstract}

\section{Introduction}
\label{sec:introduction}

Hypergraphs extend ordinary graphs by allowing a single relation to involve any number of nodes \citep{battiston2020networks}. This makes them a natural model for systems whose interactions are inherently group-based rather than pairwise, such as coauthorship networks \citep{newman2001structure}, group-based communication and social interaction \citep{benson2018simplicial}, item co-purchase data \citep{turnbull2024latent}, and biological interaction systems \citep{klamt2009hypergraphs}. In such settings, the object of interest is not merely whether two nodes are connected, but which subsets of nodes participate together in a shared interaction. This structure is represented explicitly through the \emph{node--hyperedge incidence matrix}; Figure~\ref{fig:hypergraph_and_incidence_matrix} gives a simple actor--movie example.

This additional expressivity makes hypergraph generation substantially more challenging. A graph generator models pairwise adjacency \citep{vignac2023digress,stephenson2026g3,yimingdefog,martinkus2022spectre}, whereas a hypergraph generator must capture node participation across hyperedges, the distribution of hyperedge sizes, and the overlap structure induced by shared node memberships. These features encode the higher-order organisation of the system and are often distorted or lost under pairwise reductions such as clique expansions \citep{agarwal2006higher,zhou2006learning}. Consequently, realistic hypergraph generation requires models that operate in a representation where these structural properties are native rather than reconstructed post-hoc.

Our starting point is that hypergraph incidence matrices admit a natural two-sided smoothing mechanism: one operator acts across nodes with similar hyperedge participation patterns, and another acts across overlapping hyperedges. Together, these define a hypergraph-aware heat flow on relaxed incidence matrices, giving an inductive bias that is intrinsic to the observed hypergraph. However, pure heat flow contracts towards a hypergraph-dependent low-complexity endpoint rather than a universal non-degenerate base law. To overcome this, we introduce Hyper Edge Diffusion and GEneration (\HEDGE), a \emph{structured stochastic diffusion} in incidence space that follows the two-sided heat operator near the data and gradually transitions to an Ornstein--Uhlenbeck (OU) regime at later times. This \emph{heat--OU process} preserves structure-aware noising while yielding an explicit Gaussian terminal law. We show that the forward process is linear-Gaussian conditional on the observed hypergraph, so its conditional means, covariances, and reverse-drift targets are all computable. We therefore learn an $S_n \times S_m$-equivariant reverse-drift model directly on relaxed incidence matrices and generate samples by integrating a learned reverse-time SDE.

 \paragraph{Our contributions.}
(i) We introduce \HEDGE, a structured stochastic diffusion for hypergraph generation in incidence space that combines hypergraph-specific two-sided heat smoothing with an Ornstein--Uhlenbeck regime, yielding an explicit Gaussian terminal law. (ii) We formulate reverse-time learning as regression onto exact conditional reverse-drift targets, and show that the $L^2$-optimal state-only predictor is the posterior average of these conditional reverse drifts; when the marginal reverse dynamics admit a state-only representation, this coincides with the marginal reverse drift. (iii) We prove finite-horizon stability guarantees for reverse generation in Wasserstein distance, clarifying how generation error depends on reverse-drift approximation and numerical discretisation. (iv) We identify the natural $S_n\times S_m$ symmetry class of the reverse process, and show that the $L^2$-optimal state-only target is itself equivariant, motivating an $S_n\times S_m$-equivariant neural parameterisation. (v) We empirically show improved hypergraph generation quality over strong statistical baselines on real-data benchmarks.

\begin{figure}[t]
    \centering
    \begin{minipage}[c]{0.34\linewidth}
        \centering
        \includegraphics[width=\linewidth]{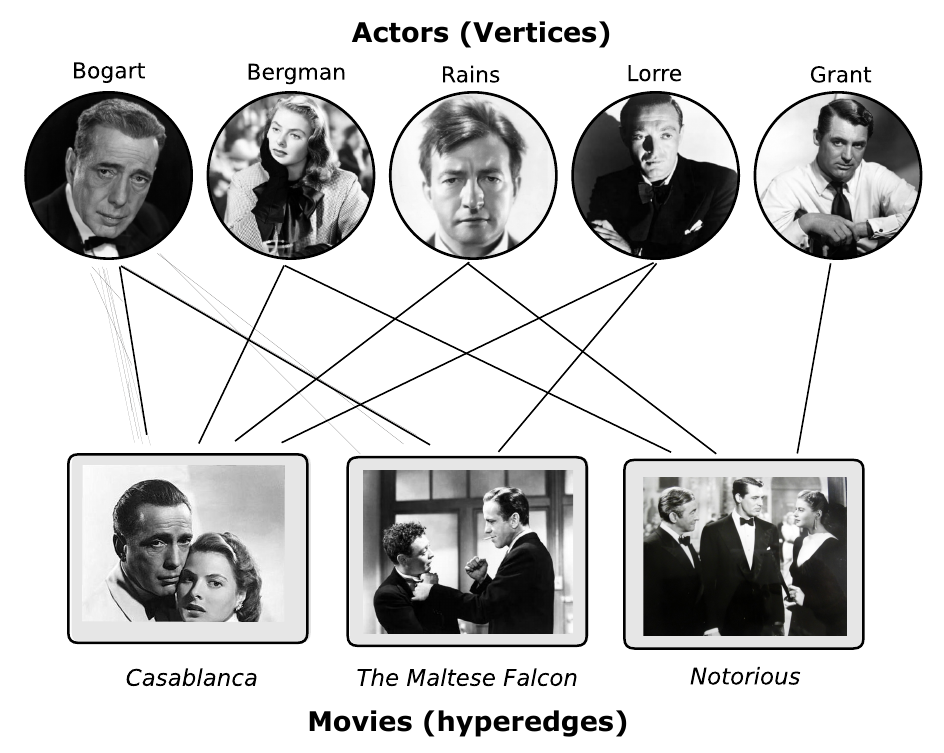}
    \end{minipage}
    \begin{minipage}[c]{0.65\linewidth}
        \centering
    \includegraphics[width=\linewidth]{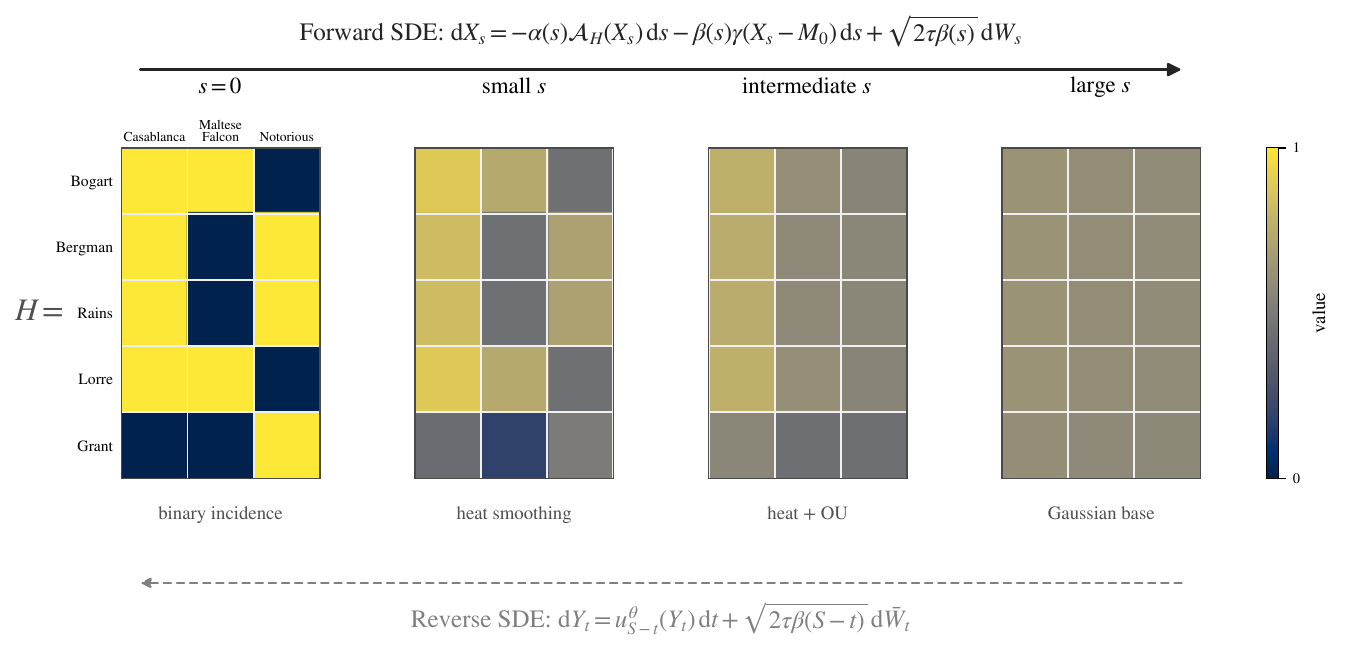}
    \end{minipage}
    \caption{Left: An actor--movie hypergraph. Right: $H$ is the incidence matrix representation of the hypergraph, where rows correspond to actors, columns to movies, and $H_{ij}=1$ indicates that actor $i$ appears in movie $j$. The forward SDE removes the hypergraph structure and converges to a Gaussian terminal distribution. The reverse SDE reconstructs the hypergraph from Gaussian initialisation.}
\label{fig:hypergraph_and_incidence_matrix}
\end{figure}

\section{Preliminaries}
\label{sec:preliminaries}

\paragraph{Hypergraphs and incidence matrices.}
A hypergraph is represented by the pair $\mathcal H=(V,E)$, where $V=\{1,\dots,n\}$ is a finite node set and $E=\{e_1,\dots,e_m\}$ is a multiset of non-empty hyperedges with $e_j \subseteq V$. Throughout, we consider unweighted undirected hypergraphs and assume no isolated nodes and no empty hyperedges. The incidence matrix of $\mathcal H$ is the binary matrix
\[
H \in \{0,1\}^{n\times m},
\qquad
H_{v,e}=1 \iff v\in e.
\]
Rows of $H$ record node participation across hyperedges, while columns record the members of each hyperedge. Two basic marginals derived from $H$ are the node-degree and hyperedge-size vectors,
\begin{equation}
d_V := H\1_m \in \mathbb N^n,
\qquad
d_E := H^\top \1_n \in \mathbb N^m,
\label{eq:degree_size}
\end{equation}
where $d_V(v)$ is the number of hyperedges incident to node $v$, and $d_E(e)$ is the size of hyperedge $e$. We embed binary incidence matrices in a continuous state space by considering relaxed incidence states $X \in \R^{n\times m}$. This lets us define stochastic processes and differential operators directly in incidence space, which will allow for continuous-space reverse-time hypergraph generation.

\paragraph{Node-side and hyperedge-side operators.}
We view $X \in \R^{n\times m}$ as an element of a Euclidean space equipped with the Frobenius inner product $\langle X,Y\rangle_F := \mathrm{tr}(X^\top Y)$. The incidence matrix has two natural domains: rows indexed by nodes and columns indexed by hyperedges. Since our model evolves relaxed incidence matrices directly, we introduce one operator on each side. On the node side, we use the normalised hypergraph Laplacian of \citet{zhou2006learning,chan2018spectral},
\begin{equation}
L_V(H)
=
I_n - D_V^{-1/2} H D_E^{-1} H^\top D_V^{-1/2},
\qquad
D_V := \diag(d_V),\;
D_E := \diag(d_E).
\label{eq:LV}
\end{equation}
This operator \emph{smooths node-level signals} across nodes with similar hyperedge patterns. Whenever inverse degree matrices appear, inversion is understood entrywise on strictly positive diagonal entries, with zero assigned to zero diagonal entries. Under the no-isolated-node and no-empty-hyperedge convention above, all diagonal entries of $D_V$ and $D_E$ are strictly positive; isolated nodes, if they occurred, would contribute an $e_v e_v^\top$ block to $L_V$, preserving symmetric positive semidefiniteness. 

On the hyperedge side, we define the size-normalised overlap matrix
\begin{equation}
A_E(H)
:=
\offdiag\!\left(D_E^{-1/2} H^\top H D_E^{-1/2}\right),
\label{eq:AE}
\end{equation}
where $\offdiag(\cdot)$ sets diagonal entries to zero. Here, $H^\top H$ counts shared-node overlaps between hyperedges, while the normalisation by $D_E$ removes the most direct size effect. Writing $D_{\mathrm{ov}} := \diag(A_E(H)\1_m)$ for the diagonal of overlap-weighted degrees, we then define the \emph{normalised hyperedge-overlap Laplacian}
\begin{equation}
L_E(H)
:=
I_m - D_{\mathrm{ov}}^{-1/2} A_E(H) D_{\mathrm{ov}}^{-1/2}.
\label{eq:LE}
\end{equation}
This is a direct analogue of the symmetric normalised graph Laplacian \citep{chung1997spectral} applied to the weighted overlap graph on hyperedges induced by shared node memberships, closely related to the pairwise reductions used in the spectral analysis of hypergraphs \citep{agarwal2006higher,chan2018spectral} and to the overlap-based operators in \citet{macgregor2021finding}. To our knowledge, the size-normalised overlap construction in \eqref{eq:AE}--\eqref{eq:LE} is specific to our incidence-space framework; its positive semidefiniteness is established in Lemma~\ref{lem:appendix_LE_psd}. The operator $L_E(H)$ \emph{smooths signals across hyperedges} with similar node-membership patterns. Together, $L_V(H)$ and $L_E(H)$ provide the row-side and column-side geometry of the incidence representation.

\section{\HEDGE: Structured Stochastic Diffusion in Incidence Space}
\label{sec:method}

\HEDGE\ is a generative model defined directly on relaxed incidence matrices. The central idea is to combine a hypergraph-specific two-sided heat operator, which encodes incidence-space geometry, with a universal Ornstein--Uhlenbeck mechanism that yields a tractable Gaussian terminal law. This produces a forward diffusion that is hypergraph-adapted near the data, while still admitting well-defined unconditional generation through a learned state-only reverse-drift field (see Figure \ref{fig:hypergraph_and_incidence_matrix} for an illustration). Full proofs for the results in this section are given in Appendix~\ref{app:theory_proofs}.

\subsection{Structured heat--OU forward diffusion}
\label{subsec:method_forward}

\paragraph{Relaxed incidence matrices.}
We work with real-valued matrices $X\in\R^{n\times m}$ as continuous surrogates for binary incidence matrices $H\in\{0,1\}^{n\times m}$. We refer to such $X$ as \emph{relaxed incidence matrices}. We use $s\in[0,S]$ to denote forward noising time, where $s=0$ corresponds to the observed hypergraph and larger $s$ corresponds to progressively more corrupted states; the forward state at time $s$ is denoted $X_s$. For reverse-time generation we use $t\in[0,S]$, where $t=0$ denotes the initial draw from the base law and $t=S$ the final generated sample; the reverse state at time $t$ is denoted $Y_t$. Along the reverse trajectory, the forward and reverse times are related by $s=S-t$. Given an observed hypergraph with incidence matrix $H$, we define the two-sided linear operator
\begin{equation}
\AH(X) := L_V(H)X + XL_E(H),
\qquad
X \in \R^{n\times m}.
\label{eq:AH}
\end{equation}
This operator smooths the relaxed incidence matrices simultaneously across nodes and across overlapping hyperedges. In particular, if $\frac{\mathrm{d}}{\mathrm{d}s}X_s = -\AH(X_s)$ with $X_0=H$, then $X_s = e^{-sL_V(H)}\,H\,e^{-sL_E(H)}$: the node-side and hyperedge-side heat kernels act by left and right multiplication, respectively, inducing a natural hypergraph-aware noising mechanism in incidence space.

\begin{proposition}[Dissipativity and spectral decoupling of the pure heat operator]
\label{prop:heat_operator_main}
The operator $\AH$ is self-adjoint and positive semidefinite with respect to the Frobenius inner product. Consequently, the pure heat flow $\frac{\mathrm{d}}{\mathrm{d}s}Z_s=-\AH(Z_s)$ is dissipative, $\frac{\mathrm{d}}{\mathrm{d}s}\|Z_s\|_F^2 = -2\langle Z_s,\AH(Z_s)\rangle_F\le 0$. Moreover, if $L_V(H)=U\Lambda U^\top$ and $L_E(H)=VMV^\top$ with eigenvalues $\{\lambda_i\}$ and $\{\mu_j\}$, then the transformed state $\widetilde Z_s:=U^\top Z_s V$ satisfies $\frac{\mathrm{d}}{\mathrm{d}s}\widetilde Z_s(i,j) = -(\lambda_i+\mu_j)\widetilde Z_s(i,j)$, so each joint node--hyperedge mode decays at rate $\lambda_i+\mu_j$.
\end{proposition}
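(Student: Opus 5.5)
The plan is to reduce everything to two facts about the factors: $L_V(H)$ and $L_E(H)$ are each symmetric positive semidefinite. For $L_V(H)$, symmetry is immediate from \eqref{eq:LV}, since $D_V^{-1/2}HD_E^{-1}H^\top D_V^{-1/2}=BB^\top$ with $B:=D_V^{-1/2}HD_E^{-1/2}$.

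For positive semidefiniteness of $L_V(H)$, I would show $\|B\|_{\mathrm{op}}\le 1$. Write $x^\top BB^\top x=\sum_e d_E(e)^{-1}\bigl(\sum_{v\in e}x_v/\sqrt{d_V(v)}\bigr)^2$ and apply Cauchy--Schwarz inside each hyperedge. This gives $\le\sum_e\sum_{v\in e}x_v^2/d_V(v)=\|x\|^2$, hence $L_V=I-BB^\top\succeq0$.

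For $L_E(H)$, I would invoke Lemma~\ref{lem:appendix_LE_psd}. The idea behind it is standard anyway: $A_E(H)$ is symmetric with nonnegative entries. The Dirichlet form $\tfrac12\sum_{e,f}(A_E)_{ef}\bigl(y_e/\sqrt{d_{\mathrm{ov}}(e)}-y_f/\sqrt{d_{\mathrm{ov}}(f)}\bigr)^2$ then equals $y^\top L_E y$, with the usual convention for zero overlap degree (those rows contribute $y_e^2\ge0$). This is the one place needing care, and I regard it as the main obstacle only because of the degenerate-degree convention.

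Next comes self-adjointness of $\AH$ in the Frobenius inner product. By cyclicity of trace, $\langle Y,L_VX\rangle_F=\mathrm{tr}(Y^\top L_VX)=\mathrm{tr}((L_VY)^\top X)$ using $L_V^\top=L_V$. Similarly $\langle Y,XL_E\rangle_F=\mathrm{tr}(Y^\top XL_E)=\mathrm{tr}(L_EY^\top X)=\langle YL_E,X\rangle_F$.

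For positivity, $\langle X,L_VX\rangle_F=\sum_j x_{:,j}^\top L_V x_{:,j}\ge0$, summing over columns, and $\langle X,XL_E\rangle_F=\sum_i x_{i,:}L_E x_{i,:}^\top\ge0$, summing over rows.

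Dissipativity follows by differentiating. We have $\tfrac{d}{ds}\|Z_s\|_F^2=2\langle Z_s,\dot Z_s\rangle_F=-2\langle Z_s,\AH(Z_s)\rangle_F\le0$.

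For spectral decoupling, take the orthogonal eigendecompositions guaranteed by symmetry and set $\widetilde Z_s=U^\top Z_sV$. Then
\[
\tfrac{d}{ds}\widetilde Z_s=-U^\top(U\Lambda U^\top Z_s+Z_sVMV^\top)V=-(\Lambda\widetilde Z_s+\widetilde Z_sM),
\]
using $U^\top U=I$ and $V^\top V=I$. Since $\Lambda$ and $M$ are diagonal, entry $(i,j)$ reads $-(\lambda_i+\mu_j)\widetilde Z_s(i,j)$. Each mode therefore decays as $e^{-(\lambda_i+\mu_j)s}$, with $\lambda_i+\mu_j\ge0$ by the positive semidefiniteness shown above.
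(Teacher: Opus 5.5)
Your proposal is correct and follows the paper's proof essentially step for step: you establish that both factors are symmetric PSD (with $L_E$ handled via Lemma~\ref{lem:appendix_LE_psd}), use the trace computation for self-adjointness and positivity, differentiate $\|Z_s\|_F^2$, and conjugate by $U^\top(\cdot)V$ for the modal decoupling. The one addition is your Cauchy--Schwarz bound $\|D_V^{-1/2}HD_E^{-1/2}\|_{\mathrm{op}}\le 1$, which gives an explicit justification for $L_V(H)\succeq 0$ where the paper only says this holds by construction.
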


Proposition~\ref{prop:heat_operator_main} shows that the two-sided heat operator
defines a well-behaved noising mechanism in incidence space: it is dissipative in
the Frobenius geometry and admits a joint node--hyperedge spectral decomposition.
As a result, the heat flow suppresses higher-frequency incidence modes first,
while preserving lower-frequency structure for longer. This gives a natural
hypergraph-aware inductive bias, since corruption is guided by the geometry
induced by $L_V(H)$ and $L_E(H)$ rather than by isotropic noise alone.
Proposition~\ref{prop:heat_operator_main} requires only that $L_V(H)$ and
$L_E(H)$ be symmetric positive semidefinite, both of which hold under our
definitions (see Appendix~\ref{app-sec:heat-operator}).

The limitation of pure heat flow is that it contracts toward a
hypergraph-dependent low-complexity limit rather than a universal
non-degenerate base law. We therefore embed this heat operator \eqref{eq:AH} within a scheduled stochastic diffusion that gradually transitions to an OU regime.


\paragraph{Forward noising schedule.}
Let $S>0$ be a terminal time horizon, let $M_0 \in \R^{n\times m}$ be a fixed mean matrix independent of the test-time hypergraph, and let $\gamma>0$ and $\tau>0$ control the late-time OU dynamics. In practice, $M_0$ specifies the centre of the Gaussian base law used for reverse-time generation. The simplest choice is $M_0=0$, but  data-adapted choices, such as matching the average incidence density, are also possible. Crucially, $M_0$ is fixed globally and does not depend on the unknown test-time hypergraph. We introduce continuous scheduling functions
\begin{equation}
\alpha,\beta : [0,S]\to[0,1],
\qquad
\alpha(s)+\beta(s)=1,
\qquad
\alpha(0)=1,\ \beta(0)=0,
\label{eq:schedule_partition}
\end{equation}
with $\alpha$ decreasing and $\beta$ increasing over time, and with $\beta$ strictly positive on $(0,S]$. The forward process on relaxed incidence matrices is then defined by the matrix-valued SDE
\begin{equation}
\mathrm{d}X_s
=
-\alpha(s)\AH(X_s)\,\mathrm{d}s
-\beta(s)\gamma\bigl(X_s-M_0\bigr)\,\mathrm{d}s
+
\sqrt{2\tau\beta(s)}\,\mathrm{d}W_s,
\qquad
X_0=H,
\label{eq:forward_sde}
\end{equation}
where $W_s$ is matrix Brownian motion with independent standard Brownian entries. At early times, $\alpha(s)\approx 1$ and $\beta(s)\approx 0$, so the dynamics are dominated by the hypergraph-specific two-sided heat operator. As $s \to S$, the heat contribution is gradually suppressed and the OU component becomes dominant. At late times, the dynamics approach the matrix-valued OU process $\mathrm{d}X_s = -\gamma(X_s-M_0)\mathrm{d}s + \sqrt{2\tau}\mathrm{d}W_s$, whose terminal law is
\begin{equation}
X_\infty \sim \N\bigl(M_0,\,\tfrac{\tau}{\gamma}I_{nm}\bigr).
\label{eq:ou_invariant}
\end{equation}
Hence the process preserves hypergraph-specific structure-aware noising near the data while ensuring an explicit, non-degenerate terminal law independent of the unknown test-time hypergraph. This Gaussian law is then used to initialise reverse-time generation, i.e.\ $p(Y_0)=p(X_\infty)$.

\subsection{Conditional Gaussian law and the forward process}
\label{subsec:method_conditional_gaussian}

A key advantage of this forward process is that, conditional on the observed hypergraph, it remains a linear-Gaussian diffusion. This yields an exact perturbation law at every time and hence allows us to train against exact reverse-drift targets rather than Monte Carlo approximations. Vectorising $x_s := \vecop(X_s)\in\R^{nm}$ and using $\vecop(AXB)=(B^\top\otimes A)\vecop(X)$, the SDE \eqref{eq:forward_sde} becomes
\begin{equation}
\mathrm d x_s
=
-B_s(H)\,x_s\,\mathrm ds
+\beta(s)\gamma\,\vecop(M_0)\,\mathrm ds
+\sqrt{2\tau\beta(s)}\,\mathrm d w_s,
\label{eq:forward_vec_main}
\end{equation}
with $B_s(H):=\alpha(s)\bigl(I_m\otimes L_V(H)+L_E(H)\otimes I_n\bigr)+\beta(s)\gamma I_{nm},$ and where we have used symmetry of $L_E$ to drop the transpose. This is a linear time-inhomogeneous SDE with additive Gaussian noise, so the conditional law of $x_s$ is Gaussian for every $s\in[0,S]$ \citep{oksendal2003stochastic,sarkka2019applied}.

\begin{proposition}[Conditional Gaussian law of the forward process]
\label{prop:conditional_gaussian_main}
For each $s\in[0,S]$, the conditional law of the forward state satisfies $\vecop(X_s)\mid H \sim \mathcal N(m_s(H),\,C_s(H))$, where
\begin{align}
\frac{\mathrm d}{\mathrm ds}m_s(H)
&=
-B_s(H)\,m_s(H)+\beta(s)\gamma\,\vecop(M_0),
&\quad
m_0(H)&=\vecop(H),
\label{eq:mean_ode_main}
\\
\frac{\mathrm d}{\mathrm ds}C_s(H)
&=
-B_s(H)C_s(H)-C_s(H)B_s(H)^\top+2\tau\beta(s)\,I_{nm},
&\quad
C_0(H)&=0.
\label{eq:cov_ode_main}
\end{align}
Under $\beta>0$ on $(0,S]$, $C_s(H)$ is positive definite (hence invertible) for all $s\in(0,S]$.
\end{proposition}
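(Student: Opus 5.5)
The plan is to treat \eqref{eq:forward_vec_main} as a linear time-inhomogeneous SDE with additive noise and deterministic initial condition $x_0=\vecop(H)$, solve it explicitly by variation of constants, and then read off the mean, the covariance, and positive definiteness from that representation.

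\textbf{Explicit solution.} Let $\Phi(s,r)$ denote the state-transition matrix of $\frac{\mathrm d}{\mathrm ds}y=-B_s(H)y$, so that $\partial_s\Phi(s,r)=-B_s(H)\Phi(s,r)$ and $\Phi(r,r)=I_{nm}$. Since $\alpha,\beta$ are continuous, $B_s(H)$ is continuous in $s$ and $\Phi$ exists, is unique, and is invertible. Applying It\^o's formula to $\Phi(s,0)^{-1}x_s$ gives
\[
x_s=\Phi(s,0)\vecop(H)+\int_0^s\Phi(s,r)\beta(r)\gamma\,\vecop(M_0)\,\mathrm dr+\int_0^s\Phi(s,r)\sqrt{2\tau\beta(r)}\,\mathrm dw_r .
\]
The first two terms are deterministic. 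The last term is a Wiener integral of a deterministic square-integrable integrand, so it is centred Gaussian. Hence $x_s\mid H$ is Gaussian.

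\textbf{Mean and covariance.} The mean is $m_s=\Phi(s,0)\vecop(H)+\int_0^s\Phi(s,r)\beta(r)\gamma\,\vecop(M_0)\,\mathrm dr$. Differentiating with the Leibniz rule, and using $\Phi(s,s)=I$, gives \eqref{eq:mean_ode_main} with $m_0=\vecop(H)$. By the It\^o isometry the covariance is
\[
C_s=\int_0^s 2\tau\beta(r)\,\Phi(s,r)\Phi(s,r)^\top\,\mathrm dr .
\]
Differentiating gives $2\tau\beta(s)I-B_sC_s-C_sB_s^\top$ with $C_0=0$, which is \eqref{eq:cov_ode_main}. An alternative route applies It\^o's formula to $x_sx_s^\top$ and takes expectations. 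Since $B_s(H)$ is symmetric (Kronecker sums of the symmetric $L_V,L_E$ plus a multiple of the identity), $B_s^\top=B_s$, but this is not needed for the derivation.

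\textbf{Positive definiteness.} This is the only step needing care; the rest is standard linear SDE theory. Fix $s\in(0,S]$ and $v\neq0$. Then
\[
v^\top C_s v=\int_0^s2\tau\beta(r)\,\|\Phi(s,r)^\top v\|^2\,\mathrm dr .
\]
The integrand is continuous and nonnegative. At $r=s$ it equals $2\tau\beta(s)\|v\|^2>0$, because $\beta(s)>0$ and $\Phi(s,s)=I$. By continuity it stays positive on a neighbourhood $(s-\epsilon,s]$, so the integral is strictly positive. More generally, the invertibility of $\Phi(s,r)$ together with $\beta>0$ on $(0,s]$ makes the integrand positive on all of $(0,s]$. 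Therefore $C_s(H)\succ0$, and hence $C_s(H)$ is invertible, for every $s\in(0,S]$.
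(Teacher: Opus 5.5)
Your proposal is correct and takes essentially the same route as the paper. Both arguments use variation of constants via the transition matrix $\Phi$ of $-B_s(H)$, the Gaussianity of a Wiener integral with deterministic integrand, the mean and covariance ODEs obtained by differentiating the explicit formulas, and positive definiteness from $C_s(H)=2\tau\int_0^s\beta(r)\Phi(s,r)\Phi(s,r)^\top\,\mathrm dr$; your continuity argument at $r=s$ is just a more explicit version of the paper's ``positive definite on a set of positive measure'' step.
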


Given a training hypergraph $H$ and forward time $s$, one may solve \eqref{eq:mean_ode_main}--\eqref{eq:cov_ode_main} and sample $X_s$ from its conditional Gaussian law (see Appendix~\ref{app-sec:conditional-gaussian} for details). Moreover, for any $s\in(0,S]$, the covariance $C_s(H)$ is invertible by Proposition~\ref{prop:conditional_gaussian_main}, and the exact conditional score is
\begin{equation}
r^\star_{s\mid H}(X)
=
-\matop\bigl(C_s(H)^{-1}(\vecop(X)-m_s(H))\bigr),
\label{eq:conditional_score_main}
\end{equation}
which enters directly into the exact conditional reverse drift. This exact conditional reverse drift is only available when the originating hypergraph $H$ is known; since $H$ is unknown at generation time, exact conditional supervision does not remove the need to train a state-only reverse field surrogate, but it does provide noiseless training targets for that field.

\paragraph{Efficient computation.}
A naive evaluation of \eqref{eq:mean_ode_main}--\eqref{eq:cov_ode_main} requires working with $nm\times nm$ matrices. Proposition~\ref{prop:heat_operator_main} shows that $B_s(H)$ is diagonalisable in the Kronecker basis $U\otimes V$ formed from the eigendecompositions of $L_V(H)$ and $L_E(H)$. In this basis, the mean and covariance ODEs decouple into $nm$ independent scalar ODEs indexed by eigenmodes $(i,j)$, so \eqref{eq:mean_ode_main}--\eqref{eq:cov_ode_main} can be solved in $O(nm)$ per step after a one-time $O(n^3+m^3)$ eigendecomposition. 

\subsection{Learning the $L^2$-optimal state-only reverse drift}
\label{subsec:method_reverse}

The forward process \eqref{eq:forward_sde} is defined conditionally on a training hypergraph $H$, so for fixed $H$, the reverse-time dynamics are available in closed form. However, these dynamics depend explicitly on $H$, which is not available at generation time. The central modelling problem is therefore to construct a reverse-time drift field that can be evaluated without access to the original hypergraph. 

\paragraph{Conditional reverse drift.}
Let $p_{s\mid H}(X)$ denote the conditional density of the forward
diffusion at time $s$. Writing
\begin{equation}
b_{s\mid H}(X) := -\alpha(s)\AH(X) - \beta(s)\gamma\bigl(X-M_0\bigr)
\label{eq:conditional_forward_drift}
\end{equation}
for the conditional forward drift in \eqref{eq:forward_sde}, the forward SDE then takes the form
\[
\mathrm d X_s = b_{s\mid H}(X_s)\,\mathrm ds + \sqrt{2\tau\beta(s)}\,\mathrm d W_s,
\qquad X_0=H,
\]
with state-independent diffusion coefficient. Under standard regularity
conditions on $\{p_{s\mid H}\}_{s\in(0,S]}$, the classical time-reversal
formula for diffusions with additive noise
\citep{anderson1982reverse,haussmann1986time} then gives the conditional
reverse-time drift
\begin{equation}
u^\star_{s\mid H}(X)
= -b_{s\mid H}(X) + 2\tau\beta(s)\,r^\star_{s\mid H}(X)
= \alpha(s)\AH(X)+\beta(s)\gamma(X-M_0)+2\tau\beta(s)\,r^\star_{s\mid H}(X),
\label{eq:conditional_reverse_drift_final}
\end{equation}
where $r^\star_{s\mid H}(X)=\nabla_X \log p_{s\mid H}(X)$ is the conditional
score. Since the forward process is conditionally linear-Gaussian,
$r^\star_{s\mid H}$ is available exactly from~\eqref{eq:conditional_score_main},
and hence $u^\star_{s\mid H}$ can be evaluated without approximation during
training.

\paragraph{Reverse-drift regression objective.}
We learn a time-dependent neural field $u^\theta_s:\R^{n\times m}\to\R^{n\times m}$ by regressing directly onto the conditional reverse-drift targets $u^\star_{s\mid H}(X_s)$. The population objective is
\begin{equation}
\mathcal L(\theta)
=
\E_{H\sim p_{\mathrm{data}},\, s\sim \rho,\,
X_s\sim p_{s\mid H}}
\Bigl[
\|u^\theta_s(X_s)-u^\star_{s\mid H}(X_s)\|_F^2
\Bigr].
\label{eq:reverse_drift_loss}
\end{equation}

\begin{proposition}[$L^2$-optimal state-only reverse-drift target]
\label{prop:bayes_reverse_drift_main}
For each fixed $s$, the population minimiser of \eqref{eq:reverse_drift_loss} is $u_s^{\mathrm{L^2}}(X) = \mathbb E\!\left[u^\star_{s\mid H}(X)\mid X_s=X\right]$. Moreover, this predictor coincides with the marginal reverse drift of the forward process: the marginal density path $\{\hat p_s\}$ is generated by the Markovian
state-only drift $\bar b_s(X)=\mathbb E[b_{s\mid H}(X)\mid X_s=X]$, and its time-reversal yields a state-only drift equal to $u_s^{\mathrm{L^2}}$.
\end{proposition}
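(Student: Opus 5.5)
The argument has two parts: the standard $L^2$ projection argument, and a Fokker--Planck computation for the marginal path. For the first part I fix $s$ and condition on $X_s$. Write $\hat p_s(X)=\E_{H}[p_{s\mid H}(X)]$ for the marginal density and $\pi_s(H\mid X)\propto p_{\mathrm{data}}(H)p_{s\mid H}(X)$ for the posterior. The posterior is well defined because $C_s(H)$ is positive definite for $s>0$ (Proposition~\ref{prop:conditional_gaussian_main}), so every $p_{s\mid H}$ is a strictly positive Gaussian density. Since $u^\theta_s$ depends only on $X_s$, the tower property gives the pointwise decomposition
\[
\E\bigl[\|u^\theta_s(X_s)-u^\star_{s\mid H}(X_s)\|_F^2\mid X_s\bigr]=\|u^\theta_s(X_s)-u_s^{\mathrm{L^2}}(X_s)\|_F^2+\E\bigl[\|u^\star_{s\mid H}(X_s)-u_s^{\mathrm{L^2}}(X_s)\|_F^2\mid X_s\bigr].
\]
The cross term vanishes by the definition of conditional expectation. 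Square integrability holds because $u^\star_{s\mid H}$ is affine in $X$ with Gaussian $X$, and the data law is supported on finitely many binary $H$ of fixed size. Minimising over measurable state-only fields therefore gives $u_s^{\mathrm{L^2}}$, $\rho$-a.e. in $s$.

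For the second part I identify the marginal dynamics. Each $p_{s\mid H}$ solves the Fokker--Planck equation
\[
\partial_s p_{s\mid H}=-\nabla\cdot(b_{s\mid H}p_{s\mid H})+\tau\beta(s)\Delta p_{s\mid H}.
\]
The right-hand side is linear in $p_{s\mid H}$, and the diffusion coefficient does not depend on $H$. I then average over $H\sim p_{\mathrm{data}}$, which is a finite sum, so exchanging sums and derivatives is trivial. This gives
\[
\partial_s\hat p_s=-\nabla\cdot(\bar b_s\hat p_s)+\tau\beta(s)\Delta\hat p_s,
\]
where $\bar b_s(X)=\sum_H p_{\mathrm{data}}(H)p_{s\mid H}(X)b_{s\mid H}(X)/\hat p_s(X)=\E[b_{s\mid H}(X)\mid X_s=X]$. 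This is the mimicking (Gyöngy-type) argument: $\hat p_s$ is the law of the Markov diffusion $\mathrm dX_s=\bar b_s(X_s)\mathrm ds+\sqrt{2\tau\beta(s)}\mathrm dW_s$. The drift $\bar b_s$ is a posterior-weighted average of affine maps with smooth Gaussian weights, hence locally Lipschitz. It has linear growth because the $b_{s\mid H}$ are affine with uniformly bounded coefficients over the finite family. Strong solutions therefore exist, and uniqueness of the Fokker--Planck equation in this class identifies the laws.

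Next I apply the Anderson/Haussmann--Pardoux time-reversal formula to this marginal diffusion, under the same regularity assumptions the paper invokes for \eqref{eq:conditional_reverse_drift_final}. It gives the reverse drift $-\bar b_s+2\tau\beta(s)\nabla\log\hat p_s$. The key identity is
\[
\nabla\log\hat p_s(X)=\frac{\sum_H p_{\mathrm{data}}(H)\nabla p_{s\mid H}(X)}{\hat p_s(X)}=\E[r^\star_{s\mid H}(X)\mid X_s=X],
\]
using $\nabla p_{s\mid H}=p_{s\mid H}\,r^\star_{s\mid H}$. Combining this with \eqref{eq:conditional_reverse_drift_final} and linearity of conditional expectation,
\[
-\bar b_s+2\tau\beta\nabla\log\hat p_s=\E[-b_{s\mid H}+2\tau\beta r^\star_{s\mid H}\mid X_s=X]=u_s^{\mathrm{L^2}}(X).
\]

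The main obstacle is justifying the reversal step rather than the algebra. Specifically, one must check that $\hat p_s$ satisfies the Haussmann--Pardoux conditions, namely a finite-entropy / $\int_{t_0}^S\E\|\nabla\log\hat p_s\|^2\,\mathrm ds<\infty$ type condition on $(0,S]$. I would handle this by noting that $\hat p_s$ is a finite Gaussian mixture with covariances bounded below uniformly on $[t_0,S]$ for any $t_0>0$. This makes the score affine-growth and square-integrable there, so the reversal is valid on each $[t_0,S]$ and the result extends to $(0,S]$. Near $s=0$, where $C_s\to0$, I would state the result only for $s\in(0,S]$, consistent with the paper's standing regularity assumption.
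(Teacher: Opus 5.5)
Your proposal is correct and follows essentially the same route as the paper: the tower-property/$L^2$-projection argument gives the minimiser, averaging the conditional Fokker--Planck equations over the (finitely supported) data law gives $\bar b_s$, and the Anderson/Haussmann--Pardoux reversal applied to the marginal diffusion, combined with the mixture-score identity $\nabla\log\hat p_s(X)=\E[r^\star_{s\mid H}(X)\mid X_s=X]$, matches $u_s^{\mathrm{L^2}}$ term by term. You also check well-posedness of the mimicking SDE, uniqueness of the Fokker--Planck solution, and validity of the reversal on $[t_0,S]$, where the covariances are uniformly nondegenerate; these regularity points the paper only assumes.
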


Proposition~\ref{prop:bayes_reverse_drift_main} shows that the learning target used by \HEDGE\ is the $L^2$-optimal state-only approximation to the exact hypergraph-conditioned reverse drift. Equivalently, this target is the reverse drift of the marginal forward process. Decomposing the target,
\[
u_s^{L^2}(X)
= \alpha(s)\sum_{i=1}^N \pi_s(i\mid X)\,\mathcal A_{H^{(i)}}(X)
+ \beta(s)\gamma(X-M_0) + 2\tau\beta(s)\,\nabla_X\log\hat p_s(X), 
\]
the OU and marginal-score terms (second and third) are state-only by construction, but the posterior-averaged structural term does not reduce to a closed-form function of $X$ alone, due to the dependence on $H$ — which is why a neural surrogate is required.


\paragraph{Learned reverse-time SDE.}
At generation time we initialise from the Gaussian base law \eqref{eq:ou_invariant} and simulate
\begin{equation}
\mathrm d Y_t
=
u^\theta_{S-t}(Y_t)\,\mathrm dt
+\sqrt{2\tau\beta(S-t)}\,\mathrm d\bar W_t,
\qquad
t\in[0,S].
\label{eq:learned_reverse_sde_final}
\end{equation}
In the idealised setting, if the learned reverse field matches the $L^2$-optimal
state-only target exactly and generation is initialised from the exact terminal
marginal, then reverse-time sampling recovers the data law exactly; see
Appendix~\ref{app-sec:reverse-time-drift}. In practice, the key questions are
how approximation error in the learned reverse field and numerical
discretisation affect the quality of generation.

\subsection{Stability and generation error}
\label{subsec:method_stability}

\begin{theorem}[Finite-horizon stability in $W_2$ via one-sided Lipschitz]
\label{thm:stability_reverse_main}
Let $Y_t^\star$ and $Y_t^\theta$ solve the ideal and learned reverse SDEs with the
same diffusion coefficient $\sqrt{2\tau\beta(S-t)}$, synchronously coupled by the
same Brownian motion. Suppose $u^\theta$ admits a measurable one-sided Lipschitz
constant $\kappa:[0,S]\to\R$, i.e.$\langle X-Y,\,u^\theta_s(X)-u^\theta_s(Y)\rangle_F \le \kappa_s\|X-Y\|_F^2 \quad \text{for all }X,Y,s.$

Define
\[
\Lambda(t):=\int_0^t (2\kappa_{S-r}+1)\mathrm dr, \quad \mathcal E_{\mathrm{rev}}(t)^2
:=
W_2(\nu_0^\theta,\nu_0^\star)^2
+
\int_0^t e^{-\Lambda(r)}
\mathbb E\!\left[
\|u^\theta_{S-r}(Y_r^\star)-u^\star_{S-r}(Y_r^\star)\|_F^2
\right]\mathrm dr.
\]
Then, for every $t\in[0,S]$,
\begin{equation}
W_2\bigl(\Law(Y_t^\theta),\Law(Y_t^\star)\bigr)
\le
e^{\Lambda(t)/2}\,\mathcal E_{\mathrm{rev}}(t).
\label{eq:stability_one_sided_w2_main}
\end{equation}
Taking $\kappa_s\equiv L$ recovers the Gr\"onwall prefactor $e^{(2L+1)t/2}$.
If $\kappa_s\le \kappa_\star<-1/2$ uniformly on a subinterval bounded away from $s=0$, then the corresponding contribution to $e^{\Lambda(S)/2}$ is contractive on that subinterval.
\end{theorem}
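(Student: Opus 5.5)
We will use a synchronous-coupling energy estimate followed by Gr\"onwall's inequality in integrating-factor form.

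\textbf{Coupling.} Fix a coupling $(Y_0^\theta,Y_0^\star)$ of $(\nu_0^\theta,\nu_0^\star)$ that is optimal for $W_2$. Drive both reverse SDEs with the same Brownian motion $\bar W$ and set $\Delta_t:=Y_t^\theta-Y_t^\star$. Because the diffusion coefficient $\sqrt{2\tau\beta(S-t)}$ is state-independent and identical for both processes, the noise cancels exactly. Hence $\Delta_t$ solves the random ODE
\[
\frac{\mathrm d}{\mathrm dt}\Delta_t=u^\theta_{S-t}(Y_t^\theta)-u^\star_{S-t}(Y_t^\star),
\]
so no It\^o correction appears. We then split the drift difference as
\[
\bigl[u^\theta_{S-t}(Y_t^\theta)-u^\theta_{S-t}(Y_t^\star)\bigr]+\bigl[u^\theta_{S-t}(Y_t^\star)-u^\star_{S-t}(Y_t^\star)\bigr].
\]

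\textbf{Energy inequality.} Differentiating $\|\Delta_t\|_F^2$ gives two terms. The first bracket, paired with $\Delta_t$, is bounded by $\kappa_{S-t}\|\Delta_t\|_F^2$ using the one-sided Lipschitz hypothesis. The second term is bounded by Young's inequality, $2\langle a,b\rangle\le\|a\|^2+\|b\|^2$. Together these give, pathwise,
\[
\frac{\mathrm d}{\mathrm dt}\|\Delta_t\|_F^2\le(2\kappa_{S-t}+1)\|\Delta_t\|_F^2+\|u^\theta_{S-t}(Y_t^\star)-u^\star_{S-t}(Y_t^\star)\|_F^2 .
\]
The ``$+1$'' in $\Lambda$ comes exactly from this Young step.

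\textbf{Integrating factor.} Multiplying by $e^{-\Lambda(t)}$ and integrating on $[0,t]$ gives
\[
e^{-\Lambda(t)}\|\Delta_t\|_F^2\le\|\Delta_0\|_F^2+\int_0^t e^{-\Lambda(r)}\|u^\theta_{S-r}(Y_r^\star)-u^\star_{S-r}(Y_r^\star)\|_F^2\,\mathrm dr .
\]
Taking expectations and using $\E\|\Delta_0\|_F^2=W_2(\nu_0^\theta,\nu_0^\star)^2$ turns the right-hand side into $\mathcal E_{\mathrm{rev}}(t)^2$. Since $(Y_t^\theta,Y_t^\star)$ is a coupling of the two time-$t$ laws, $W_2^2\le\E\|\Delta_t\|_F^2\le e^{\Lambda(t)}\mathcal E_{\mathrm{rev}}(t)^2$, and taking square roots gives \eqref{eq:stability_one_sided_w2_main}.

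\textbf{Corollaries.} With $\kappa\equiv L$ we get $\Lambda(t)=(2L+1)t$, which recovers the Gr\"onwall prefactor. If $\kappa_s\le\kappa_\star<-1/2$ on a subinterval, then $2\kappa+1<0$ there, so $\Lambda$ decreases on the corresponding reverse-time interval. That subinterval therefore contributes a factor $e^{(2\kappa_\star+1)|I|/2}<1$ to $e^{\Lambda(S)/2}$. It must be bounded away from $s=0$ because the score terms in $u^\star$ can blow up as $s\to0$.

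\textbf{Main obstacle: rigour.} Four technical points need care.
\begin{itemize}[leftmargin=*]
\item Both reverse SDEs must have strong solutions on a common probability space so that the synchronous coupling is legitimate. The one-sided Lipschitz condition together with local Lipschitz or measurability gives well-posedness for $u^\theta$; for $Y^\star$ we take existence as part of the setting.
\item The pathwise differentiation of $\|\Delta_t\|^2$ requires $\Delta$ to be absolutely continuous, which holds because the noise cancels.
\item Measurability of $\kappa$ is needed so that $\Lambda$ is well defined.
\item Finiteness of the error integral is needed for the expectation step; if it is infinite, the bound holds trivially.
\end{itemize}
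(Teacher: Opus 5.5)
Your proposal is correct and follows essentially the same route as the paper: an optimal initial coupling driven synchronously so the noise cancels, the split of the drift difference into a one-sided-Lipschitz term plus an error term, Young's inequality supplying the $+1$ in $\Lambda$, and an integrating factor. The only difference is ordering. You apply the integrating factor pathwise and then take expectations (Tonelli on a nonnegative integrand), whereas the paper first takes expectations and then differentiates $\mathbb E\|\Delta_t\|_F^2$ via Fubini. Your ordering is slightly cleaner.
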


Theorem~\ref{thm:stability_reverse_main} shows that reverse-generation error is controlled by a single reverse-process term $\mathcal E_{\mathrm{rev}}(t)$, which combines both initialisation mismatch and reverse-drift approximation error along the ideal reverse trajectory. The one-sided Lipschitz formulation is useful because it does not require
the learned reverse drift to be globally contractive. In reverse time, the structural heat contribution appears with the anti-diffusive sign $+\alpha(s)\mathcal A_H$, and the term $+\beta(s)\gamma(X-M_0)$ is expansive before being combined with the score. Contractivity, when it holds, is therefore a property of the full reverse drift, including the score contribution, not of the heat or OU terms separately.


Together with the Euler--Maruyama bound (Corollary \ref{cor:appendix_numerical_generation_error} in Appendix~\ref{app-sec:em-error}), Theorem~\ref{thm:stability_reverse_main}
yields the total generation-error decomposition
\[
W_2\bigl(\Law(\widehat Y_S^\theta),p_{\mathrm{data}}\bigr)
\le
C\,\Delta t^{1/2}
+
e^{\Lambda(S)/2}\,\mathcal E_{\mathrm{rev}}(S).
\]
Thus the final generation error splits into a numerical discretisation term and a
reverse-process term, with the latter capturing both initialisation mismatch and
reverse-drift approximation error. In the ideal case with exact initialisation, the remaining error is of order $\Delta t^{1/2}$. Full details are given in Appendix~\ref{cor:appendix_total_generation_error}.

\subsection{Equivariant reverse-drift model}
\label{subsec:method_equivariance}

The learned field should transform consistently under independent permutations of rows and columns, corresponding to relabellings of nodes and hyperedges. We work with the product symmetry group $S_n\times S_m$, acting on incidence-space states as $X \mapsto P X Q^\top$, where $P$ and $Q$ are permutation matrices of sizes $n\times n$ and $m\times m$. We therefore parameterise $u^\theta_s$ using an $S_n\times S_m$-equivariant architecture \citep{maron2018invariant}, satisfying
\begin{equation}
u^\theta_s(PXQ^\top)=P\,u^\theta_s(X)\,Q^\top.
\label{eq:equivariance_reverse_drift}
\end{equation}
Under the natural symmetry assumptions on the data distribution and forward process, the $L^2$-optimal state-only reverse-drift target itself belongs to this symmetry class.

\begin{proposition}[Equivariance of the $L^2$-optimal target]
\label{prop:equivariance_main}
Assume the data distribution is invariant under independent row and column permutations, the base mean $M_0$ is permutation-invariant, and the driving Brownian motion is isotropic. The operators $L_V$ and $L_E$ transform equivariantly under $H\mapsto PHQ^\top$ by direct computation from their definitions. Then the forward operator $\AH$ is $S_n\times S_m$-equivariant, the conditional forward law and conditional reverse drift transform equivariantly, and the $L^2$-optimal state-only reverse-drift target satisfies $u_s^{\mathrm{L^2}}(PXQ^\top)=P\,u_s^{\mathrm{L^2}}(X)\,Q^\top$.
\end{proposition}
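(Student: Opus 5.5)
The plan is to establish equivariance in a chain: first for the hypergraph operators, then for the conditional forward law, then for the conditional reverse drift, and finally for the posterior average that defines $u_s^{\mathrm{L^2}}$. Fix permutation matrices $P,Q$ and write $H'=PHQ^\top$.

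\emph{Operators.} Since $H'\1_m=PH\1_m$ and $H'^\top\1_n=QH^\top\1_n$, we have $D_V(H')=PD_V(H)P^\top$ and $D_E(H')=QD_E(H)Q^\top$. The entrywise (pseudo-)inverse square roots commute with conjugation by permutations. Substituting into \eqref{eq:LV} gives $L_V(H')=PL_V(H)P^\top$. Similarly, $H'^\top H'=QH^\top HQ^\top$, and $\offdiag$ commutes with conjugation by a permutation. Hence $A_E(H')=QA_E(H)Q^\top$ and $D_{\mathrm{ov}}(H')=QD_{\mathrm{ov}}(H)Q^\top$, so \eqref{eq:LE} gives $L_E(H')=QL_E(H)Q^\top$. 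Using $P^\top P=I$ and $Q^\top Q=I$, it follows that
\[
\mathcal A_{H'}(PXQ^\top)=PL_V(H)XQ^\top+PXL_E(H)Q^\top=P\AH(X)Q^\top .
\]

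\emph{Conditional laws.} Let $\Pi:=Q\otimes P$, which is orthogonal, so that $\vecop(PXQ^\top)=\Pi\,\vecop(X)$. The previous step gives $B_s(H')=\Pi B_s(H)\Pi^\top$. Assume, as I read the hypothesis, that $PM_0Q^\top=M_0$. Then $\Pi m_s(H)$ and $\Pi C_s(H)\Pi^\top$ satisfy the ODEs \eqref{eq:mean_ode_main}--\eqref{eq:cov_ode_main} for $H'$. The covariance equation is preserved because $\Pi I\Pi^\top=I$, which is where isotropy of the noise enters. The initial conditions also match, since $\vecop(H')=\Pi\vecop(H)$. By uniqueness of ODE solutions, $m_s(H')=\Pi m_s(H)$ and $C_s(H')=\Pi C_s(H)\Pi^\top$. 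Consequently $p_{s\mid H'}(PXQ^\top)=p_{s\mid H}(X)$, and the score \eqref{eq:conditional_score_main} satisfies $r^\star_{s\mid H'}(PXQ^\top)=P\,r^\star_{s\mid H}(X)\,Q^\top$. Combining this with the operator step in \eqref{eq:conditional_reverse_drift_final}, every term of the conditional reverse drift transforms equivariantly:
\[
u^\star_{s\mid H'}(PXQ^\top)=P\,u^\star_{s\mid H}(X)\,Q^\top .
\]

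\emph{Posterior average.} This is the step I expect to need the most care, because it must move the symmetry through the conditioning. By Proposition~\ref{prop:bayes_reverse_drift_main}, written in density form,
\[
u_s^{\mathrm{L^2}}(X)=\frac{\int u^\star_{s\mid H}(X)\,p_{s\mid H}(X)\,p_{\mathrm{data}}(\mathrm dH)}{\hat p_s(X)},\qquad \hat p_s(X)=\int p_{s\mid H}(X)\,p_{\mathrm{data}}(\mathrm dH).
\]
Evaluate the numerator at $PXQ^\top$ and substitute $H=PGQ^\top$. Invariance of $p_{\mathrm{data}}$ means this substitution leaves the measure unchanged, so the numerator becomes
\[
\int u^\star_{s\mid PGQ^\top}(PXQ^\top)\,p_{s\mid PGQ^\top}(PXQ^\top)\,p_{\mathrm{data}}(\mathrm dG)=P\left[\int u^\star_{s\mid G}(X)\,p_{s\mid G}(X)\,p_{\mathrm{data}}(\mathrm dG)\right]Q^\top ,
\]
using the two identities from the previous step. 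The same substitution gives $\hat p_s(PXQ^\top)=\hat p_s(X)$. Dividing the numerator by the denominator yields $u_s^{\mathrm{L^2}}(PXQ^\top)=P\,u_s^{\mathrm{L^2}}(X)\,Q^\top$.

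One subtlety remains: conditional expectations are defined only almost everywhere. I would handle this by working with the explicit smooth density formula above rather than an abstract version of the conditional expectation. This is legitimate because the Gaussian $p_{s\mid H}$ is strictly positive for $s>0$, so $\hat p_s>0$ and the ratio defines an everywhere-defined, canonical version of $u_s^{\mathrm{L^2}}$.
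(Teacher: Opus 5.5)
Your proof is correct and follows the same three-stage chain as the paper's appendix argument (Lemmas~\ref{lem:lv_le_equivariance} and~\ref{lem:conditional_law_equivariance}, then Theorem~\ref{thm:appendix_equivariance_empirical}). Your operator step and posterior-average step are essentially the paper's: the posterior step substitutes $H=PGQ^\top$ in the Bayes ratio and uses invariance of the data law together with $\hat p_s(PXQ^\top)=\hat p_s(X)$.

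The genuine difference is in the conditional-law step. The paper works pathwise. It sets $Y_s=PX_s^HQ^\top$ and observes that $Y$ solves the forward SDE started at $PHQ^\top$, driven by the Brownian motion $PW_sQ^\top$. It then invokes uniqueness in law, and obtains the density and score identities from the orthogonal, unit-Jacobian change of variables $X\mapsto PXQ^\top$. You instead conjugate the moment equations \eqref{eq:mean_ode_main}--\eqref{eq:cov_ode_main} by $\Pi=Q\otimes P$ and use uniqueness for linear ODEs. This gives $m_s(PHQ^\top)=\Pi m_s(H)$ and $C_s(PHQ^\top)=\Pi C_s(H)\Pi^\top$, from which you read the density and score transformations off the closed-form Gaussian expressions.

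Your route is more elementary: it needs only Proposition~\ref{prop:conditional_gaussian_main} and ODE uniqueness, not uniqueness in law for SDEs. It also produces explicit relabelling identities for the conditional moments as a by-product. The paper's route never uses Gaussianity, so it would carry over unchanged to any forward SDE with an equivariant, possibly nonlinear, drift and isotropic additive noise.

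Your closing remark also makes precise a point the paper leaves implicit: the Bayes ratio with $\hat p_s>0$ gives a canonical, everywhere-defined version of $u_s^{\mathrm{L^2}}$.
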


\begin{algorithm}[t]
\caption{\label{alg:sampling}\HEDGE\ }
\begin{minipage}[t]{0.48\linewidth}
\textbf{Training}
\vspace{0.25em}
\begin{algorithmic}[1]
\STATE {\bfseries Input:} data $\{H^{(i)}\}$, time law $\rho$, step size $\eta$
\FOR{each training iteration}
    \STATE Sample $H \sim p_{\mathrm{data}}$ and $s \sim \rho$
    \STATE Solve \eqref{eq:mean_ode_main}--\eqref{eq:cov_ode_main} to get $m_s(H)$ and $C_s(H)$
    \STATE Sample $\xi \sim \mathcal N(0,I_{nm})$, form \\ $X_s \leftarrow \matop(m_s(H)+C_s(H)^{1/2}\xi)$
    \STATE Evaluate target drift $u^\star \leftarrow u^\star_{s\mid H}(X_s)$ 
    \STATE Compute loss $\mathcal L(\theta) \leftarrow \|u_s^\theta(X_s)-u^\star\|_F^2$
    \STATE Update $\theta \leftarrow \theta-\eta\,\nabla_\theta \mathcal L(\theta)$
\ENDFOR
\STATE \textbf{return} trained reverse field $u_s^\theta$
\end{algorithmic}
\end{minipage}
\hfill
\begin{minipage}[t]{0.48\linewidth}
\textbf{Generation}
\vspace{0.25em}
\begin{algorithmic}[1]

\STATE {\bfseries Input:} $u_s^\theta$, schedule $\beta$, $(M_0,\gamma,\tau)$, horizon $S$; projection $\Pi(Y)_{ij} = \mathbf 1\{Y_{ij} \ge 1/2\}$

\STATE Choose time grid $0=t_0<\cdots<t_K=S$
\STATE Initialise $Y_{t_0} \sim \mathcal N(M_0,\tfrac{\tau}{\gamma}I_{nm})$
\FOR{$k=0,\dots,K-1$}
    \STATE $\Delta t \leftarrow t_{k+1}-t_k$, $s_k \leftarrow S-t_k$
    \STATE $\widehat u_k \leftarrow u^\theta_{s_k}(Y_{t_k})$
    \STATE Sample $\varepsilon_k \sim \mathcal N(0,I_{nm})$
    \STATE $Y_{t_{k+1}} \leftarrow Y_{t_k} + \Delta t\,\widehat u_k + \sqrt{2\tau\beta(s_k)\Delta t}\,\varepsilon_k$
\ENDFOR
\STATE \textbf{return} $\widehat H \leftarrow \Pi(Y_{t_K})$
\end{algorithmic}
\end{minipage}
\end{algorithm}

\section{Related Work}
\label{sec:related-work}

\paragraph{Hypergraph generation.}
Hypergraph generation has been studied through statistical and mechanistic approaches. Classical random, configuration-style, and latent-structure models prescribe higher-order marginals or impose structured dependence through rewiring, latent variables, or block structure \citep{chodrow2020configuration,chodrow2021generative,ghoshdastidar2017consistency,turnbull2024latent}. More recent learned generators include mechanistic sequence-based models and diffusion-style approaches \citep{do2020structural,gailhard2025hygene,wu2025dde}. In particular, HYGENE \citep{gailhard2025hygene} performs diffusion on a graph-like bipartite encoding of the hypergraph, while DDE \citep{wu2025dde} generates hyperlinks by diffusing latent hyperlink embeddings. \HEDGE\ differs from these approaches by operating directly on relaxed incidence matrices, so node--hyperedge participation is modelled in its native rectangular representation rather than through a graph reduction or latent hyperlink embedding.

\paragraph{Heat diffusion and spectral structure on hypergraphs.}
Our forward process is built from spectral operators on hypergraphs. On the node side, we use the normalised hypergraph Laplacian of \citet{zhou2006learning} and \citet{chan2018spectral}. More broadly, Laplacian heat flow has recently been used as a structure-aware noising mechanism in graph generation \citep{stephenson2026g3}. The limitation of pure heat flow, however, is that it contracts toward a data-dependent low-complexity limit rather than a universal non-degenerate terminal law. \HEDGE\ addresses this by incorporating an Ornstein--Uhlenbeck component, thus yielding structure-aware noising together with an explicit Gaussian base law.

\paragraph{Reverse-time generative modelling.}
\HEDGE\ has natural connections to score-based diffusion models \citep{ho2020ddpm,song2021scorebased}. Unlike standard score-based models, whose reverse dynamics are parameterised by the marginal score of a fixed forward perturbation, our forward process is conditioned on the observed hypergraph. The exact reverse drift therefore contains both a score term and hypergraph-specific structural terms, so a score-only parameterisation is insufficient. We instead learn a state-only reverse drift by regressing onto exact conditional reverse-drift targets, closer in spirit to vector-field regression methods such as flow matching and generator matching \citep{lipman2023flowmatching,holderrieth2024generatormatching,stephenson2026g3}.

\section{Experiments}
\label{sec:experiments}

We evaluate \HEDGE\ in two ways. First, controlled synthetic ablations test the modelling choices behind the structured heat--OU process and the two-sided incidence operator. Second, a matched real-data benchmark compares \HEDGE\ with statistical and learned hypergraph generators across six datasets. We focus in the main text on higher-order structural diagnostics, with fuller metric definitions, calibration results, and additional qualitative examples deferred to Appendix~\ref{sec:additional_numerics}.

\subsection{Experimental setup}
\label{subsec:exp_protocol}


\paragraph{Task and representation.} Since many real datasets consist of a single large observed hypergraph, we evaluate generation on \emph{fixed-size subhypergraphs}: from each observed hypergraph we sample a bank of fixed-size subhypergraphs, train on one subset, and compare generated samples with held-out subhypergraphs of the same size. This gives a controlled test of whether a method reproduces higher-order incidence structure at fixed node--hyperedge dimension. Although this benchmark fixes size within each dataset, \HEDGE\ itself operates on rectangular incidence matrices and is not tied to one global size. After reverse-time sampling in relaxed incidence space, we project to a binary incidence matrix using the map $\Pi$ in Algorithm~\ref{alg:sampling}; in all experiments, $\Pi$ is entrywise thresholding at $1/2$, i.e.\ $\Pi(Y)_{ij} = \mathbf 1\{Y_{ij}\ge 1/2\}$. This projection is stable in practice: $99.34\%$ of relaxed entries lie within $0.10$ of either $0$ or $1$, so the choice of threshold within $[0.1, 0.9]$ has negligible effect on outputs. All experiments were run on a MacBook Pro with an Apple M4 Pro chip and 24GB memory.

\begin{table}[t]
\centering
\small
\setlength{\tabcolsep}{3pt}
\begin{tabular}{lrrrrr}
\toprule
Ablation & Intersect. WD & Tail gap & Node spec. WD & Edge spec. WD & Feature MMD \\
\midrule
Full \HEDGE\ & \textbf{0.105 $\pm$ 0.076} & \textbf{0.012 $\pm$ 0.004} & \textbf{0.034 $\pm$ 0.020} & \textbf{0.036 $\pm$ 0.027} & \textbf{0.331 $\pm$ 0.258} \\
OU only & 0.132 $\pm$ 0.080 & \textit{0.012 $\pm$ 0.006} & 0.044 $\pm$ 0.020 & 0.039 $\pm$ 0.030 & 0.391 $\pm$ 0.296 \\
Node only & 0.137 $\pm$ 0.086 & 0.019 $\pm$ 0.006 & 0.036 $\pm$ 0.021 & \textit{0.036 $\pm$ 0.030} & \textit{0.353 $\pm$ 0.311} \\
Edge only & \textit{0.126 $\pm$ 0.079} & 0.014 $\pm$ 0.004 & \textit{0.034 $\pm$ 0.022} & 0.037 $\pm$ 0.030 & 0.373 $\pm$ 0.279 \\
\bottomrule
\end{tabular}
\vspace{0.5em}
\caption{Simulated-data ablations. Values are mean $\pm$ standard error after averaging over synthetic datasets. Lower is better for all columns; best values are \textbf{bold} and second-best values are in \textit{italics}.}
\label{tab:simulated-hedge-ablations}
\end{table}

\paragraph{Datasets.}
The real-data study uses six datasets:
\textsc{Cora}, \textsc{CiteSeer}, \textsc{Actor}, \textsc{House-Committees}, \textsc{DBLP}, and \textsc{Twitch}.
They span sparse citation-like incidence structure (\textsc{Cora}, \textsc{CiteSeer}, \textsc{DBLP}), broader co-membership regimes (\textsc{Actor}, \textsc{Twitch}), and highly heterogeneous committee overlap (\textsc{House-Committees}). Dataset details are given in Appendix~\ref{sec:datasets}.

\paragraph{Comparators.}
We compare four methods under the same fixed-size protocol: \HEDGE, \texttt{HCM-MCMC} \citep{chodrow2020configuration}, \texttt{ER-HG}, and \texttt{HYGENE} \citep{gailhard2025hygene}. \texttt{HCM-MCMC} is a strong configuration-style baseline that directly targets degree and size structure; \texttt{ER-HG} is an Erd\H{o}s--R\'enyi-style random hypergraph baseline; and \texttt{HYGENE} is a learned diffusion-based hypergraph generator. All methods use the same dataset/seed splits and metric pipeline. Real-data results are reported over 10 independent seeds.

\paragraph{Metrics.}
The main table emphasises metrics that probe higher-order structure: (i) \emph{Overlap Tail Gap}; (ii) \emph{Intersection WD}; and (iii) \emph{Feature MMD}. Lower is better in all cases. These metrics complement the calibration and marginal-distribution results in Appendix~\ref{sec:real-data-extra-results}, including density, mean hyperedge size, mean node degree, and Wasserstein distances for size and degree distributions. Full definitions of each metric are given in Appendix~\ref{sec:metrics}.

\subsection{Ablation study}
\label{subsec:exp_ablations}

We use simulated hypergraph distributions to isolate the main modelling choices in \HEDGE. The ablations test whether structured heat improves over unstructured OU noising, and whether using both sides of the incidence geometry is preferable to smoothing only across nodes or only across hyperedges. \emph{Full \HEDGE} combines structured heat, the OU terminal mechanism, and both node- and hyperedge-side geometry. \emph{OU only} removes the heat term; \emph{Node only} keeps only \(L_V(H)X\); and \emph{Edge only} keeps only \(XL_E(H)\). Table~\ref{tab:simulated-hedge-ablations} reports averages across synthetic regimes; full per-dataset tables are in Appendix~\ref{sec:ablations}.

The ablations support the structured incidence-space construction. Full \HEDGE\ is best on all five reported metrics. OU-only is worse except for a near-tie on Tail gap, suggesting that unstructured OU corruption may match a local overlap-tail statistic but fails to reproduce the broader overlap distribution, spectra, and feature-space structure as well. The one-sided variants are competitive on individual metrics, with Edge only second-best on Intersection WD and Node only second-best on Feature MMD, but neither matches the overall balance of the full model. Thus the heat--OU process with both node- and hyperedge-side smoothing gives the most consistent performance on the higher-order diagnostics that motivate \HEDGE.

\subsection{Real-data comparisons}
\label{subsec:exp_realdata_direct}

The main empirical result is a matched real-data comparison across \textsc{Cora}, \textsc{CiteSeer}, \textsc{Actor}, \textsc{House-Committees}, \textsc{DBLP}, and \textsc{Twitch}. Table~\ref{fig:main-realdata-comparison} reports the higher-order metrics most aligned with the goal of incidence-space hypergraph generation; calibration and marginal-distribution results are given in Appendix~\ref{sec:additional_numerics}.

\HEDGE\ is the strongest method on the overlap-sensitive criteria that distinguish genuine hypergraph generation from marginal matching. It achieves the best Intersection WD on all six datasets, showing that it most accurately reproduces the full distribution of pairwise hyperedge intersections. It also gives the best Overlap Tail Gap on five of six datasets and is tied with \texttt{HCM-MCMC} on \textsc{CiteSeer}. On Feature MMD, \HEDGE\ is best on five datasets and second only to \texttt{HCM-MCMC} on \textsc{Cora}. This is a stronger conclusion than aggregate competitiveness: \HEDGE\ is consistently best precisely on the metrics that measure higher-order overlap structure.

The comparison with \texttt{HCM-MCMC} is especially informative. Configuration-style models are designed to preserve degree and size behaviour, and Appendix~\ref{sec:real-data-extra-results} confirms that \texttt{HCM-MCMC} is often highly competitive on these marginal quantities. However, Table~\ref{fig:main-realdata-comparison} shows that matching such marginals is not enough to reproduce higher-order incidence representation: \HEDGE\ improves the overlap distribution and multivariate structural discrepancy across the real benchmarks. Figure~\ref{fig:main-realdata-comparison} (right) shows a representative held-out comparison on \textsc{House-Committees}. The native hypergraph rendering illustrates the same pattern as the quantitative results: \HEDGE\ better preserves the broad incidence pattern and overlap organisation, whereas the weaker baselines either homogenise the incidence structure or produce visibly implausible occupancy. This qualitative example is not used as evidence in place of the metrics, but it helps explain why the overlap-sensitive diagnostics favour \HEDGE. Additional qualitative examples are given in Appendix~\ref{sec:bipartite-realdata}.

\begin{figure*}[t]
\centering

\begin{minipage}[t]{0.62\textwidth}
\centering
\vspace{0pt}
\scriptsize
\setlength{\tabcolsep}{3.5pt}

\begin{tabular}{@{}p{1.15cm}lrrrrrr@{}}
\toprule
Metric & Method & Cora & CiteSeer & Actor & H.-Comm. & DBLP & Twitch \\
\midrule
\multirow{4}{=}{\makecell[l]{Overlap\\Tail Gap}}
 & HEDGE & \textbf{0.005} & 0.012 & \textbf{0.004} & \textbf{0.034} & \textbf{0.011} & \textbf{0.008} \\
 & HCM-MCMC & \textit{0.006} & \textit{0.012} & \textit{0.008} & \textit{0.035} & 0.016 & \textit{0.018} \\
 & ER-HG & 0.006 & \textbf{0.005} & 0.032 & 0.130 & \textit{0.012} & 0.023 \\
 & HYGENE & 0.618 & 0.738 & 0.992 & 0.419 & 0.977 & 0.980 \\
\midrule
\multirow{4}{=}{\makecell[l]{Intersection\\WD}}
 & HEDGE & \textbf{0.017} & \textbf{0.050} & \textbf{0.041} & \textbf{0.223} & \textbf{0.080} & \textbf{0.075} \\
 & HCM-MCMC & \textit{0.026} & \textit{0.081} & \textit{0.058} & \textit{0.320} & \textit{0.081} & \textit{0.113} \\
 & ER-HG & 0.167 & 0.187 & 0.319 & 0.612 & 0.295 & 0.334 \\
 & HYGENE & 4.39 & 21.65 & 61.56 & 3.25 & 40.13 & 62.61 \\
\midrule
\multirow{4}{=}{\makecell[l]{Feature\\MMD}}
 & HEDGE & \textit{0.123} & \textbf{0.053} & \textbf{0.136} & \textbf{0.072} & \textbf{0.083} & \textbf{0.101} \\
 & HCM-MCMC & \textbf{0.110} & \textit{0.194} & \textit{0.277} & \textit{0.163} & \textit{0.102} & \textit{0.159} \\
 & ER-HG & 1.06 & 1.08 & 0.928 & 0.699 & 0.951 & 0.932 \\
 & HYGENE & 0.524 & 0.802 & 1.01 & 0.422 & 0.999 & 0.890 \\
\bottomrule
\end{tabular}
\end{minipage}
\hfill
\begin{minipage}[t]{0.34\textwidth}
\centering
\vspace{0pt}
\includegraphics[width=\linewidth]{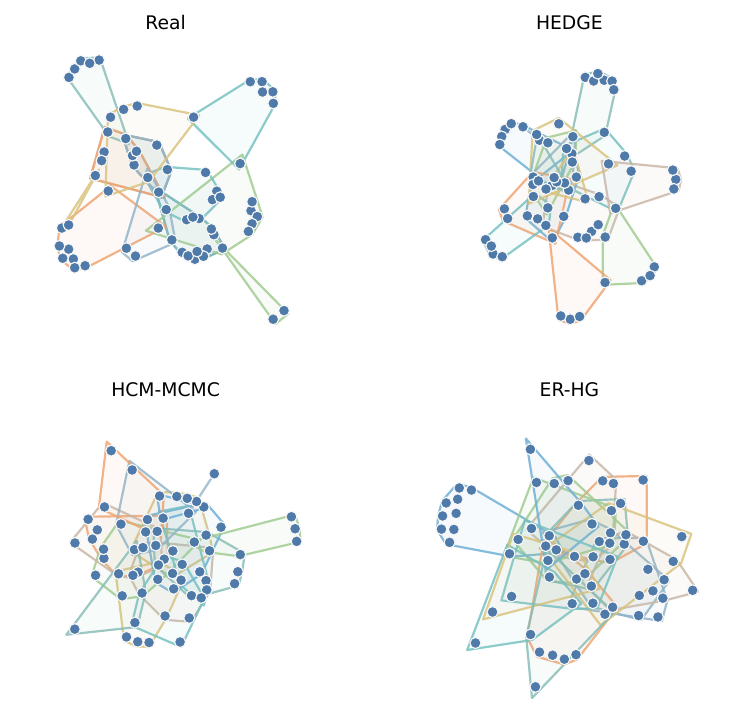}
\end{minipage}

\caption{
Real-data comparisons. Left: higher-order metrics across real hypergraph datasets. Lower is better for all metrics; best values are \textbf{bold} and second-best values are in \textit{italics}. H.-Comm. denotes House Committees.
Right: qualitative hypergraph comparison on House-Committees, showing a real data hypergraph alongside generated hypergraphs from \HEDGE, HCM-MCMC, and ER-HG. 
}
\label{fig:main-realdata-comparison}
\end{figure*}

\section{Limitations and Future Work}
\label{sec:limitations}

\HEDGE\ is a first step toward diffusion-based hypergraph generation directly in incidence space, and several limitations remain. Our empirical protocol uses fixed-size subhypergraphs, which gives a controlled comparison at matched node--hyperedge dimensions but does not address fully variable-size generation. The current model also generates unweighted, unattributed incidence matrices;
extensions to attributes, weights, timestamps, or typed relations would require additional mixed discrete--continuous modelling and evaluation.

There are also computational and modelling limitations. The conditional Gaussian targets rely on spectral operators computed from each observed hypergraph; although the Kronecker structure is feasible for our benchmarks, larger hypergraphs will require approximate spectral or local operator methods. The final projection from relaxed samples to binary incidence matrices is simple thresholding, which works well empirically but does not explicitly enforce constraints such as non-empty hyperedges, absence of isolated nodes, or prescribed sparsity. Finally, configuration-style baselines remain strong on degree and hyperedge-size marginals because they target those quantities directly. Future work should explore combining
incidence-space diffusion with explicit degree/size conditioning, constrained projection, and sharper statistical theory for learned reverse drifts.

\bibliography{hedge_revised}
\bibliographystyle{plainnat}

\appendix

\section{Proofs of Theoretical Results}
\label{app:theory_proofs}

This appendix contains the main theoretical results underpinning the structured heat--OU diffusion used by \HEDGE. The development proceeds in five steps. First, we analyse the pure heat component as a two-sided dissipative semigroup. Second, we establish the conditional linear-Gaussian structure of the forward process. Third, we characterise the exact conditional reverse drift and prove that the marginal drift of the forward process coincides with the $L^2$-optimal state-only target. Fourth, we prove the finite-horizon stability results under a one-sided Lipschitz $W_2$ bound  and the Euler--Maruyama discretisation bound. Finally, we show that the $L^2$-optimal state-only target is $S_n\times S_m$-equivariant.

Throughout, we equip $\R^{n\times m}$ with the Frobenius inner product $\langle X,Y\rangle_F := \operatorname{tr}(X^\top Y)$ and norm $\|X\|_F^2 := \langle X,X\rangle_F$. Inverse degree matrices are interpreted entrywise on strictly positive diagonal entries, with zero on zero diagonal entries.

\subsection{The pure heat operator as a dissipative semigroup}
\label{app-sec:heat-operator}

\begin{lemma}[Positive semidefiniteness of $L_E$]
\label{lem:appendix_LE_psd}
Fix $H\in\{0,1\}^{n\times m}$. Then $A_E(H)$ is symmetric and entrywise nonnegative, and $L_E(H)=I_m-D_{\mathrm{ov}}^{-1/2}A_E(H)D_{\mathrm{ov}}^{-1/2}$ is symmetric positive semidefinite.
\end{lemma}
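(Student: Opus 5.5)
Symmetry and nonnegativity of $A_E(H)$ come straight from the definition. The matrix $H^\top H$ is symmetric with nonnegative integer entries, since $(H^\top H)_{ef}=|e\cap f|$. Conjugating by the positive diagonal matrix $D_E^{-1/2}$ keeps both properties, and so does applying $\offdiag(\cdot)$. Then $D_{\mathrm{ov}}=\diag(A_E\1_m)$ is diagonal with nonnegative entries, and $D_{\mathrm{ov}}^{-1/2}$ is understood entrywise under the paper's convention (zero on zero entries). It follows that $N:=D_{\mathrm{ov}}^{-1/2}A_E D_{\mathrm{ov}}^{-1/2}$ is symmetric, and hence so is $L_E=I_m-N$.

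For positive semidefiniteness I will use the standard quadratic-form argument for normalised graph Laplacians. Write $w_{ef}:=(A_E)_{ef}\ge 0$ and $\delta_e:=\sum_f w_{ef}=(D_{\mathrm{ov}})_{ee}$. Split the hyperedge index set into $P=\{e:\delta_e>0\}$ and $Z=\{e:\delta_e=0\}$.

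If $e\in Z$, then $w_{ef}=0$ for all $f$ (nonnegative entries summing to zero), so row and column $e$ of $N$ vanish. With the zero convention this holds regardless, since the $e$-th diagonal entry of $D_{\mathrm{ov}}^{-1/2}$ is $0$. Thus $L_E$ is block-diagonal up to permutation, with an identity block on $Z$, which is trivially PSD, and the block $I-N_{PP}$ on $P$.

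On $P$, set $g=D_{\mathrm{ov}}^{-1/2}x$, i.e. $g_e=x_e/\sqrt{\delta_e}$. Then
\[
x^\top(I-N)x=\sum_{e\in P}\delta_e g_e^2-\sum_{e,f\in P}w_{ef}g_eg_f=\tfrac12\sum_{e,f\in P}w_{ef}(g_e-g_f)^2\ge 0,
\]
using $\delta_e=\sum_{f\in P}w_{ef}$ for $e\in P$. This holds because $w_{ef}=0$ whenever $f\in Z$, by the symmetry of $A_E$.

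The only step needing care is this handling of zero overlap-degrees, for example a hyperedge disjoint from all others. There the generalised inverse must be checked to give $L_E$ an identity row rather than breaking the identity $\delta_e=\sum_f w_{ef}$. Once the block splitting is in place, the rest is the textbook computation.

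An alternative I could use as a cross-check is a Gershgorin/Perron bound. $N_{PP}$ is similar to the row-stochastic matrix $D_{\mathrm{ov}}^{-1}A_E$ restricted to $P$, so its spectral radius is at most $1$, and therefore $\lambda_{\min}(I-N)\ge 0$.
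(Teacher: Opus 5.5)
Your proof is correct and is essentially the paper's argument: you use the same split of hyperedges into zero and positive overlap-degree classes. You also make the same observation that the zero class contributes only an identity block, because nonnegativity and symmetry force those rows and columns of $A_E(H)$ to vanish, and the same substitution $g_e=x_e/\sqrt{\delta_e}$ gives $\tfrac12\sum_{e,f\in P}w_{ef}(g_e-g_f)^2\ge 0$. The only quibble is that $D_E^{-1/2}$ is merely nonnegative rather than positive if $H$ has an empty column, but your symmetry and nonnegativity step does not actually need positivity.
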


\begin{proof}
Let $M:=D_E^{-1/2}H^\top H D_E^{-1/2}$; since $H^\top H$ is symmetric and
$D_E^{-1/2}$ is diagonal, $M$ is symmetric, so
$A_E(H)=\operatorname{offdiag}(M)$ is symmetric. For $1\le i,j\le m$,
$(H^\top H)_{ij}=\sum_v H_{v,i}H_{v,j}=|e_i\cap e_j|\ge 0$, so $M$ is
entrywise nonnegative; $\operatorname{offdiag}$ preserves entrywise
nonnegativity, hence so is $A_E(H)$.
 
For positive semidefiniteness, fix $x\in\R^m$ and partition the indices as
$\mathcal I_0:=\{i:\delta_i=0\}$ and $\mathcal I_+:=\{i:\delta_i>0\}$, where
$\delta_i:=(D_{\mathrm{ov}})_{ii}=\sum_j A_E(H)_{ij}$. For $i\in\mathcal I_0$,
nonnegativity of $A_E(H)$ entrywise forces
$A_E(H)_{ij}=A_E(H)_{ji}=0$ for all $j$, so row and column $i$ contribute
nothing to any quadratic form involving $A_E(H)$. Moreover, under our
convention $(D_{\mathrm{ov}}^{-1/2})_{ii}=0$ whenever $\delta_i=0$, so the
$i$th row and column of $D_{\mathrm{ov}}^{-1/2}A_E(H)D_{\mathrm{ov}}^{-1/2}$
are identically zero as well. Writing
\[
x^\top L_E(H)x
=
x^\top x
-
x^\top D_{\mathrm{ov}}^{-1/2}A_E(H)D_{\mathrm{ov}}^{-1/2}x,
\]
the second term therefore reduces to a sum over $\mathcal I_+\times\mathcal I_+$.
Setting $y_i:=x_i/\sqrt{\delta_i}$ for $i\in\mathcal I_+$,
\[
x^\top D_{\mathrm{ov}}^{-1/2}A_E(H)D_{\mathrm{ov}}^{-1/2}x
=
\sum_{i,j\in\mathcal I_+} A_E(H)_{ij}\,y_i y_j.
\]
For the first term, splitting the sum over $\mathcal I_0\cup\mathcal I_+$
and using $x_i^2=\delta_i y_i^2$ on $\mathcal I_+$ gives
\[
x^\top x
=
\sum_{i\in\mathcal I_0} x_i^2
+
\sum_{i\in\mathcal I_+}\delta_i y_i^2
=
\sum_{i\in\mathcal I_0} x_i^2
+
\sum_{i\in\mathcal I_+}\Bigl(\sum_{j\in\mathcal I_+} A_E(H)_{ij}\Bigr) y_i^2,
\]
where the second equality uses $\delta_i=\sum_{j\in\mathcal I_+} A_E(H)_{ij}$
for $i\in\mathcal I_+$ (the terms with $j\in\mathcal I_0$ vanish by the
argument above). Combining,
\[
x^\top L_E(H)x
=
\sum_{i\in\mathcal I_0} x_i^2
+
\sum_{i,j\in\mathcal I_+} A_E(H)_{ij}(y_i^2 - y_i y_j).
\]
By symmetry of $A_E(H)$, the second sum equals
$\tfrac12\sum_{i,j\in\mathcal I_+} A_E(H)_{ij}(y_i^2+y_j^2-2y_iy_j)
=\tfrac12\sum_{i,j\in\mathcal I_+} A_E(H)_{ij}(y_i-y_j)^2$. Therefore
\[
x^\top L_E(H)x
=
\sum_{i\in\mathcal I_0} x_i^2
+
\tfrac12\sum_{i,j\in\mathcal I_+} A_E(H)_{ij}(y_i-y_j)^2,
\]
and both terms are nonnegative because $A_E(H)$ is entrywise nonnegative,
proving $L_E(H)\succeq 0$.
\end{proof}

\begin{proof}[Proof of Proposition~\ref{prop:heat_operator_main}]
Write $L_V:=L_V(H)$ and $L_E:=L_E(H)$, both symmetric PSD (the former by construction and our no-isolated-node convention, the latter by Lemma~\ref{lem:appendix_LE_psd}).

\emph{Self-adjointness:} For $X,Y\in\R^{n\times m}$,
$\langle X,\AH(Y)\rangle_F=\operatorname{tr}(X^\top L_V Y)+\operatorname{tr}(X^\top Y L_E)$. Using symmetry of $L_V, L_E$, $\operatorname{tr}(X^\top L_V Y)=\operatorname{tr}((L_V X)^\top Y)$ and $\operatorname{tr}(X^\top Y L_E)=\operatorname{tr}((XL_E)^\top Y)$, giving $\langle X,\AH(Y)\rangle_F=\langle \AH(X),Y\rangle_F$.

\emph{PSD:} $\langle X,\AH(X)\rangle_F = \operatorname{tr}(X^\top L_V X) + \operatorname{tr}(X L_E X^\top)\ge 0$ since $L_V, L_E\succeq 0$.

\emph{Dissipativity:} If $\dot Z_s = -\AH(Z_s)$ then $\frac{\mathrm{d}}{\mathrm{d}s}\|Z_s\|_F^2=-2\langle Z_s,\AH(Z_s)\rangle_F\le 0$.

\emph{Spectral decoupling:} Write $L_V=U\Lambda U^\top$, $L_E=VMV^\top$ with orthogonal $U,V$ and diagonal $\Lambda,M$. Set $\widetilde Z_s := U^\top Z_s V$. Then
\[
\frac{\mathrm{d}}{\mathrm{d}s}\widetilde Z_s = -U^\top(L_V Z_s + Z_s L_E)V = -\Lambda \widetilde Z_s - \widetilde Z_s M,
\]
so entrywise $\frac{\mathrm{d}}{\mathrm{d}s}\widetilde Z_s(i,j) = -(\lambda_i+\mu_j)\widetilde Z_s(i,j)$, giving $\widetilde Z_s(i,j) = e^{-s(\lambda_i+\mu_j)}\widetilde Z_0(i,j)$ and $Z_s = e^{-sL_V}Z_0 e^{-sL_E}$.
\end{proof}

%

\begin{corollary}[Pure-heat limit]
\label{cor:pure_heat_limit}
Let $Z_s$ satisfy $\tfrac{\mathrm d}{\mathrm ds}Z_s = -\AH(Z_s)$ with
$Z_0\in\R^{n\times m}$. Then $Z_s\to \Pi_V Z_0\,\Pi_E$ as $s\to\infty$, where
$\Pi_V$ and $\Pi_E$ are the orthogonal projections onto $\ker L_V(H)$ and
$\ker L_E(H)$, respectively. Moreover, letting
\[
\eta:=\min\{\lambda_i+\mu_j:\lambda_i+\mu_j>0\}>0,
\]
the convergence is exponential at rate $\eta$:
\[
\|Z_s-\Pi_V Z_0\,\Pi_E\|_F\le e^{-\eta s}\,\|Z_0\|_F\qquad \text{for all }s\ge 0.
\]
\end{corollary}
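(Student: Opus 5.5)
The plan is to work entirely in the joint eigenbasis supplied by Proposition~\ref{prop:heat_operator_main}. Write $L_V(H)=U\Lambda U^\top$ and $L_E(H)=VMV^\top$ with $U,V$ orthogonal and $\lambda_i,\mu_j\ge 0$; the nonnegativity comes from the positive semidefiniteness of $L_V$ and from Lemma~\ref{lem:appendix_LE_psd}. The proposition then gives $\widetilde Z_s(i,j)=e^{-s(\lambda_i+\mu_j)}\widetilde Z_0(i,j)$ with $\widetilde Z_s=U^\top Z_sV$. Since the map $Z\mapsto U^\top ZV$ is a Frobenius isometry, both claims reduce to statements about the scalar modes $(i,j)$.

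\emph{Identifying the limit.} Because $\lambda_i,\mu_j\ge 0$, we have $\lambda_i+\mu_j=0$ if and only if $\lambda_i=0$ and $\mu_j=0$. Let $\mathcal K:=\{(i,j):\lambda_i=\mu_j=0\}$.
\begin{itemize}
\item On $\mathcal K$ the modes are constant in $s$.
\item Off $\mathcal K$ each mode decays to zero.
\end{itemize}
So $\widetilde Z_s\to \widetilde Z_\infty$, where $\widetilde Z_\infty(i,j)=\widetilde Z_0(i,j)\,\mathbf 1\{(i,j)\in\mathcal K\}$. Next, set $P_0:=\diag(\mathbf 1\{\lambda_i=0\})$ and $Q_0:=\diag(\mathbf 1\{\mu_j=0\})$. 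Then $\widetilde Z_\infty=P_0\widetilde Z_0Q_0$, and transforming back gives $Z_\infty=UP_0U^\top Z_0\,VQ_0V^\top$. Finally, $UP_0U^\top=\Pi_V$ and $VQ_0V^\top=\Pi_E$, since the zero-eigenvalue eigenvectors span the kernels and the spectral projector onto an eigenspace of a symmetric matrix is orthogonal. This gives the limit $\Pi_VZ_0\Pi_E$.

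\emph{Exponential rate.} Since $\widetilde Z_s$ and $\widetilde Z_\infty$ agree on $\mathcal K$,
\[
\|Z_s-\Pi_VZ_0\Pi_E\|_F^2=\sum_{(i,j)\notin\mathcal K}e^{-2s(\lambda_i+\mu_j)}\widetilde Z_0(i,j)^2 .
\]
For $(i,j)\notin\mathcal K$ we have $\lambda_i+\mu_j\ge\eta$. Here $\eta$ is a minimum over a finite set of positive numbers, so $\eta>0$; if that set is empty, both sides vanish and the claim is trivial. Hence the sum is at most $e^{-2\eta s}\sum_{(i,j)\notin\mathcal K}\widetilde Z_0(i,j)^2\le e^{-2\eta s}\|\widetilde Z_0\|_F^2=e^{-2\eta s}\|Z_0\|_F^2$. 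Taking square roots gives the bound.

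\emph{Main point of care.} There is no real obstacle, but one step needs attention: the zero set of $\lambda_i+\mu_j$ factorises as a product set. This relies on both Laplacians being PSD, since otherwise cancellations $\lambda_i=-\mu_j$ could occur. It is exactly this product structure that makes the limit a two-sided projection $\Pi_V(\cdot)\Pi_E$.
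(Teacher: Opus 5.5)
Your proof is correct and is essentially the paper's own argument: diagonalise both Laplacians and use the mode-wise decay from Proposition~\ref{prop:heat_operator_main}. You then split the modes into the product set $\mathcal K=\{(i,j):\lambda_i=\mu_j=0\}$ and its complement, identify the limit as $UP_0U^\top Z_0\,VQ_0V^\top=\Pi_V Z_0\Pi_E$, and bound the complement sum by $e^{-2\eta s}\|\widetilde Z_0\|_F^2$ using orthogonal invariance, exactly as the paper does.

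The remaining differences are cosmetic. You identify the limit before proving the rate, whereas the paper deduces it from the bound. Your empty-set caveat can never trigger: $A_E(H)$ has zero diagonal, so $L_E(H)$ has unit diagonal and $\operatorname{tr}L_E(H)=m>0$, which forces some $\mu_j>0$.
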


\begin{proof}
Write $L_V(H)=U\Lambda U^\top$ and $L_E(H)=VMV^\top$ with orthogonal
$U,V$ and diagonal
$\Lambda=\diag(\lambda_1,\dots,\lambda_n)$,
$M=\diag(\mu_1,\dots,\mu_m)$; by Proposition~\ref{prop:heat_operator_main},
$\lambda_i,\mu_j\ge 0$. Let $\Pi_\Lambda\in\R^{n\times n}$ be the diagonal
matrix with $(\Pi_\Lambda)_{ii}=1$ if $\lambda_i=0$ and $0$ otherwise, and
similarly $\Pi_M\in\R^{m\times m}$ with $(\Pi_M)_{jj}=\mathbf 1[\mu_j=0]$.
Then $\Pi_V=U\Pi_\Lambda U^\top$ and $\Pi_E=V\Pi_M V^\top$ are the
orthogonal projections onto $\ker L_V(H)$ and $\ker L_E(H)$.

Setting $\widetilde Z_s:=U^\top Z_s V$, Proposition~\ref{prop:heat_operator_main}
gives $\widetilde Z_s(i,j)=e^{-s(\lambda_i+\mu_j)}\widetilde Z_0(i,j)$
entrywise. Split the indices into $\mathcal K:=\{(i,j):\lambda_i+\mu_j=0\}$
(equivalently, $\lambda_i=\mu_j=0$ since both are nonnegative) and its
complement $\mathcal K^c$. For $(i,j)\in\mathcal K$,
$\widetilde Z_s(i,j)=\widetilde Z_0(i,j)$ for all $s$; for
$(i,j)\in\mathcal K^c$, $|\widetilde Z_s(i,j)|\le e^{-\eta s}|\widetilde Z_0(i,j)|$
by definition of $\eta$. Hence
\[
\|\widetilde Z_s-\Pi_\Lambda\widetilde Z_0\Pi_M\|_F^2
=\sum_{(i,j)\in\mathcal K^c}|\widetilde Z_s(i,j)|^2
\le e^{-2\eta s}\sum_{(i,j)\in\mathcal K^c}|\widetilde Z_0(i,j)|^2
\le e^{-2\eta s}\|\widetilde Z_0\|_F^2.
\]
Since $U,V$ are orthogonal, $\|\widetilde Z_s-\Pi_\Lambda\widetilde Z_0\Pi_M\|_F=\|Z_s-\Pi_V Z_0\Pi_E\|_F$ and $\|\widetilde Z_0\|_F=\|Z_0\|_F$,
so $\|Z_s-\Pi_V Z_0\Pi_E\|_F\le e^{-\eta s}\|Z_0\|_F$. Taking $s\to\infty$
gives $Z_s\to\Pi_V Z_0\Pi_E$.
\end{proof}

\subsection{Conditional Gaussian structure of the forward process}
\label{app-sec:conditional-gaussian}

\begin{proof}[Proof of Proposition~\ref{prop:conditional_gaussian_main}]
Applying the definition $\operatorname{vec}(AXB)=(B^\top\otimes A)\operatorname{vec}(X)$ to $\AH(X)=L_V X + X L_E$ with symmetric $L_E$ gives $\operatorname{vec}(\AH(X))=(I_m\otimes L_V + L_E\otimes I_n)\operatorname{vec}(X)$. The forward SDE therefore becomes
\[
\mathrm dx_s = -B_s(H) x_s\,\mathrm ds + b_s\,\mathrm ds + \sqrt{2\tau\beta(s)}\,\mathrm dw_s,\quad x_0=\operatorname{vec}(H),
\]
with $b_s = \beta(s)\gamma\operatorname{vec}(M_0)$, a linear time-inhomogeneous SDE with additive Gaussian noise. Let $\Phi(s,u;H)$ be the transition matrix of $\partial_s z_s = -B_s(H) z_s$, i.e.\ the unique invertible solution of $\partial_s \Phi(s,u;H) = -B_s(H)\Phi(s,u;H)$, with $\Phi(u,u;H)=I_{nm}$. 

Setting $y_s:=\Phi(0,s;H) x_s$ and applying the product rule,
\[
\mathrm dy_s = \Phi(0,s;H)b_s\,\mathrm ds + \sqrt{2\tau\beta(s)}\Phi(0,s;H)\,\mathrm dw_s,
\]
so integrating and inverting gives

\[x_s = \Phi(s,0;H)\operatorname{vec}(H) + \int_0^s \Phi(s,u;H) b_u\,\mathrm du + \sqrt{2\tau}\int_0^s \Phi(s,u;H)\sqrt{\beta(u)}\,\mathrm dw_u.
\]

The first two terms are deterministic given $H$, and the Wiener integral has deterministic integrand, and hence is a centred Gaussian \citep[Sec.~4.3, 5.5, 6.1]{sarkka2019applied}. The mean and covariance ODEs \eqref{eq:mean_ode_main}--\eqref{eq:cov_ode_main} then follow by differentiating the explicit expressions \citep[Sec.~5.5, 6.1]{sarkka2019applied}.

For invertibility of $C_s(H)$: $C_s(H) = 2\tau \int_0^s \beta(u)\Phi(s,u;H)\Phi(s,u;H)^\top\,\mathrm du$. Since $\beta>0$ on $(0,s]$ and $\Phi$ is invertible, the integrand is positive definite on a set of positive Lebesgue measure, so $C_s(H)\succ 0$ for all $s\in(0,S]$.
\end{proof}

\subsection{Reverse-time drift and $L^2$-optimal target}
\label{app-sec:reverse-time-drift}

We first give the form of the marginal forward drift under the empirical data measure, which is later used to identify the $L^2$-optimal state-only target with a state-only marginal reverse drift.

\begin{lemma}[Marginal forward drift]
\label{lem:marginal_drift}
Let $\hat p_{\mathrm{data}} = \frac1N\sum_i \delta_{H^{(i)}}$ be the empirical data measure. For each $s\in(0,S]$, the conditional law $p_{s\mid H^{(i)}}$ is a non-degenerate Gaussian by Proposition~\ref{prop:conditional_gaussian_main}, hence strictly positive and $C^\infty$ in $X$, with locally bounded first derivatives uniformly in $i$. Under these conditions, the marginal forward process under $\hat p_{\mathrm{data}}$ is a diffusion with the same diffusion coefficient $\sqrt{2\tau\beta(s)}$ and state-only drift
\[
\bar b_s(X) = \mathbb E[b_{s\mid H}(X)\mid X_s=X] = -\alpha(s)\sum_i \pi_s(i\mid X) A_{H^{(i)}}(X) - \beta(s)\gamma(X - M_0),
\]
where $\pi_s(i\mid X) = p_{s\mid H^{(i)}}(X)/\sum_j p_{s\mid H^{(j)}}(X)$.
\end{lemma}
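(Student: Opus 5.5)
The natural approach is a Fokker--Planck (mimicking / Gyöngy-type) argument at the level of densities. For each $i$, the conditional density $p_i(s,X):=p_{s\mid H^{(i)}}(X)$ is Gaussian with mean $m_s(H^{(i)})$ and covariance $C_s(H^{(i)})$ from Proposition~\ref{prop:conditional_gaussian_main}. Since $m_s$ and $C_s$ solve the linear ODEs \eqref{eq:mean_ode_main}--\eqref{eq:cov_ode_main}, a direct differentiation shows that $p_i$ solves the Fokker--Planck equation
\[
\partial_s p_i = -\nabla_X\cdot\bigl(b_{s\mid H^{(i)}}\,p_i\bigr) + \tau\beta(s)\,\Delta_X p_i
\]
on $(0,S]\times\R^{n\times m}$. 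Equivalently, one can invoke the standard fact that the law of a linear SDE with additive noise solves its forward Kolmogorov equation.

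The first step is to average over the empirical measure. The marginal density is $\hat p_s = \frac1N\sum_i p_i(s,\cdot)$. Because the Fokker--Planck operator is linear in the density and the diffusion coefficient $\tau\beta(s)$ does not depend on $i$, summing gives
\[
\partial_s\hat p_s = -\nabla_X\cdot\Bigl(\tfrac1N\textstyle\sum_i b_{s\mid H^{(i)}}p_i\Bigr) + \tau\beta(s)\Delta_X\hat p_s .
\]
Since $\hat p_s>0$, we may write $\frac1N\sum_i b_{s\mid H^{(i)}}p_i = \bar b_s\,\hat p_s$ with $\bar b_s=\sum_i\pi_s(i\mid X)\,b_{s\mid H^{(i)}}$. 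By Bayes' rule under $\hat p_{\mathrm{data}}$, the posterior weight is $\pi_s(i\mid X)=p_i/\sum_j p_j$, so $\bar b_s$ equals $\mathbb E[b_{s\mid H}(X)\mid X_s=X]$. Substituting $b_{s\mid H}$ from \eqref{eq:conditional_forward_drift} and using $\sum_i\pi_s(i\mid X)=1$ pulls out the $H$-independent OU term, which yields the displayed formula.

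The second step is to identify $\hat p_s$ with the law of a Markov diffusion. Here $\bar b_s$ is a finite convex combination of affine maps with smooth weights, so it is $C^\infty$ and locally Lipschitz in $X$. It also has at most linear growth, because each $b_{s\mid H^{(i)}}$ is affine with coefficients bounded uniformly in $s$ and the weights lie in $[0,1]$. The SDE $\mathrm dX_s=\bar b_s(X_s)\,\mathrm ds+\sqrt{2\tau\beta(s)}\,\mathrm dW_s$ therefore has a unique strong solution on $(0,S]$. Uniqueness of solutions to the Fokker--Planck equation in the class of probability densities then shows that its one-time marginals coincide with $\hat p_s$. This holds, for instance, via the superposition principle (Figalli/Trevisan) or uniqueness for FPEs with locally Lipschitz, linearly growing drift and constant nondegenerate diffusion.

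The main obstacle is the behaviour near $s=0$. There $C_0=0$, the $p_i$ degenerate to Dirac masses, and $\beta(0)=0$, so both nondegeneracy and the local bounds hold only on $(0,S]$. I would handle this by working on $[\epsilon,S]$, initialising with $\hat p_\epsilon$, and obtaining the statement for each $s\in(0,S]$, which is all the lemma claims. The hypothesis of first derivatives locally bounded uniformly in $i$ is exactly what justifies exchanging the sum with differentiation and applying the FPE uniqueness theorem on each compact time window. This part of the argument should be written carefully rather than with full generality.
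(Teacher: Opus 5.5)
Your proposal follows the paper's proof essentially step for step: the Fokker--Planck equation for each Gaussian conditional, averaging by linearity, rewriting the averaged flux as $\bar b_s\hat p_s$ via the posterior weights $\pi_s(i\mid X)$, and identifying $\bar b_s$ with $\mathbb E[b_{s\mid H}(X)\mid X_s=X]$ by Bayes' rule. Your second step goes further than the paper, which only asserts that the averaged equation ``is the Fokker--Planck equation of the SDE''; you add well-posedness of the SDE with drift $\bar b_s$ and Fokker--Planck uniqueness on $[\epsilon,S]$, which is what actually justifies identifying $\hat p_s$ with the one-time marginals of a Markov diffusion. For that step, what you need is a lower bound on $C_s(H^{(i)})$ that is uniform in $s\in[\epsilon,S]$; uniformity in $i$ plays no role, since a finite sum always commutes with differentiation.
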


\begin{proof}
Throughout, all densities are with respect to Lebesgue measure on
$\R^{n\times m}$, and we use $\nabla\cdot$ and $\nabla^2$ to denote the
divergence and Laplacian operators on $\R^{n\times m}$ identified with
$\R^{nm}$ via vectorisation. We assume sufficient regularity for the
derivatives to be exchanged with the finite sums and conditional
expectations below; this holds, in particular, under the smoothness
assumptions used elsewhere in the appendix (strictly positive
$C^1$ conditional densities, square-integrable solutions to the forward
SDE).

For each $H\in\mathcal D$, the conditional density $p_{s\mid H}$ of the
forward SDE \eqref{eq:forward_sde} satisfies the Fokker--Planck (Kolmogorov
forward) equation \citep{fearnhead2025scalable}
\begin{equation}
\partial_s p_{s\mid H}(X)
= -\nabla\!\cdot\!\bigl(b_{s\mid H}(X)\, p_{s\mid H}(X)\bigr)
+ \tau\beta(s)\,\nabla^2 p_{s\mid H}(X),
\label{eq:cond_fp}
\end{equation}
where $b_{s\mid H}(X) = -\alpha(s)\AH(X) - \beta(s)\gamma(X-M_0)$ is the
conditional drift in \eqref{eq:conditional_forward_drift} and the diffusion
contribution is $\tfrac12\,(2\tau\beta(s))\,\nabla^2 = \tau\beta(s)\,\nabla^2$
because the diffusion coefficient is state-independent and isotropic.

The marginal density under the empirical data measure is
$\hat p_s(X) = \tfrac1N\sum_i p_{s\mid H^{(i)}}(X)$. Averaging
\eqref{eq:cond_fp} over $i$ — which is achievable by recognising that the right-hand side is linear in $p_{s\mid H}$ for each fixed drift $b_{s\mid H}$ — gives
\begin{equation}
\partial_s \hat p_s(X)
= -\frac1N\sum_i \nabla\!\cdot\!\bigl(b_{s\mid H^{(i)}}(X)\, p_{s\mid H^{(i)}}(X)\bigr)
+ \tau\beta(s)\,\nabla^2 \hat p_s(X).
\label{eq:marginal_fp_intermediate}
\end{equation}
The diffusion term is the same as for any individual conditional density
because the diffusion coefficient does not depend on $H$.

We show that the drift sum in \eqref{eq:marginal_fp_intermediate} equals
$\nabla\cdot(\bar b_s(X)\,\hat p_s(X))$ for the state-only drift
$\bar b_s$ of the lemma. Define the posterior weights
\[
\pi_s(i\mid X) := \frac{p_{s\mid H^{(i)}}(X)}{\sum_j p_{s\mid H^{(j)}}(X)}
= \frac{p_{s\mid H^{(i)}}(X)}{N\,\hat p_s(X)},
\]
so that $\pi_s(\cdot\mid X)$ is a probability distribution over
$\{1,\dots,N\}$ for every $X$ in the support of $\hat p_s$, and
$p_{s\mid H^{(i)}}(X) = N\,\pi_s(i\mid X)\,\hat p_s(X)$. Substituting into
the drift sum,
\begin{align}
\frac1N\sum_i b_{s\mid H^{(i)}}(X)\, p_{s\mid H^{(i)}}(X) 
&= \frac1N\sum_i b_{s\mid H^{(i)}}(X)\cdot N\,\pi_s(i\mid X)\,\hat p_s(X) \\
&= \hat p_s(X)\sum_i \pi_s(i\mid X)\, b_{s\mid H^{(i)}}(X).
\end{align}
Defining
\begin{equation}
\bar b_s(X) := \sum_i \pi_s(i\mid X)\, b_{s\mid H^{(i)}}(X)
= \mathbb E_{\hat p_{\mathrm{data}}}\!\bigl[b_{s\mid H}(X)\,\big|\, X_s=X\bigr],
\label{eq:bar_b_def}
\end{equation}
the drift sum equals $\bar b_s(X)\,\hat p_s(X)$, and
\eqref{eq:marginal_fp_intermediate} becomes
\begin{equation}
\partial_s \hat p_s(X)
= -\nabla\!\cdot\!\bigl(\bar b_s(X)\, \hat p_s(X)\bigr)
+ \tau\beta(s)\,\nabla^2 \hat p_s(X).
\label{eq:marginal_fp_final}
\end{equation}
The second equality in \eqref{eq:bar_b_def} follows from the Bayes-rule
identity for the posterior over the latent index $I$ given $X_s=X$ under
the empirical data measure: $(H,X_s) = (H^{(I)},X_s)$ with
$I\sim\mathrm{Uniform}\{1,\dots,N\}$ and
$X_s\mid I=i\sim p_{s\mid H^{(i)}}$, so
$\mathrm{Pr}(I=i\mid X_s=X) = \pi_s(i\mid X)$.

Equation~\eqref{eq:marginal_fp_final} is the Fokker--Planck equation of the
SDE
\[
\mathrm d X_s = \bar b_s(X_s)\,\mathrm ds + \sqrt{2\tau\beta(s)}\,\mathrm dW_s,
\]
which has state-independent diffusion coefficient
$\sqrt{2\tau\beta(s)}$ and state-only drift $\bar b_s$. Substituting the
explicit form $b_{s\mid H}(X) = -\alpha(s)\AH(X) - \beta(s)\gamma(X-M_0)$
into \eqref{eq:bar_b_def} and using that the OU term is $H$-independent,
\[
\bar b_s(X)
= -\alpha(s)\sum_i \pi_s(i\mid X)\, A_{H^{(i)}}(X) - \beta(s)\gamma(X-M_0),
\]
which gives the expression in the statement of the lemma.
\end{proof}

\begin{remark}
\label{rem:marginal_drift_cattiaux}

The marginalisation identity used here -- that a finite mixture of diffusions sharing a common state-independent diffusion coefficient is itself a diffusion with drift equal to the conditional expectation of the per-component drifts given the current state -- is standard. The elementary derivation appears in Brigo \cite{brigo2008general} for the finite-mixture case (Proposition 1.2 and Corollary 1.3), and the same identity is implicit in standard score-based diffusion derivations (e.g.\ Song et al.\ \citep{song2021scorebased}). The subsequent application of Anderson's time-reversal formula to the marginal diffusion is justified under finite-entropy conditions by the recent low-regularity time-reversal results of  \citet{cattiaux2023time}.
\end{remark}

\begin{theorem}[Conditional reverse drift and $L^2$-optimal state-only regression]
\label{thm:appendix_reverse_drift_empirical}
Fix $\mathcal D=\{H^{(1)},\dots,H^{(N)}\}\subset\{0,1\}^{n\times m}$ and let $\hat p_{\mathrm{data}}=\frac1N\sum_i \delta_{H^{(i)}}$. Fix $s\in(0,S]$, and for each $H\in\mathcal D$ let $b_{s\mid H}(X)=-\alpha(s)\AH(X)-\beta(s)\gamma(X-M_0)$. Assume the conditional laws $p_{s\mid H^{(i)}}$ are strictly positive $C^1$ densities. Then:
\begin{enumerate}
    \item[(i)] the conditional reverse-time drift is
    $u^\star_{s\mid H}(X) = -b_{s\mid H}(X) + 2\tau\beta(s)r^\star_{s\mid H}(X) = \alpha(s)\AH(X)+\beta(s)\gamma(X-M_0)+2\tau\beta(s)r^\star_{s\mid H}(X)$;
    \item[(ii)] whenever $C_s(H)\succ 0$, $r^\star_{s\mid H}(X)=-\operatorname{mat}(C_s(H)^{-1}(\operatorname{vec}(X)-m_s(H)))$;
    \item[(iii)] the posterior-averaged reverse drift $\bar u^\star_s(X):=\sum_i \pi_s(i\mid X) u^\star_{s\mid H^{(i)}}(X)$ satisfies
    \[
    \bar u^\star_s(X) = \alpha(s)\sum_i \pi_s(i\mid X) A_{H^{(i)}}(X) + \beta(s)\gamma(X-M_0) + 2\tau\beta(s)\nabla_X\log\hat p_s(X);
    \]
    \item[(iv)] the $L^2$-optimal state-only predictor for the population objective \eqref{eq:reverse_drift_loss} is $u_s^{\mathrm{L^2}}(X)=\bar u^\star_s(X)$.
\end{enumerate}
Moreover, the state-only marginal reverse drift of the forward process coincides with this predictor: by Lemma~\ref{lem:marginal_drift} and the Anderson/Haussmann--Pardoux time-reversal formula \citep{anderson1982reverse,haussmann1986time}, the marginal reverse SDE has state-only drift $u^\star_s(X) = -\bar b_s(X) + 2\tau\beta(s)\nabla_X\log\hat p_s(X) = \bar u^\star_s(X)$.
\end{theorem}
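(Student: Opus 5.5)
The plan is to prove the four items in order and then the closing identification. Throughout, $s\in(0,S]$ is fixed and all quantities are built from the finite empirical mixture.

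\emph{Item (i).} For fixed $H$, the forward SDE has drift $b_{s\mid H}$ and the state-independent isotropic diffusion coefficient $\sqrt{2\tau\beta(s)}$. We apply the Anderson/Haussmann--Pardoux time-reversal formula \citep{anderson1982reverse,haussmann1986time} in the same convention as \eqref{eq:conditional_reverse_drift_final}, where the reverse drift is written in forward-time labelling so that $\mathrm dY_t=u_{S-t}(Y_t)\mathrm dt+\dots$. The reversed process then has drift $-b_{s\mid H}+g(s)^2\nabla\log p_{s\mid H}$ with $g(s)^2=2\tau\beta(s)$. The regularity hypotheses needed by that formula hold because $p_{s\mid H}$ is a non-degenerate Gaussian by Proposition~\ref{prop:conditional_gaussian_main}, hence smooth and positive with Gaussian tails. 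Expanding $b_{s\mid H}$ gives the stated formula.

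\emph{Item (ii).} This is a direct computation. The density is $\log p_{s\mid H}(X)=-\tfrac12(\vecop X-m_s)^\top C_s^{-1}(\vecop X-m_s)+\mathrm{const}$. Differentiating and using the symmetry of $C_s$ gives $\nabla_x\log p=-C_s^{-1}(x-m_s)$, and applying $\matop$ yields the claim.

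\emph{Item (iii).} We average (i) against $\pi_s(i\mid X)$. Since $\sum_i\pi_s(i\mid X)=1$, the OU term passes through the average unchanged. For the score term we use the identity
\[
\sum_i\pi_s(i\mid X)\nabla\log p_{s\mid H^{(i)}}(X)=\frac{\sum_i\nabla p_{s\mid H^{(i)}}(X)}{\sum_j p_{s\mid H^{(j)}}(X)}=\nabla\log\hat p_s(X),
\]
which holds because $\hat p_s=\tfrac1N\sum_i p_{s\mid H^{(i)}}$ and this sum is strictly positive.

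\emph{Item (iv).} For fixed $s$, the objective is $\E\|u_s^\theta(X_s)-u^\star_{s\mid H}(X_s)\|_F^2$. We condition on $X_s$ and use the orthogonal decomposition
\[
\E\|f(X_s)-Z\|^2=\E\|f(X_s)-\E[Z\mid X_s]\|^2+\E\|Z-\E[Z\mid X_s]\|^2 .
\]
This requires $Z=u^\star_{s\mid H}(X_s)$ to be square integrable, which holds because it is affine in a Gaussian given $H$ and $N$ is finite. The minimiser over measurable state-only $f$ is therefore $\E[u^\star_{s\mid H}(X)\mid X_s=X]$. By the Bayes argument of Lemma~\ref{lem:marginal_drift}, the posterior over the index is $\pi_s(i\mid X)$, so this conditional expectation equals $\bar u^\star_s$. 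If the time law $\rho$ has a density, minimising pointwise in $s$ is enough for the integrated objective.

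\emph{Closing identification.} By Lemma~\ref{lem:marginal_drift}, the marginal forward process is a diffusion with drift $\bar b_s$ and the same diffusion coefficient. Applying the time-reversal formula to this marginal diffusion (justified as in Remark~\ref{rem:marginal_drift_cattiaux}, since $\hat p_s$ is a finite Gaussian mixture with finite entropy) gives the reverse drift $-\bar b_s+2\tau\beta(s)\nabla\log\hat p_s$. Substituting the explicit form of $\bar b_s$ from the lemma reproduces the expression in (iii) term by term.

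\emph{Main obstacle.} The main difficulty is not algebraic. It is verifying the hypotheses of the time-reversal theorem for the marginal drift $\bar b_s$. This drift is only locally Lipschitz, because the weights $\pi_s$ are softmaxes of quadratics, so it may grow superlinearly. I would handle this by citing the finite-entropy reversal result of \citet{cattiaux2023time}, which requires only $\int_0^S\E\|\bar b_s(X_s)\|^2\,\mathrm ds<\infty$ and finite entropy of $\hat p_s$. Both conditions follow from the Gaussian moments of each mixture component together with the bound $|\pi_s|\le 1$.
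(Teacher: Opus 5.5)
Your proposal is correct and follows the paper's proof essentially step for step. The shared steps are: Anderson/Haussmann--Pardoux reversal with state-independent noise for (i); the Gaussian log-density gradient for (ii); the mixture-score identity for (iii); $L^2$ projection onto $\sigma(X_s)$ plus Bayes' rule over the training index for (iv); and Lemma~\ref{lem:marginal_drift} followed by time reversal of the marginal diffusion for the closing identification, where your appeal to \citet{cattiaux2023time} matches Remark~\ref{rem:marginal_drift_cattiaux}.

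One small correction to your obstacle discussion: $\bar b_s$ cannot grow superlinearly. It is a convex combination of the linear maps $\mathcal A_{H^{(i)}}$ plus an affine OU term, so it can only fail to be globally Lipschitz, and this linear growth is exactly what your $|\pi_s|\le 1$ moment bound relies on.
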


\begin{proof}
\emph{(i)} For state-independent diffusion coefficient
$\sigma_s = \sqrt{2\tau\beta(s)}$, the general
Anderson--Haussmann--Pardoux time-reversal formula
$u^\star_{s\mid H}(X) = -b_{s\mid H}(X) + \nabla_X\!\cdot\!\bigl(p_{s\mid H}(X)\,\sigma_s\sigma_s^\top\bigr)/p_{s\mid H}(X)$
\citep{anderson1982reverse,haussmann1986time} simplifies to
\[
u^\star_{s\mid H}(X)
= -b_{s\mid H}(X) + \sigma_s^2\,\nabla_X\log p_{s\mid H}(X),
\]
because $\sigma_s$ does not depend on the state and so the divergence term
collapses to $\sigma_s^2\,\nabla_X p_{s\mid H}/p_{s\mid H} = \sigma_s^2\,\nabla_X\log p_{s\mid H}$. Substituting the explicit form of $b_{s\mid H}$ from
\eqref{eq:conditional_forward_drift} gives
\[
u^\star_{s\mid H}(X)
= \alpha(s)\AH(X) + \beta(s)\gamma(X-M_0) + 2\tau\beta(s)\,r^\star_{s\mid H}(X).
\]

\emph{(ii)} Direct from $\nabla_x \log\mathcal N(x;m,C) = -C^{-1}(x-m)$
applied to the vectorised conditional law
$\vecop(X_s)\mid H \sim \mathcal N(m_s(H), C_s(H))$ established in
Proposition~\ref{prop:conditional_gaussian_main}, then reshaped to matrix
form via $\matop$. Invertibility of $C_s(H)$ for $s\in(0,S]$ also follows
from Proposition~\ref{prop:conditional_gaussian_main}.

\emph{(iii)} Expand $\bar u^\star_s$ using (i):
\[
\bar u^\star_s(X)
= \alpha(s)\sum_i \pi_s(i\mid X)\,A_{H^{(i)}}(X)
+ \beta(s)\gamma(X-M_0)
+ 2\tau\beta(s)\sum_i \pi_s(i\mid X)\,r^\star_{s\mid H^{(i)}}(X).
\]
The structural term is the posterior-weighted sum, the OU term is
$H$-independent (and so its average over $\pi_s(\cdot\mid X)$ is unchanged),
and the score term is handled by the mixture-score identity
\begin{align*}
\sum_i \pi_s(i\mid X)\,r^\star_{s\mid H^{(i)}}(X)
&= \sum_i \frac{p_{s\mid H^{(i)}}(X)}{\sum_j p_{s\mid H^{(j)}}(X)}\,
   \frac{\nabla_X p_{s\mid H^{(i)}}(X)}{p_{s\mid H^{(i)}}(X)} \\
&= \frac{\sum_i \nabla_X p_{s\mid H^{(i)}}(X)}{\sum_j p_{s\mid H^{(j)}}(X)}
= \frac{\tfrac1N\sum_i \nabla_X p_{s\mid H^{(i)}}(X)}{\tfrac1N\sum_j p_{s\mid H^{(j)}}(X)} \\
&= \frac{\nabla_X \hat p_s(X)}{\hat p_s(X)}
= \nabla_X\log\hat p_s(X),
\end{align*}
where the third equality multiplies numerator and denominator by $1/N$
and the fourth uses linearity of $\nabla_X$ together with
$\hat p_s(X) = \tfrac1N\sum_i p_{s\mid H^{(i)}}(X)$.

\emph{(iv)} Conditioning on $X_s$, the population objective
\eqref{eq:reverse_drift_loss} decomposes by the Tower property as
\[
\mathcal L(\theta)
= \E_{s\sim\rho}\,\E_{X_s\sim\hat p_s}\!\left[
   \E\!\bigl[\|u^\theta_s(X_s) - u^\star_{s\mid H}(X_s)\|_F^2 \,\big|\, X_s\bigr]
   \right].
\]
For each $X$, the inner conditional expectation is minimised pointwise
over $u^\theta_s(X) \in \R^{n\times m}$ by the conditional expectation
$\E[u^\star_{s\mid H}(X_s) \mid X_s = X]$, since $L^2$-projection onto
the $\sigma(X_s)$-measurable random variables coincides with conditional
expectation \citep[Sec.~34]{billingsley1995probability}. Under the empirical
data measure, the joint law of $(H, X_s)$ is obtained by sampling
$I \sim \mathrm{Uniform}\{1,\dots,N\}$ and $X_s \sim p_{s\mid H^{(I)}}$,
and the posterior law of $I$ given $X_s = X$ is
$\mathrm{Pr}(I = i \mid X_s = X) = \pi_s(i\mid X)$ by Bayes' rule. Hence
\[
u_s^{L^2}(X)
= \E[u^\star_{s\mid H}(X_s) \mid X_s = X]
= \sum_i \pi_s(i\mid X)\, u^\star_{s\mid H^{(i)}}(X)
= \bar u^\star_s(X),
\]
where the last equality is the definition of $\bar u^\star_s$ in (iii).

By Lemma~\ref{lem:marginal_drift}, the marginal forward process under
$\hat p_{\mathrm{data}}$ is a diffusion with state-independent diffusion
coefficient $\sigma_s = \sqrt{2\tau\beta(s)}$ and state-only drift
$\bar b_s$. The marginal density
$\hat p_s(X) = \tfrac1N\sum_i p_{s\mid H^{(i)}}(X)$ is a finite uniform
average of strictly positive $C^1$ densities, hence itself strictly
positive and $C^1$, so the regularity hypotheses of
\citet{haussmann1986time} are satisfied for the marginal
diffusion. Applying the same simplification as in (i) (the divergence
term collapses because $\sigma_s$ is state-independent), the marginal
reverse SDE has state-only drift
\[
u^\star_s(X) = -\bar b_s(X) + \sigma_s^2\,\nabla_X\log\hat p_s(X).
\]
Substituting $\bar b_s(X) = -\alpha(s)\sum_i \pi_s(i\mid X)\,A_{H^{(i)}}(X)
- \beta(s)\gamma(X-M_0)$ from Lemma~\ref{lem:marginal_drift} matches the
expression for $\bar u^\star_s(X)$ given in (iii) term-by-term, so
$u^\star_s = \bar u^\star_s = u_s^{L^2}$.
\end{proof}

\begin{proposition}[Exactness under a state-only reverse representation]
\label{prop:appendix_ideal_exactness}
Let $\{\hat p_s\}_{s\in[0,S]}$ be the marginal forward family under $\hat p_{\mathrm{data}}$, with $\hat p_0=\hat p_{\mathrm{data}}$ and $\hat p_S$ the terminal law under \eqref{eq:forward_sde}. If the reverse SDE $\mathrm dY_t = \bar u^\star_{S-t}(Y_t)\,\mathrm dt + \sqrt{2\tau\beta(S-t)}\,\mathrm d\bar W_t$ is well-posed and initialised from $Y_0\sim\hat p_S$, and if it is solved exactly, then $\Law(Y_t)=\hat p_{S-t}$ for all $t$, and in particular $\Law(Y_S)=\hat p_{\mathrm{data}}$.
\end{proposition}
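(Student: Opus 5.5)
The plan is to show that the time-reversed marginal forward process is itself a solution of the reverse SDE, and then to use uniqueness in law to identify it with $Y$.

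\emph{Step 1: a reverse-time process with the right marginals.} By Lemma~\ref{lem:marginal_drift}, the marginal forward process $X_s$ under $\hat p_{\mathrm{data}}$ solves $\mathrm dX_s=\bar b_s(X_s)\,\mathrm ds+\sqrt{2\tau\beta(s)}\,\mathrm dW_s$ with marginals $\hat p_s$. Set $\bar X_t:=X_{S-t}$. Theorem~\ref{thm:appendix_reverse_drift_empirical} (via Anderson/Haussmann--Pardoux, whose hypotheses hold because $\hat p_s$ is strictly positive and $C^1$ on $(0,S]$) shows that $\bar X$ is a weak solution of
\[
\mathrm d\bar X_t=\bar u^\star_{S-t}(\bar X_t)\,\mathrm dt+\sqrt{2\tau\beta(S-t)}\,\mathrm d\bar W_t
\]
for some Brownian motion $\bar W$, on $t\in[0,S)$. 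It starts at $\bar X_0\sim\hat p_S$ and has $\Law(\bar X_t)=\hat p_{S-t}$.

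\emph{Step 2: identify $Y$ with $\bar X$.} $Y$ and $\bar X$ solve the same SDE with the same initial law. Well-posedness includes uniqueness in law, so $\Law(Y_t)=\Law(\bar X_t)=\hat p_{S-t}$ for $t\in[0,S)$. This step is immediate once Step 1 is in place.

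\emph{Step 3: the endpoint $t=S$.} This is the main obstacle, since $s=0$ is excluded from the reversal theorem: $\hat p_0$ is a sum of Dirac masses, so the score blows up there. I would not apply the reversal at $s=0$. Instead I would argue by continuity. The forward process has continuous paths, so $\hat p_{S-t}\to\hat p_0$ weakly as $t\uparrow S$. The solution $Y$ is also path-continuous, being part of the well-posedness assumption on $[0,S]$, so $Y_t\to Y_S$ almost surely and hence in law. Weak limits are unique, so $\Law(Y_S)=\hat p_0=\hat p_{\mathrm{data}}$.

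If the first attempt at Step 3 runs into trouble, the alternative is the following. Apply Steps 1--2 on $[0,S-\epsilon]$ to conclude $\Law(Y_{S-\epsilon})=\hat p_\epsilon$. Then use $\hat p_\epsilon\to\hat p_{\mathrm{data}}$ in $W_2$, which holds because $m_\epsilon(H)\to\mathrm{vec}(H)$ and $C_\epsilon(H)\to0$ by Proposition~\ref{prop:conditional_gaussian_main}, and let $\epsilon\to0$.
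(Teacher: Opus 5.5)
Your proposal follows the paper's proof essentially step for step. It uses the Haussmann--Pardoux reversal of the marginal diffusion supplied by Lemma~\ref{lem:marginal_drift}, identifies the reversed drift with $\bar u^\star$ through Theorem~\ref{thm:appendix_reverse_drift_empirical}, uses uniqueness in law to transfer the marginals $\hat p_{S-t}$ to $Y$ on $[0,S)$, and closes with a weak-continuity argument at $t=S$ that you spell out more carefully than the paper does.

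One caution applies equally to the paper. In Step~1, $X$ must be the Markov diffusion with drift $\bar b_s$, which shares the marginals $\hat p_s$. It should not be the literal mixture process obtained by drawing $H\sim\hat p_{\mathrm{data}}$ and running \eqref{eq:forward_sde}, because the pathwise reversal of that process has a path-dependent drift. The cleanest way around this is the paper's own Fokker--Planck check that $q_t=\hat p_{S-t}$ solves the forward equation of the reverse SDE, combined with uniqueness for that equation on $[0,S-\epsilon]$.
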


\begin{proof}

By Lemma~\ref{lem:marginal_drift}, the marginal forward process under
$\hat p_{\mathrm{data}}$ is a diffusion with state-independent diffusion
coefficient $\sigma_s := \sqrt{2\tau\beta(s)}$ and state-only drift
$\bar b_s$. Its marginal density $\hat p_s$ therefore satisfies the
forward Fokker--Planck equation
\begin{equation}
\partial_s \hat p_s(X)
= -\nabla_X\!\cdot\!\bigl(\bar b_s(X)\,\hat p_s(X)\bigr)
+ \tau\beta(s)\,\nabla_X^2\, \hat p_s(X),
\qquad s\in[0,S],
\label{eq:fwd_fp_marginal}
\end{equation}
with initial condition $\hat p_0 = \hat p_{\mathrm{data}}$ and terminal
density $\hat p_S$. Since $\hat p_s$ is a finite uniform average of the
strictly positive $C^1$ conditional densities $p_{s\mid H^{(i)}}$ for
each $s\in(0,S]$, $\hat p_s$ is itself strictly positive and $C^1$ on
$(0,S]$, so the regularity hypotheses of the time-reversal theorem
below are satisfied.

By the Haussmann--Pardoux time-reversal theorem
\citep{haussmann1986time} applied to the marginal forward
diffusion, there exists a reverse-time process
$Z_\bullet$ on $[0,S]$, defined on a common stochastic basis, such that
$Z_t \stackrel{d}{=} X_{S-t}$ for all $t \in [0,S]$, satisfying
\begin{equation}
\mathrm d Z_t
= u^\star_{S-t}(Z_t)\,\mathrm dt
+ \sigma_{S-t}\,\mathrm d\bar W_t,
\qquad
Z_0 \sim \hat p_S,
\label{eq:reverse_sde_marginal}
\end{equation}
with state-only drift
\begin{equation}
u^\star_s(X)
:= -\bar b_s(X) + \sigma_s^2\,\nabla_X\log\hat p_s(X).
\label{eq:reverse_drift_marginal}
\end{equation}
Equivalently, writing the Fokker--Planck equation of $Z_t$ in reverse
time (i.e.\ for the family $q_t := \hat p_{S-t}$, so that
$\partial_t q_t = -\partial_s \hat p_s\big|_{s=S-t}$), one verifies by
direct substitution into \eqref{eq:fwd_fp_marginal} that $q_t$ satisfies
\[
\partial_t q_t(X)
= -\nabla_X\!\cdot\!\bigl(u^\star_{S-t}(X)\,q_t(X)\bigr)
+ \tau\beta(S-t)\,\nabla_X^2\, q_t(X),
\qquad q_0 = \hat p_S,
\]
which is exactly the forward Fokker--Planck equation of the SDE
\eqref{eq:reverse_sde_marginal}. Since $\hat p_s$ is a strictly positive
$C^1$ density on $(0,S]$, this reverse Fokker--Planck equation has the
unique solution $q_t = \hat p_{S-t}$.

By Theorem~\ref{thm:appendix_reverse_drift_empirical}, the marginal reverse drift $u^\star_s$ defined in \eqref{eq:reverse_drift_marginal}
coincides termwise with the posterior-averaged reverse drift
$\bar u^\star_s$:
\[
u^\star_s(X) = \bar u^\star_s(X)
= \alpha(s)\sum_i \pi_s(i\mid X)\,A_{H^{(i)}}(X)
+ \beta(s)\gamma(X-M_0)
+ 2\tau\beta(s)\,\nabla_X\log\hat p_s(X).
\]
Hence the SDE \eqref{eq:reverse_sde_marginal} with drift $u^\star_s$ is
the same SDE as the one in the proposition statement with drift
$\bar u^\star_s$, and they have the same family of laws.

Combining these steps together: if $Y_t$ solves
$\mathrm dY_t = \bar u^\star_{S-t}(Y_t)\,\mathrm dt + \sigma_{S-t}\,\mathrm d\bar W_t$
exactly, with $Y_0 \sim \hat p_S$, then $Y_\bullet$ has the same law as
$Z_\bullet$ in \eqref{eq:reverse_sde_marginal} (same drift,
same diffusion coefficient, same initial law). Consequently
$\Law(Y_t) = \Law(Z_t) = \hat p_{S-t}$ for all $t \in [0,S)$ and by weak continuity of the diffusion at the endpoint, $\Law(Y_s)$ is the weak limit $\lim_{s\downarrow 0} \hat{p}_s=\hat p_{\mathrm{data}}.$
\end{proof}

In practice, \HEDGE\ initialises generation from $\mathcal N(M_0,(\tau/\gamma)I)$ rather than $\hat p_S$. The discrepancy $\nu_0^\theta\ne\nu_0^\star:=\hat p_S$, together with the training error and solver error, is controlled by the finite-horizon stability results below. More elaborate initialisation schemes, for example learned Gaussianising transports of the kind used in transport-based Monte Carlo \citep{duan2023transport,cabezas2023transport}, could also be analysed through the same initialisation term, but we leave such refinements to future work.

\subsection{Stability via a one-sided Lipschitz $W_2$ bound}
\label{app-sec:stability-one-sided}

\begin{definition}[One-sided Lipschitz constant]
\label{def:one_sided_lipschitz}
A measurable function $\kappa:[0,S]\to\R$ is a one-sided Lipschitz constant of $v:[0,S]\times\R^{n\times m}\to\R^{n\times m}$ if $\langle X-Y, v_s(X)-v_s(Y)\rangle_F\le\kappa_s\|X-Y\|_F^2$ for all $X,Y,s$.
\end{definition}

\begin{theorem}[$W_2$ stability via one-sided Lipschitz]
\label{thm:appendix_stability_reverse_one_sided}
Let $u^\star_s,u^\theta_s$ be measurable time-dependent drift fields on $[0,S]$, and consider the reverse-time SDEs
\begin{align}
\mathrm d Y_t^\star
&=
u^\star_{S-t}(Y_t^\star)\,\mathrm dt
+
\sqrt{2\tau\beta(S-t)}\,\mathrm d\bar W_t,
\qquad
Y_0^\star \sim \nu_0^\star,
\label{eq:ideal_reverse_sde_stability}
\\
\mathrm d Y_t^\theta
&=
u^\theta_{S-t}(Y_t^\theta)\,\mathrm dt
+
\sqrt{2\tau\beta(S-t)}\,\mathrm d\bar W_t,
\qquad
Y_0^\theta \sim \nu_0^\theta,
\label{eq:learned_reverse_sde_stability}
\end{align}
driven by the same Brownian motion. Assume both are well-posed with square-integrable solutions, and that $u^\theta$ has a one-sided Lipschitz constant $\kappa:[0,S]\to\R$. Define the error as $e_t := u^\theta_{S-t}(Y_t^\star)-u^\star_{S-t}(Y_t^\star)$ and let $\Lambda(t):=\int_0^t(2\kappa_{S-r}+1)\,\mathrm dr$. Then
\begin{equation}
\mathbb E\|Y_t^\theta-Y_t^\star\|_F^2
\le
e^{\Lambda(t)}
\left(
\mathbb E\|Y_0^\theta-Y_0^\star\|_F^2
+
\int_0^t e^{-\Lambda(r)}\mathbb E\|e_r\|_F^2\,\mathrm dr
\right),
\label{eq:stability_one_sided_ms}
\end{equation}
and consequently
\begin{equation}
W_2(\Law(Y_t^\theta),\Law(Y_t^\star))
\le
e^{\Lambda(t)/2}
\left(
W_2(\nu_0^\theta,\nu_0^\star)^2
+
\int_0^t e^{-\Lambda(r)}\mathbb E\|e_r\|_F^2\,\mathrm dr
\right)^{\!1/2}.
\label{eq:stability_one_sided_w2}
\end{equation}
If $\kappa_s\le L$ uniformly (global Lipschitz), $\Lambda(t)\le (2L+1)t$ recovers the standard Gr\"onwall estimate. If $\kappa_s\le\kappa_\star<-1/2$ uniformly, then $e^{\Lambda(S)/2}<1$.
\end{theorem}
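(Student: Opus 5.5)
The plan is a synchronous-coupling energy estimate followed by a Grönwall argument, and then optimisation over couplings to pass to $W_2$. Set $\Delta_t := Y_t^\theta - Y_t^\star$. Both SDEs in \eqref{eq:ideal_reverse_sde_stability}--\eqref{eq:learned_reverse_sde_stability} share the same additive, state-independent diffusion coefficient and the same Brownian motion. The noise therefore cancels exactly, and $\Delta_t$ solves the random ODE
\[
\frac{\mathrm d}{\mathrm dt}\Delta_t = u^\theta_{S-t}(Y_t^\theta) - u^\star_{S-t}(Y_t^\star),
\]
pathwise, with absolutely continuous paths. No It\^o correction appears. This means $\|\Delta_t\|_F^2$ can be differentiated directly:
\[
\frac{\mathrm d}{\mathrm dt}\|\Delta_t\|_F^2 = 2\langle \Delta_t,\, u^\theta_{S-t}(Y_t^\theta)-u^\theta_{S-t}(Y_t^\star)\rangle_F + 2\langle \Delta_t, e_t\rangle_F .
\]
Here I add and subtract $u^\theta_{S-t}(Y_t^\star)$, so that $e_t$ is evaluated along the ideal trajectory, as in the statement.

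Next I bound each term. The first is at most $2\kappa_{S-t}\|\Delta_t\|_F^2$ by the one-sided Lipschitz property of Definition~\ref{def:one_sided_lipschitz}. The second is at most $\|\Delta_t\|_F^2 + \|e_t\|_F^2$ by Young's inequality. Together these give the pathwise differential inequality
\[
\frac{\mathrm d}{\mathrm dt}\|\Delta_t\|_F^2 \le (2\kappa_{S-t}+1)\|\Delta_t\|_F^2 + \|e_t\|_F^2 .
\]
Taking expectations is justified by the square-integrability assumption. To be careful about interchanging the derivative and the expectation, I would work with the integrated form and use Fubini. This yields $\phi(t) := \mathbb E\|\Delta_t\|_F^2$ satisfying $\phi(t) \le \phi(0) + \int_0^t [(2\kappa_{S-r}+1)\phi(r) + \mathbb E\|e_r\|_F^2]\,\mathrm dr$. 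I then apply the integrating factor $e^{-\Lambda(t)}$, which is the differential form of Grönwall. Because $\kappa$ may be negative, I use the linear ODE comparison rather than the sign-sensitive integral Grönwall. Concretely, $\frac{\mathrm d}{\mathrm dt}(e^{-\Lambda(t)}\phi(t)) \le e^{-\Lambda(t)}\mathbb E\|e_t\|_F^2$ holds in the absolutely continuous sense, and integrating from $0$ to $t$ gives \eqref{eq:stability_one_sided_ms}.

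The main technical point is justifying these steps. Measurability and local integrability of $\kappa$ are needed for $\Lambda$ to be absolutely continuous. Integrability of $\mathbb E\|e_r\|_F^2$ is needed, and if it is infinite the bound holds trivially. Pathwise absolute continuity of $\Delta$ follows from well-posedness.

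For \eqref{eq:stability_one_sided_w2}, I exploit the freedom in choosing the coupling of the initial laws. The estimate \eqref{eq:stability_one_sided_ms} holds for any coupling of $(Y_0^\theta, Y_0^\star)$ that is independent of $\bar W$, and $\mathbb E\|e_r\|_F^2$ depends only on $\Law(Y^\star)$, not on the coupling. I therefore choose $(Y_0^\theta, Y_0^\star)$ to be an optimal $W_2$ coupling of $(\nu_0^\theta,\nu_0^\star)$, which exists for square-integrable measures. Since $(Y_t^\theta, Y_t^\star)$ is then a coupling of the two time-$t$ laws, $W_2^2 \le \mathbb E\|\Delta_t\|_F^2$. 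Taking square roots gives the claim.

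The remaining statements are immediate. If $\kappa\le L$, then $\Lambda(t)\le(2L+1)t$. If $\kappa\le\kappa_\star<-1/2$, then $\Lambda(S)\le(2\kappa_\star+1)S<0$, so $e^{\Lambda(S)/2}<1$.
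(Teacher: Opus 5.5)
Your proposal is correct and follows the paper's proof essentially step for step. Both arguments use synchronous coupling so the noise cancels, add and subtract $u^\theta_{S-t}(Y_t^\star)$, bound via the one-sided Lipschitz constant and Young's inequality, apply an integrating-factor Gr\"onwall argument to $\mathbb E\|\Delta_t\|_F^2$, and then take an optimal $W_2$ coupling of the initial laws. Your extra remarks are sensible refinements of points the paper passes over quickly: that the error term depends only on $\Law(Y^\star)$ and is therefore unaffected by the choice of coupling, and that the differential rather than the integral form of Gr\"onwall is needed when $\kappa$ can be negative.
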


\begin{proof}
Let $\Delta_t:=Y_t^\theta-Y_t^\star$. Since both SDEs share the same Brownian motion and diffusion coefficient, the stochastic terms cancel:
$\mathrm d\Delta_t = [u^\theta_{S-t}(Y_t^\theta)-u^\star_{S-t}(Y_t^\star)]\,\mathrm dt$. 

By adding and subtracting $u^\theta_{S-t}(Y_t^\star)$ we have,
\[
\mathrm d\Delta_t = a_t\,\mathrm dt + e_t\,\mathrm dt,\quad a_t:=u^\theta_{S-t}(Y_t^\theta)-u^\theta_{S-t}(Y_t^\star).
\]
$\Delta_t$ has absolutely continuous paths, and for a.e.\ $t$, then by the chain rule
$\tfrac{\mathrm{d}}{\mathrm{d}t}\|\Delta_t\|_F^2 = 2\langle\Delta_t,a_t\rangle_F + 2\langle\Delta_t,e_t\rangle_F$.

By the one-sided Lipschitz definition (Definition~\ref{def:one_sided_lipschitz}), $2\langle\Delta_t,a_t\rangle_F\le 2\kappa_{S-t}\|\Delta_t\|_F^2$. By Young's inequality, $2\langle\Delta_t,e_t\rangle_F\le\|\Delta_t\|_F^2+\|e_t\|_F^2$. Hence
\[
\tfrac{\mathrm{d}}{\mathrm{d}t}\|\Delta_t\|_F^2 \le (2\kappa_{S-t}+1)\|\Delta_t\|_F^2+\|e_t\|_F^2.
\]

Taking expectations of both sides of the pathwise inequality and using
Fubini's theorem to interchange the time derivative with the expectation
(justified by square-integrability of the solutions),
\[
\frac{\mathrm d}{\mathrm dt}\,\mathbb E\|\Delta_t\|_F^2
\;\le\;
(2\kappa_{S-t}+1)\,\mathbb E\|\Delta_t\|_F^2 + \mathbb E\|e_t\|_F^2.
\]
Define the shorthand
\[
f(t):=\mathbb E\|\Delta_t\|_F^2,
\quad
g(t):=\mathbb E\|e_t\|_F^2,
\quad
\mu(t):=2\kappa_{S-t}+1,
\quad
\Lambda(t):=\int_0^t \mu(r)\,\mathrm dr,
\]
so the inequality becomes $f'(t)\le \mu(t)\,f(t)+g(t)$. Applying the
standard integrating-factor identity for linear differential
inequalities, multiplying both sides by $e^{-\Lambda(t)}$ and using
$\Lambda'(t)=\mu(t)$,
\[
\frac{\mathrm d}{\mathrm dt}\!\left(e^{-\Lambda(t)} f(t)\right)
= e^{-\Lambda(t)}\bigl(f'(t)-\mu(t)\,f(t)\bigr)
\;\le\;
e^{-\Lambda(t)}\,g(t).
\]
Integrating this inequality from $0$ to $t$, and using $\Lambda(0)=0$,
\[
e^{-\Lambda(t)} f(t) - f(0)
\;\le\;
\int_0^t e^{-\Lambda(r)}\,g(r)\,\mathrm dr.
\]
Multiplying through by $e^{\Lambda(t)}$ and substituting back the
definitions of $f$ and $g$ gives the mean-square bound
\eqref{eq:stability_one_sided_ms}.

The coupling $(Y_0^\theta, Y_0^\star)$ at $t=0$ can be chosen as the
optimal $W_2$-coupling of $(\nu_0^\theta, \nu_0^\star)$, so that
$\mathbb E\|Y_0^\theta - Y_0^\star\|_F^2 = W_2(\nu_0^\theta, \nu_0^\star)^2$.
The synchronously-coupled processes $(Y_t^\theta, Y_t^\star)$ obtained
by solving \eqref{eq:ideal_reverse_sde_stability}--\eqref{eq:learned_reverse_sde_stability}
under the same Brownian motion then form a (not-necessarily-optimal)
coupling of $(\Law(Y_t^\theta), \Law(Y_t^\star))$ at every later
$t\in(0,S]$, hence by the definition of $W_2$ as an infimum over
couplings,
\[
W_2\bigl(\Law(Y_t^\theta), \Law(Y_t^\star)\bigr)^2
\;\le\;
\mathbb E\|Y_t^\theta - Y_t^\star\|_F^2
= f(t).
\]
Substituting \eqref{eq:stability_one_sided_ms} and taking square roots
yields the $W_2$ bound \eqref{eq:stability_one_sided_w2}.
\end{proof}

\begin{remark}[Why the one-sided constant improves on Lipschitz in the \HEDGE\ setting]
\label{rem:one_sided_hedge}
For the exact target $u^\star_s(X) = \alpha(s)\bar A_s(X) + \beta(s)\gamma(X-M_0) + 2\tau\beta(s)\nabla_X\log\hat p_s(X)$, where $\bar A_s(X) := \sum_i \pi_s(i\mid X)\mathcal A_{H^{(i)}}(X)$, the linear-in-$X$ part of the heat term is PSD, so contributes $0$ to $\kappa_s$ from above; the OU term contributes exactly $-\beta(s)\gamma$; only the score Jacobian contributes positively. Hence $\kappa_s\le -\beta(s)\gamma + 2\tau\beta(s)L^{\mathrm{score}}_s$, where $L^{\mathrm{score}}_s$ is a one-sided Lipschitz bound on $\nabla_X\log\hat p_s$. Since $\hat p_s$ is a smooth Gaussian mixture for $s\in(0,S]$, $L^{\mathrm{score}}_s$ is finite on any subinterval bounded away from $s=0$ (it may diverge as $s\downarrow 0$ because $\hat p_0 = \hat p_{\mathrm{data}}$ is supported on a finite set). Whenever $2\tau L^{\mathrm{score}}_s<\gamma$ uniformly on the operating interval, $\kappa_s<0$ there, and the corresponding contribution to $e^{\Lambda(S)/2}$ is contractive. The conclusion of Theorem~\ref{thm:appendix_stability_reverse_one_sided}, which requires the one-sided Lipschitz bound on $u^\star$, applies whenever the score Jacobian satisfies this growth condition.
\end{remark}

\subsection{Euler--Maruyama discretisation error}
\label{app-sec:em-error}

\begin{corollary}[Euler--Maruyama numerical error]
\label{cor:appendix_numerical_generation_error}
Assume $u^\theta_s$ is globally Lipschitz in state uniformly in $s$, has
at most linear growth, and is continuous in $s$ uniformly on bounded
state sets; and assume $\sqrt{\beta}$ is bounded and Lipschitz on $[0,S]$. Let
$0=t_0<t_1<\cdots<t_K=S$ be a uniform grid with $\Delta t = S/K$, and
let $\widehat Y^\theta$ be the Euler--Maruyama approximation of the
learned reverse SDE \eqref{eq:learned_reverse_sde_stability}. Then there
is $C>0$, independent of $\Delta t$, such that
\[
\mathbb E\|Y_S^\theta - \widehat Y_S^\theta\|_F^2 \le C\,\Delta t,
\qquad
W_2\bigl(\Law(Y_S^\theta),\,\Law(\widehat Y_S^\theta)\bigr)\le C^{1/2}\,\Delta t^{1/2}.
\]
\end{corollary}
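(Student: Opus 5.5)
The plan is the classical strong-convergence argument for Euler--Maruyama with additive, time-dependent noise. Write the learned reverse SDE as $\mathrm dY_t = f(t,Y_t)\,\mathrm dt + g(t)\,\mathrm d\bar W_t$ with $f(t,y):=u^\theta_{S-t}(y)$ and $g(t):=\sqrt{2\tau\beta(S-t)}$. Let $\underline t := t_k$ for $t\in[t_k,t_{k+1})$, and define the continuous-time interpolant
\[
\widehat Y_t = \widehat Y_{\underline t} + f(\underline t,\widehat Y_{\underline t})(t-\underline t) + g(\underline t)(\bar W_t-\bar W_{\underline t}).
\]
Use the same Brownian motion and the same initial draw for $\widehat Y$ and $Y^\theta$. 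Then
\[
E_t := Y_t^\theta-\widehat Y_t = \int_0^t \bigl[f(r,Y_r^\theta)-f(\underline r,\widehat Y_{\underline r})\bigr]\mathrm dr + \int_0^t \bigl[g(r)-g(\underline r)\bigr]\mathrm d\bar W_r .
\]

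\emph{Step 1: a priori moment bounds.} Using linear growth of $u^\theta$, boundedness of $\sqrt\beta$, Itô's isometry and Gr\"onwall, show that $\sup_{t\le S}\mathbb E\|Y_t^\theta\|_F^2$ and $\sup_k\mathbb E\|\widehat Y_{t_k}\|_F^2$ are bounded by a constant depending only on $S$, the growth constant and the initial second moment, not on $\Delta t$.

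\emph{Step 2: one-step increment regularity.} From Step 1, show that $\mathbb E\|Y_r^\theta-Y_{\underline r}^\theta\|_F^2\le C\Delta t$. The drift contributes $O(\Delta t^2)$ and the noise contributes $O(\Delta t)$ by Itô's isometry.

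\emph{Step 3: split the drift error.} Decompose
\[
f(r,Y_r^\theta)-f(\underline r,\widehat Y_{\underline r}) = \bigl[f(r,Y_r^\theta)-f(r,Y^\theta_{\underline r})\bigr] + \bigl[f(r,Y^\theta_{\underline r})-f(\underline r,Y^\theta_{\underline r})\bigr] + \bigl[f(\underline r,Y^\theta_{\underline r})-f(\underline r,\widehat Y_{\underline r})\bigr].
\]
The first bracket is bounded by $L\|Y_r^\theta-Y^\theta_{\underline r}\|_F$, which is $O(\Delta t)$ in mean square by Step 2. The third bracket is bounded by $L\|E_{\underline r}\|_F$. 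The diffusion mismatch satisfies $\mathbb E\|\int(g(r)-g(\underline r))\,\mathrm d\bar W_r\|_F^2 = nm\int_0^t|g(r)-g(\underline r)|^2\,\mathrm dr \le C\Delta t^2$, because $\sqrt\beta$ is Lipschitz.

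Combining these with Cauchy--Schwarz on the $\mathrm dr$ integral gives
\[
\sup_{s\le t}\mathbb E\|E_s\|_F^2 \le C\Delta t + C\int_0^t \sup_{s\le r}\mathbb E\|E_s\|_F^2\,\mathrm dr .
\]
Gr\"onwall's inequality then yields $\mathbb E\|E_S\|_F^2\le C\Delta t$. The $W_2$ statement follows exactly as in the proof of Theorem~\ref{thm:appendix_stability_reverse_one_sided}: the synchronous coupling is admissible, so $W_2^2$ is at most the mean-square error, and taking square roots finishes the argument.

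\emph{Main obstacle: the time-regularity term.} The middle bracket is where I expect the real difficulty. The hypothesis is only that $u^\theta$ is continuous in $s$ uniformly on bounded sets, with no H\"older modulus. That gives $o(1)$ convergence, not an $O(\Delta t^{1/2})$ rate. To handle it I would first truncate. On the event $\{\|Y^\theta_{\underline r}\|_F\le R\}$, bound the bracket by the modulus of continuity $\omega_R(\Delta t)$. Off that event, use linear growth together with Chebyshev and the moments from Step 1; this may require a fourth-moment version of Step 1, obtained by the same argument. If a clean rate is required, I would strengthen the assumption to Lipschitz (or $\tfrac12$-H\"older) continuity in $s$ on bounded sets. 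This holds for typical neural parameterisations with smooth time embeddings. Under it, the middle bracket is $O(\Delta t)$ in mean square and the stated constant $C$, independent of $\Delta t$, follows.
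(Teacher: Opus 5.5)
\textbf{Verdict.} Your route is the paper's route, and the obstacle you isolate is a genuine gap in the statement itself, which the paper's proof shares. After vectorising, the paper simply invokes the textbook strong-order-$1/2$ Euler--Maruyama theorem. Your Steps~1--3 with Gr\"onwall, followed by the synchronous-coupling bound for $W_2$, are that theorem's proof unpacked.

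\textbf{The corollary is false as stated.} Under the stated hypothesis, mere continuity of $u^\theta$ in $s$, no rate follows. Truncation or higher moments cannot repair this, because the failure already occurs for a state-independent drift. Take $u^\theta_s(X)=h(s)E$ with $\|E\|_F=1$ and $h(s)=\sum_{n\ge 1}n^{-2}\cos(2\pi 2^n s/S)$, and any $\beta$ satisfying the corollary's assumptions. This drift is Lipschitz in state (constant $0$), bounded, and continuous in $s$, so every hypothesis holds.
\begin{itemize}
\item The scheme evaluates the drift at $s_k=S-t_k=jS/K$, $j=1,\dots,K$.
\item For $K=2^m$, each mode with $n<m$ sums to zero over this grid, and each mode with $n\ge m$ equals $1$ at every grid point.
\item Also $\int_0^S h(s)\,\mathrm ds=0$.
\end{itemize}
The initial draw is shared and both noise terms have mean zero, so $\mathbb E Y^\theta_S-\mathbb E\widehat Y^\theta_S=-S\bigl(\sum_{n\ge m}n^{-2}\bigr)E$. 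Its norm is at least $S/m=S/\log_2(S/\Delta t)$. Any coupling satisfies $\mathbb E\|A-B\|_F^2\ge\|\mathbb E A-\mathbb E B\|_F^2$. Hence both $\mathbb E\|Y^\theta_S-\widehat Y^\theta_S\|_F^2$ and $W_2(\Law(Y^\theta_S),\Law(\widehat Y^\theta_S))^2$ are at least $S^2/\log_2^2(S/\Delta t)$, which is not $O(\Delta t)$.

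\textbf{The fix you propose is necessary, and the paper's proof needs it too.} The Euler-scheme strong-convergence theorem in Kloeden--Platen (Ch.~10) assumes a time-regularity condition of the form $\|u_s(X)-u_{s'}(X)\|_F\le K(1+\|X\|_F)|s-s'|^{1/2}$ for the drift, not only for the diffusion. Higham's treatment is for autonomous coefficients. So neither citation covers a drift that is only continuous in $s$.

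\textbf{How to state the added assumption.} State it in the global form above. A merely local H\"older or Lipschitz constant $K_R$ on $\{\|X\|_F\le R\}$, as in your wording, does not by itself give a $\Delta t$-independent constant in your Step~3. It suffices if $K_R$ grows at most polynomially in $R$ and Step~1 is upgraded to the matching higher moments. Those moments are available for the Gaussian initialisation used in practice, since the drift has linear growth. With the global condition, your middle bracket is at most $2K^2\bigl(1+\sup_{t\le S}\mathbb E\|Y^\theta_t\|_F^2\bigr)\Delta t$ in mean square, and Steps~1--3 close to give exactly the stated bound.
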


\begin{proof}
\emph{Vectorisation.} The map $\vecop : \R^{n\times m}\to\R^{nm}$ is a
linear isometry, $\|\vecop(X)\|_2 = \|X\|_F$ for all $X\in\R^{n\times m}$.
Vectorising the learned reverse SDE
\eqref{eq:learned_reverse_sde_stability} gives the equivalent vector
SDE on $\R^{nm}$,
\[
\mathrm dy_t = \tilde u^\theta_{S-t}(y_t)\,\mathrm dt
+ \sqrt{2\tau\beta(S-t)}\,\mathrm d\bar w_t,
\qquad y_0 \sim \vecop(\nu_0^\theta),
\]
where $\tilde u^\theta_s(y) := \vecop(u^\theta_s(\matop(y)))$ and
$\bar w_t = \vecop(\bar W_t)$ is a standard $nm$-dimensional Brownian
motion. The assumptions on $u^\theta$ transfer to $\tilde u^\theta$ without change: the global state-Lipschitz, linear-growth, and $s$-continuity properties are preserved by composition with the isometry. The diffusion coefficient $\sqrt{2\tau\beta(S-t)}$ is deterministic, independent of state, and Lipschitz in $t$ by the assumed Lipschitz regularity of $\beta$, so the standard regularity on the diffusion coefficient (state-Lipschitz, linear growth) are automatically satisfied.

\emph{Strong-order-$1/2$ Euler--Maruyama bound.} Under these assumptions, the standard Euler--Maruyama convergence theorem for SDEs with state-Lipschitz linear-growth drift and time-Lipschitz state-independent
diffusion coefficient
\citep[Ch.~10]{kloeden1992numerical}
\citep[Sec.~4]{higham2001algorithmic}
gives the strong-order-$1/2$ bound
\[
\mathbb E\|y_S - \widehat y_S\|_2^2 \le C\,\Delta t,
\]
where $\widehat y_S$ is the Euler--Maruyama approximation on the same
probability space as $y_S$, and $C>0$ depends on the Lipschitz and
linear-growth constants of $\tilde u^\theta$, on $\sup_{s\in[0,S]}\beta(s)$,
on the Lipschitz constant of $\beta$, and on $S$, but not on $\Delta t$.

\emph{Transport back to matrix form.} Reshaping $y_S = \vecop(Y_S^\theta)$
and $\widehat y_S = \vecop(\widehat Y_S^\theta)$ via the inverse
isometry $\matop$, and using $\|\vecop(X)\|_2 = \|X\|_F$,
\[
\mathbb E\|Y_S^\theta - \widehat Y_S^\theta\|_F^2
= \mathbb E\|y_S - \widehat y_S\|_2^2
\le C\,\Delta t.
\]
For the $W_2$ bound, the pair $(Y_S^\theta, \widehat Y_S^\theta)$ is a
coupling of $(\Law(Y_S^\theta), \Law(\widehat Y_S^\theta))$ since both
are defined on the same probability space, so by the definition of
$W_2$ \citep{chewi2025statistical},
\[
W_2\bigl(\Law(Y_S^\theta),\,\Law(\widehat Y_S^\theta)\bigr)^2
\le \mathbb E\|Y_S^\theta - \widehat Y_S^\theta\|_F^2
\le C\,\Delta t,
\]
and taking square roots gives the result.
\end{proof}

\subsection{Total generation-error decomposition}

\begin{corollary}[Total generation error]
\label{cor:appendix_total_generation_error}
Under the assumptions of Proposition~\ref{prop:appendix_ideal_exactness},
Theorem~\ref{thm:appendix_stability_reverse_one_sided}, and
Corollary~\ref{cor:appendix_numerical_generation_error},
\begin{equation}
W_2\bigl(\Law(\widehat Y_S^\theta),\,\hat p_{\mathrm{data}}\bigr)
\;\le\;
C^{1/2}\,\Delta t^{1/2}
\;+\;
e^{\Lambda(S)/2}\,\mathcal E_{\mathrm{rev}},
\label{eq:total_generation_error}
\end{equation}
where $\Lambda(S) = \int_0^S(2\kappa_{S-r}+1)\,\mathrm dr$ as in
Theorem~\ref{thm:appendix_stability_reverse_one_sided}, and
\[
\mathcal E_{\mathrm{rev}}^2
:= W_2(\nu_0^\theta,\nu_0^\star)^2
+ \int_0^S e^{-\Lambda(r)}\,\mathbb E\|u^\theta_{S-r}(Y_r^\theta)-u^\star_{S-r}(Y_r^\theta)\|_F^2\,\mathrm dr.
\]
In the ideal case $u^\theta = u^\star$ with exact initialisation
$\nu_0^\theta = \nu_0^\star$, the second term vanishes and only the
$O(\Delta t^{1/2})$ Euler--Maruyama term remains.
\end{corollary}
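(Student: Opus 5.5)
The plan is a triangle inequality in $W_2$ that inserts the exact learned process $Y^\theta_S$ between the Euler--Maruyama output and the data law:
\[
W_2\bigl(\Law(\widehat Y_S^\theta),\hat p_{\mathrm{data}}\bigr)
\le
W_2\bigl(\Law(\widehat Y_S^\theta),\Law(Y_S^\theta)\bigr)
+
W_2\bigl(\Law(Y_S^\theta),\hat p_{\mathrm{data}}\bigr).
\]
Corollary~\ref{cor:appendix_numerical_generation_error} gives the first term directly as $C^{1/2}\Delta t^{1/2}$, since $\widehat Y^\theta$ is the EM scheme for the learned SDE. For the second term, take the ideal process $Y^\star$ with drift $u^\star=\bar u^\star$ started at $\nu_0^\star=\hat p_S$. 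Proposition~\ref{prop:appendix_ideal_exactness}, together with the identification $u^\star_s=\bar u^\star_s$ from Theorem~\ref{thm:appendix_reverse_drift_empirical}, gives $\Law(Y_S^\star)=\hat p_{\mathrm{data}}$. Hence $W_2(\Law(Y_S^\theta),\hat p_{\mathrm{data}})=W_2(\Law(Y_S^\theta),\Law(Y_S^\star))$.

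Apply Theorem~\ref{thm:appendix_stability_reverse_one_sided} at $t=S$ to this pair. The two processes are synchronously coupled, the initial pair $(Y_0^\theta,Y_0^\star)$ is the optimal coupling of $\nu_0^\theta$ and $\nu_0^\star$, and $\kappa$ is the one-sided Lipschitz constant of $u^\theta$. This yields the factor $e^{\Lambda(S)/2}$ times the square root of $W_2(\nu_0^\theta,\nu_0^\star)^2$ plus the weighted integral of the drift mismatch. One caveat: the theorem evaluates the mismatch along the ideal trajectory $Y_r^\star$, whereas the corollary states it along $Y_r^\theta$. I would prove it with the error evaluated along $Y^\star$, consistent with Theorem~\ref{thm:stability_reverse_main} in the main text, and note that the $Y^\theta$ version follows by the symmetric argument. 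In that argument one adds and subtracts $u^\star_{S-r}(Y^\theta_r)$ instead, so the one-sided Lipschitz assumption is needed on $u^\star$. Remark~\ref{rem:one_sided_hedge} discusses when that holds.

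The ideal case is then immediate. If $u^\theta=u^\star$ and $\nu_0^\theta=\nu_0^\star$, both contributions to $\mathcal E_{\mathrm{rev}}$ vanish and only the $\Delta t^{1/2}$ term remains.

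The main obstacle is technical rather than structural. It is the endpoint $t=S$, where forward time reaches $s=0$ and $\hat p_0$ is atomic. There $u^\star$ may blow up, so the well-posedness hypothesis of Theorem~\ref{thm:appendix_stability_reverse_one_sided} up to $t=S$ needs care. I would handle it by taking all statements at $t=S-\epsilon$ and letting $\epsilon\downarrow0$, using the weak continuity already invoked in Proposition~\ref{prop:appendix_ideal_exactness} and lower semicontinuity of $W_2$ under weak convergence. These two facts pass the bound to the limit, provided the right-hand side is finite, which is exactly the assumption that $\mathcal E_{\mathrm{rev}}$ is finite.
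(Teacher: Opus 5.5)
Your proposal is correct and is essentially the paper's proof. It uses the same triangle inequality through $\Law(Y_S^\theta)$, Corollary~\ref{cor:appendix_numerical_generation_error} for the discretisation term, and Proposition~\ref{prop:appendix_ideal_exactness} with Theorem~\ref{thm:appendix_stability_reverse_one_sided} at $t=S$ for the stability term.

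Your caveat is also well founded, and sharper than the paper. The theorem evaluates the drift mismatch along $Y_r^\star$, and the paper's own ideal-case paragraph tacitly uses that form, yet the paper applies the theorem to the $Y_r^\theta$ form in the statement without comment; the theorem does not deliver that form when $\kappa$ is the one-sided constant of $u^\theta$. Your $Y^\star$ version, or a one-sided bound on $u^\star$ for the $Y^\theta$ version, is the right repair, and your $\epsilon\downarrow 0$ endpoint argument is extra care the paper omits.
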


\begin{proof}
The proof follows from the triangle-inequality decomposition of the total error into a stability contribution and a discretisation contribution, each of which is controlled by an earlier result.

Under the assumptions of Proposition~\ref{prop:appendix_ideal_exactness},
the ideal reverse SDE \eqref{eq:ideal_reverse_sde_stability} initialised
from $Y_0^\star \sim \nu_0^\star = \hat p_S$ and solved exactly satisfies
$\Law(Y_S^\star) = \hat p_{\mathrm{data}}$.

Inserting the intermediate law $\Law(Y_S^\theta)$ — the law of the
exact solution of the learned reverse SDE at time $S$ — between the
Euler--Maruyama approximation $\widehat Y_S^\theta$ and the data law,
and using the triangle inequality for $W_2$ gives,
\[
W_2\bigl(\Law(\widehat Y_S^\theta),\,\hat p_{\mathrm{data}}\bigr)
\;\le\;
\underbrace{W_2\bigl(\Law(\widehat Y_S^\theta),\,\Law(Y_S^\theta)\bigr)}_{\text{discretisation error}}
\;+\;
\underbrace{W_2\bigl(\Law(Y_S^\theta),\,\Law(Y_S^\star)\bigr)}_{\text{stability error}}.
\]

The discretisation error (first right-hand-side term) is bounded by Corollary~\ref{cor:appendix_numerical_generation_error},
\[
W_2\bigl(\Law(\widehat Y_S^\theta),\,\Law(Y_S^\theta)\bigr) \le C^{1/2}\,\Delta t^{1/2}.
\]
The stability error (second right-hand-side term) is bounded by Theorem~\ref{thm:appendix_stability_reverse_one_sided}
applied at $t = S$,
\[
W_2\bigl(\Law(Y_S^\theta),\,\Law(Y_S^\star)\bigr)
\le e^{\Lambda(S)/2}\,\mathcal E_{\mathrm{rev}}.
\]
Adding the two bounds gives \eqref{eq:total_generation_error}.

\emph{Ideal case.} Note that if $u^\theta = u^\star$ pointwise on $[0,S]\times\R^{n\times m}$, the
drift-error integrand
$\|u^\theta_{S-r}(Y_r^\star) - u^\star_{S-r}(Y_r^\star)\|_F^2$ is zero, and if additionally $\nu_0^\theta = \nu_0^\star$, then
$W_2(\nu_0^\theta,\nu_0^\star) = 0$. Hence $\mathcal E_{\mathrm{rev}} = 0$,
the stability term in \eqref{eq:total_generation_error} vanishes, and
only the $O(\Delta t^{1/2})$ Euler--Maruyama term remains.
\end{proof}

\subsection{Equivariance of the $L^2$-optimal target}
\label{app-sec:equivariance}

Let $P\in\mathbb R^{n\times n}$ and $Q\in\mathbb R^{m\times m}$ be permutation
matrices. The action of $S_n\times S_m$ on incidence-space states is
\[
    X \longmapsto PXQ^\top ,
    \qquad X\in\mathbb R^{n\times m}.
\]
We write $\mathcal T_{P,Q}(X):=PXQ^\top$. This map is orthogonal with respect
to the Frobenius inner product and has unit absolute Jacobian determinant.

\begin{lemma}[Equivariance of the row- and column-side Laplacians]
\label{lem:lv_le_equivariance}
For any $H\in\{0,1\}^{n\times m}$,
\[
    L_V(PHQ^\top)=P L_V(H)P^\top,
    \qquad
    L_E(PHQ^\top)=Q L_E(H)Q^\top .
\]
Consequently, for the two-sided heat operator
\[
    \mathcal A_H(X):=L_V(H)X+XL_E(H),
\]
one has
\[
    \mathcal A_{PHQ^\top}(PXQ^\top)
    =
    P\mathcal A_H(X)Q^\top .
\]
\end{lemma}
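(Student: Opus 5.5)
The plan is a direct computation that tracks how each building block of $L_V$ and $L_E$ transforms under $H\mapsto H':=PHQ^\top$. Then assemble the pieces, using $P^\top P=I_n$ and $Q^\top Q=I_m$ throughout.

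\emph{Degree and size vectors.} We have $d_V(H')=PHQ^\top\1_m=PH\1_m=P\,d_V(H)$ because $Q^\top\1_m=\1_m$. Similarly $d_E(H')=QH^\top P^\top\1_n=Q\,d_E(H)$. For a permutation matrix $P$ and any vector $d$, $\diag(Pd)=P\diag(d)P^\top$. Hence $D_V(H')=PD_V(H)P^\top$ and $D_E(H')=QD_E(H)Q^\top$.

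\emph{Functions of diagonal matrices.} Any entrywise function of a diagonal matrix commutes with conjugation by a permutation, since conjugation only permutes the diagonal entries. This covers inverse square roots, inverses, and the convention of sending zero entries to zero. So $D_V(H')^{-1/2}=PD_V(H)^{-1/2}P^\top$, and likewise for $D_E^{-1}$ and $D_E^{-1/2}$.

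\emph{Node side.} Substituting into \eqref{eq:LV}, the inner factors $P^\top P$ and $Q^\top Q$ cancel:
\[
D_V'^{-1/2}H'D_E'^{-1}H'^\top D_V'^{-1/2}=P\,D_V^{-1/2}HD_E^{-1}H^\top D_V^{-1/2}\,P^\top .
\]
Since $I_n=PP^\top$, this gives $L_V(H')=PL_V(H)P^\top$.

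\emph{Edge side.} The same computation gives $D_E'^{-1/2}H'^\top H'D_E'^{-1/2}=Q\,(D_E^{-1/2}H^\top HD_E^{-1/2})\,Q^\top$.

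The one step needing an argument is that $\offdiag$ commutes with permutation conjugation. Write $\offdiag(M)=M-\diag(\mathrm{dg}(M))$, where $\mathrm{dg}(M)$ is the vector of diagonal entries of $M$. We have $(QMQ^\top)_{ii}=M_{\sigma(i)\sigma(i)}$, where $\sigma$ is the permutation encoded by $Q$. Hence $\mathrm{dg}(QMQ^\top)=Q\,\mathrm{dg}(M)$, and so $\diag(\mathrm{dg}(QMQ^\top))=Q\diag(\mathrm{dg}(M))Q^\top$. It follows that $A_E(H')=QA_E(H)Q^\top$.

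The overlap degrees then transform as $A_E(H')\1_m=QA_E(H)Q^\top\1_m=Q\,A_E(H)\1_m$. Therefore $D_{\mathrm{ov}}(H')=QD_{\mathrm{ov}}(H)Q^\top$, with the zero-entry convention preserved by the diagonal-function remark above. Substituting into \eqref{eq:LE} gives $L_E(H')=QL_E(H)Q^\top$.

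\emph{Two-sided operator.} Using the two identities just proved,
\[
\mathcal A_{H'}(PXQ^\top)=PL_VP^\top PXQ^\top+PXQ^\top QL_EQ^\top=P(L_VX+XL_E)Q^\top=P\,\AH(X)\,Q^\top .
\]

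The only mildly delicate points are the $\offdiag$ commutation and checking that the zero-diagonal inversion convention is permutation-compatible. Both reduce to the fact that conjugating by a permutation merely relabels indices.
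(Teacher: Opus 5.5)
Your proposal is correct and follows essentially the same route as the paper's proof. Both arguments conjugate the degree matrices by $P$ or $Q$, commute the diagonal inverse square roots and $\offdiag$ with permutation conjugation to obtain $A_E(PHQ^\top)=QA_E(H)Q^\top$ and $D_{\mathrm{ov}}(PHQ^\top)=QD_{\mathrm{ov}}(H)Q^\top$, and then substitute into $\mathcal A_H$. You additionally spell out why $\offdiag$ and the zero-diagonal inversion convention commute with permutation conjugation, which the paper simply asserts.
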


\begin{proof}
The node-degree and hyperedge-size diagonal matrices transform as
\[
    D_V(PHQ^\top)=P D_V(H)P^\top,
    \qquad
    D_E(PHQ^\top)=Q D_E(H)Q^\top .
\]
Therefore,
\[
\begin{aligned}
&(PHQ^\top)D_E(PHQ^\top)^{-1}(PHQ^\top)^\top  \\
&\qquad =
PHQ^\top \bigl(QD_E(H)^{-1}Q^\top\bigr)QH^\top P^\top
=
PHD_E(H)^{-1}H^\top P^\top .
\end{aligned}
\]
Since diagonal inverse square roots commute with permutation conjugation,
\[
    D_V(PHQ^\top)^{-1/2}
    =
    P D_V(H)^{-1/2}P^\top ,
\]
which gives $L_V(PHQ^\top)=PL_V(H)P^\top$.

For the hyperedge-side operator,
\[
    (PHQ^\top)^\top(PHQ^\top)=QH^\top H Q^\top ,
\]
and the same permutation-conjugation identity applies to $D_E^{-1/2}$.
Moreover, $\operatorname{offdiag}(\cdot)$ commutes with conjugation by a
permutation matrix. Hence
\[
    A_E(PHQ^\top)=Q A_E(H)Q^\top .
\]
It follows that
\[
    D_{\mathrm{ov}}(PHQ^\top)
    =
    QD_{\mathrm{ov}}(H)Q^\top ,
\]
and therefore
\[
    L_E(PHQ^\top)=QL_E(H)Q^\top .
\]
The final claim follows by direct substitution:
\[
\begin{aligned}
\mathcal A_{PHQ^\top}(PXQ^\top)
&=
PL_V(H)P^\top PXQ^\top
+
PXQ^\top QL_E(H)Q^\top  \\
&=
P\bigl(L_V(H)X+XL_E(H)\bigr)Q^\top
=
P\mathcal A_H(X)Q^\top .
\end{aligned}
\]
\end{proof}

\begin{lemma}[Equivariance of the conditional forward and reverse laws]
\label{lem:conditional_law_equivariance}
Assume $PM_0Q^\top=M_0$, and let the matrix Brownian motion be isotropic, so
that
\[
    PW_sQ^\top \overset{d}{=} W_s .
\]
Let $X_s^{H}$ denote the solution of the forward SDE initialised at
$X_0=H$. Then
\[
    X_s^{PHQ^\top}
    \overset{d}{=}
    P X_s^{H} Q^\top .
\]
Consequently, for every $s>0$,
\[
    p_{s\mid PHQ^\top}(PXQ^\top)=p_{s\mid H}(X),
\]
and the conditional score and conditional reverse drift satisfy
\[
    r^\star_{s\mid PHQ^\top}(PXQ^\top)
    =
    P r^\star_{s\mid H}(X)Q^\top ,
\]
and
\[
    u^\star_{s\mid PHQ^\top}(PXQ^\top)
    =
    P u^\star_{s\mid H}(X)Q^\top .
\]
\end{lemma}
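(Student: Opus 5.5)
The plan is to prove the pathwise statement first and obtain the density, score and drift identities from it. Set $\widetilde X_s := P X_s^H Q^\top$. The map $\mathcal T_{P,Q}$ is linear and deterministic, so applying it to the forward SDE \eqref{eq:forward_sde} gives
\[
\mathrm d\widetilde X_s = -\alpha(s)P\AH(X_s^H)Q^\top\,\mathrm ds - \beta(s)\gamma\bigl(\widetilde X_s - PM_0Q^\top\bigr)\mathrm ds + \sqrt{2\tau\beta(s)}\,\mathrm d(PW_sQ^\top).
\]
By Lemma~\ref{lem:lv_le_equivariance}, $P\AH(X)Q^\top = \mathcal A_{PHQ^\top}(PXQ^\top)$, and the hypothesis gives $PM_0Q^\top = M_0$. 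The process $\widetilde W_s := PW_sQ^\top$ only permutes the independent entries of $W$, so it is again a matrix Brownian motion with independent standard entries. Hence $\widetilde X$ solves the forward SDE for the hypergraph $PHQ^\top$, with initial condition $\widetilde X_0 = PHQ^\top$ and driving noise $\widetilde W$.

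The drift is linear with bounded time-dependent coefficients, so this SDE has a pathwise unique strong solution, and strong uniqueness implies uniqueness in law. Therefore $X_s^{PHQ^\top} \overset{d}{=} PX_s^HQ^\top$.

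A more concrete alternative uses Proposition~\ref{prop:conditional_gaussian_main} directly. Write $\mathcal P := Q\otimes P$, the orthogonal matrix representing $\mathcal T_{P,Q}$ under $\vecop$. Then $B_s(PHQ^\top) = \mathcal P B_s(H)\mathcal P^\top$, so $\mathcal P m_s(H)$ and $\mathcal P C_s(H)\mathcal P^\top$ solve the mean and covariance ODEs \eqref{eq:mean_ode_main}--\eqref{eq:cov_ode_main} for $PHQ^\top$. Both have the correct initial values, because $\mathcal P\vecop(M_0) = \vecop(M_0)$ and $\mathcal P\,\mathcal P^\top = I$. Linear ODE uniqueness then identifies these Gaussian laws.

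For the density identity, fix $s>0$, so that $p_{s\mid H}$ is a nondegenerate Gaussian density. If $Y = \mathcal T_{P,Q}(X)$ with $X\sim p_{s\mid H}$, the change-of-variables formula gives
\[
p_Y(Y) = p_{s\mid H}\bigl(\mathcal T_{P,Q}^{-1}Y\bigr)\,\bigl|\det \mathcal T_{P,Q}\bigr|^{-1},
\]
and the Jacobian determinant has absolute value one. Combined with the first step, this gives $p_{s\mid PHQ^\top}(PXQ^\top) = p_{s\mid H}(X)$.

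For the score, differentiate this identity in $X$ using the chain rule. Since $\mathcal T_{P,Q}$ is orthogonal, its adjoint is $Y\mapsto P^\top YQ$. This yields $P^\top\, r^\star_{s\mid PHQ^\top}(PXQ^\top)\,Q = r^\star_{s\mid H}(X)$, which rearranges to the stated score identity. The same identity can be checked directly from \eqref{eq:conditional_score_main} via $C_s(PHQ^\top)^{-1} = \mathcal P\, C_s(H)^{-1}\mathcal P^\top$.

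Finally, substitute into \eqref{eq:conditional_reverse_drift_final} and treat it term by term. The heat term transforms correctly by Lemma~\ref{lem:lv_le_equivariance}. The OU term satisfies $PXQ^\top - M_0 = P(X-M_0)Q^\top$ because $M_0$ is invariant. The score term transforms by the previous step. Together these give $u^\star_{s\mid PHQ^\top}(PXQ^\top) = P u^\star_{s\mid H}(X)Q^\top$.

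The only step needing genuine care is the first: identifying the law of the transformed process. This requires the invariance of the Brownian motion under $\mathcal T_{P,Q}$ together with uniqueness for the linear SDE. If a more self-contained route is wanted, I would use the Gaussian moment-ODE version, which avoids any appeal to SDE uniqueness. Everything after that is a chain-rule computation.
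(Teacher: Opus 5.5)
Your proposal is correct and follows essentially the same route as the paper. You push the forward SDE through $\mathcal T_{P,Q}$, invoke Lemma~\ref{lem:lv_le_equivariance}, the invariance of $M_0$ and of the Brownian motion, and uniqueness in law, then obtain the density, score and drift identities from the unit-Jacobian orthogonal change of variables and the chain rule. Your moment-ODE alternative with $\mathcal P=Q\otimes P$ is also valid, but you attribute the facts to the wrong step: $\mathcal P\,\vecop(M_0)=\vecop(M_0)$ and $\mathcal P\mathcal P^\top=I$ are what make the forcing terms of the mean and covariance ODEs match, whereas the initial values match simply because $\mathcal P\,\vecop(H)=\vecop(PHQ^\top)$ and $\mathcal P\,0\,\mathcal P^\top=0$.
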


\begin{proof}
Define $Y_s:=PX_s^H Q^\top$. Since $P$ and $Q$ are constant matrices,
\[
    \mathrm dY_s=P\,\mathrm dX_s^H\,Q^\top .
\]
Substituting the forward SDE gives
\[
\begin{aligned}
\mathrm dY_s
&=
-\alpha(s)P\mathcal A_H(X_s^H)Q^\top\,\mathrm ds
-\beta(s)\gamma P(X_s^H-M_0)Q^\top\,\mathrm ds
+
\sqrt{2\tau\beta(s)}\,P\,\mathrm dW_s\,Q^\top .
\end{aligned}
\]
By Lemma~\ref{lem:lv_le_equivariance},
\[
    P\mathcal A_H(X_s^H)Q^\top
    =
    \mathcal A_{PHQ^\top}(PX_s^HQ^\top)
    =
    \mathcal A_{PHQ^\top}(Y_s),
\]
and by the assumed invariance of $M_0$,
\[
    P(X_s^H-M_0)Q^\top
    =
    Y_s-M_0 .
\]
Finally, isotropy of the Brownian motion gives
\[
    P\,\mathrm dW_s\,Q^\top \overset{d}{=} \mathrm dW_s .
\]
Thus $Y_s$ satisfies the same SDE in law as the process initialised at
$PHQ^\top$. Since $Y_0=PHQ^\top$, uniqueness in law yields
\[
    X_s^{PHQ^\top}
    \overset{d}{=}
    PX_s^HQ^\top .
\]

Because $\mathcal T_{P,Q}$ is an orthogonal transformation with unit absolute
Jacobian determinant, the density identity follows:
\[
    p_{s\mid PHQ^\top}(PXQ^\top)=p_{s\mid H}(X).
\]
Differentiating with respect to $X$ under the orthogonal change of variables
gives the score transformation
\[
    r^\star_{s\mid PHQ^\top}(PXQ^\top)
    =
    P r^\star_{s\mid H}(X)Q^\top .
\]
The conditional reverse drift is
\[
    u^\star_{s\mid H}(X)
    =
    \alpha(s)\mathcal A_H(X)
    +
    \beta(s)\gamma(X-M_0)
    +
    2\tau\beta(s)r^\star_{s\mid H}(X).
\]
Combining the equivariance of $\mathcal A_H$, the invariance of $M_0$, and
the score transformation gives
\[
    u^\star_{s\mid PHQ^\top}(PXQ^\top)
    =
    P u^\star_{s\mid H}(X)Q^\top .
\]
\end{proof}

\begin{theorem}[Equivariance of the \texorpdfstring{$L^2$}{L2}-optimal reverse-drift target]
\label{thm:appendix_equivariance_empirical}
Assume that the empirical data measure $\hat p_{\mathrm{data}}$ is invariant
under the action $H\mapsto PHQ^\top$, that $PM_0Q^\top=M_0$, and that the
driving Brownian motion is isotropic. Let
\[
    \hat p_s(X)
    :=
    \int p_{s\mid H}(X)\,\hat p_{\mathrm{data}}(\mathrm dH)
\]
be the marginal forward density induced by the empirical data measure, and let
\[
    u_s^{L^2}(X)
    :=
    \mathbb E_{\hat p_{\mathrm{data}}}
    \left[
        u^\star_{s\mid H}(X)
        \,\middle|\,
        X_s=X
    \right]
\]
be the $L^2$-optimal state-only reverse-drift target. Then
\[
    \hat p_s(PXQ^\top)=\hat p_s(X)
\]
and
\[
    u_s^{L^2}(PXQ^\top)
    =
    P u_s^{L^2}(X)Q^\top .
\]
\end{theorem}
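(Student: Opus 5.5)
We argue directly from Lemma~\ref{lem:conditional_law_equivariance}, using the finite-mixture form of $\hat p_s$ and the posterior-weight representation of $u_s^{L^2}$ from Theorem~\ref{thm:appendix_reverse_drift_empirical}(iv). Write $\hat p_{\mathrm{data}}=\frac1N\sum_i\delta_{H^{(i)}}$. Invariance of this measure under $H\mapsto PHQ^\top$ means that the map $H^{(i)}\mapsto PH^{(i)}Q^\top$ permutes the atoms, counted with multiplicity. Concretely, there is a bijection $\sigma$ of $\{1,\dots,N\}$ with $PH^{(i)}Q^\top=H^{(\sigma(i))}$, or at least the multisets agree; we fix such a $\sigma$, since any matching of equal atoms will do.

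\emph{Density invariance.} Lemma~\ref{lem:conditional_law_equivariance} gives $p_{s\mid PHQ^\top}(PXQ^\top)=p_{s\mid H}(X)$. Apply this with $X$ replaced by $PXQ^\top$ and $H$ replaced by $P^\top H^{(j)}Q$. We then reindex the sum in $\hat p_s(PXQ^\top)=\frac1N\sum_j p_{s\mid H^{(j)}}(PXQ^\top)$ through $j=\sigma(i)$, obtaining $\frac1N\sum_i p_{s\mid PH^{(i)}Q^\top}(PXQ^\top)=\frac1N\sum_i p_{s\mid H^{(i)}}(X)=\hat p_s(X)$. The same computation applies termwise to the posterior weights: $\pi_s(\sigma(i)\mid PXQ^\top)=\pi_s(i\mid X)$, because both the numerator and the normalising sum are preserved.

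\emph{Drift equivariance.} From Theorem~\ref{thm:appendix_reverse_drift_empirical}(iv), $u_s^{L^2}(Y)=\sum_j\pi_s(j\mid Y)\,u^\star_{s\mid H^{(j)}}(Y)$. Set $Y=PXQ^\top$ and reindex with $j=\sigma(i)$. Using the weight identity and the conditional-drift equivariance $u^\star_{s\mid PH^{(i)}Q^\top}(PXQ^\top)=P\,u^\star_{s\mid H^{(i)}}(X)\,Q^\top$ from Lemma~\ref{lem:conditional_law_equivariance}, we get $\sum_i\pi_s(i\mid X)\,P u^\star_{s\mid H^{(i)}}(X)Q^\top=P\,u_s^{L^2}(X)\,Q^\top$, where linearity lets $P$ and $Q$ come out of the sum.

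As a cross-check, the decomposition in Theorem~\ref{thm:appendix_reverse_drift_empirical}(iii) gives the same result term by term. The structural term follows from Lemma~\ref{lem:lv_le_equivariance} together with the weight identity. The OU term follows from $PM_0Q^\top=M_0$. For the score term, differentiate $\hat p_s\circ\mathcal T_{P,Q}=\hat p_s$ and use orthogonality of $\mathcal T_{P,Q}$ to obtain $\nabla\log\hat p_s(PXQ^\top)=P\,\nabla\log\hat p_s(X)\,Q^\top$.

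The main point needing care is the reindexing step. We must state precisely that invariance of an empirical measure with possibly repeated atoms yields a bijection $\sigma$ on indices. We also need positivity of $\hat p_s$ for $s>0$, which follows from Proposition~\ref{prop:conditional_gaussian_main}, so that the posterior weights are well defined everywhere. Both are routine once written out.
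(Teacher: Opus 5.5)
Your proposal is correct and takes essentially the same route as the paper: both transport $p_{s\mid H}$ and $u^\star_{s\mid H}$ via Lemma~\ref{lem:conditional_law_equivariance}, write $u_s^{L^2}$ as a posterior average over the training hypergraphs, and conclude from invariance of $\hat p_{\mathrm{data}}$. Your explicit bijection $\sigma$ on atoms is simply the finite-sum form of the paper's change of variables $\widetilde H=P^\top HQ$ inside the integral, and the phrase ``with $X$ replaced by $PXQ^\top$'' in your density step is garbled but harmless, since the reindexed computation that follows applies the lemma directly with $H=H^{(i)}$ and is correct.
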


\begin{proof}
First consider the marginal density. Using Lemma~\ref{lem:conditional_law_equivariance},
\[
    p_{s\mid H}(PXQ^\top)
    =
    p_{s\mid P^\top H Q}(X).
\]
Therefore,
\[
\begin{aligned}
\hat p_s(PXQ^\top)
&=
\int p_{s\mid H}(PXQ^\top)\,\hat p_{\mathrm{data}}(\mathrm dH) \\
&=
\int p_{s\mid P^\top H Q}(X)\,\hat p_{\mathrm{data}}(\mathrm dH).
\end{aligned}
\]
By invariance of $\hat p_{\mathrm{data}}$ under $H\mapsto P^\top H Q$, the
last integral equals
\[
    \int p_{s\mid H}(X)\,\hat p_{\mathrm{data}}(\mathrm dH)
    =
    \hat p_s(X).
\]

We now prove the equivariance of the posterior-averaged reverse drift. Define
the posterior measure over the latent training hypergraph by
\[
    \pi_s(\mathrm dH\mid X)
    :=
    \frac{
        p_{s\mid H}(X)\,\hat p_{\mathrm{data}}(\mathrm dH)
    }{
        \hat p_s(X)
    }.
\]
Then
\[
    u_s^{L^2}(X)
    =
    \int u^\star_{s\mid H}(X)\,\pi_s(\mathrm dH\mid X).
\]
At the transformed state $PXQ^\top$,
\[
\begin{aligned}
u_s^{L^2}(PXQ^\top)
&=
\int u^\star_{s\mid H}(PXQ^\top)\,
    \frac{p_{s\mid H}(PXQ^\top)}
         {\hat p_s(PXQ^\top)}
    \,\hat p_{\mathrm{data}}(\mathrm dH).
\end{aligned}
\]
Using Lemma~\ref{lem:conditional_law_equivariance} with
$\widetilde H=P^\top H Q$, we have
\[
    u^\star_{s\mid H}(PXQ^\top)
    =
    P u^\star_{s\mid P^\top H Q}(X)Q^\top ,
\]
and
\[
    p_{s\mid H}(PXQ^\top)
    =
    p_{s\mid P^\top H Q}(X).
\]
Together with $\hat p_s(PXQ^\top)=\hat p_s(X)$, this gives
\[
\begin{aligned}
u_s^{L^2}(PXQ^\top)
&=
P
\left[
\int
    u^\star_{s\mid P^\top H Q}(X)
    \frac{p_{s\mid P^\top H Q}(X)}
         {\hat p_s(X)}
    \,\hat p_{\mathrm{data}}(\mathrm dH)
\right]
Q^\top .
\end{aligned}
\]
Finally, by invariance of $\hat p_{\mathrm{data}}$, the change of variable
$\widetilde H=P^\top H Q$ leaves the empirical measure unchanged, so
\[
\begin{aligned}
u_s^{L^2}(PXQ^\top)
&=
P
\left[
\int
    u^\star_{s\mid \widetilde H}(X)
    \frac{p_{s\mid \widetilde H}(X)}
         {\hat p_s(X)}
    \,\hat p_{\mathrm{data}}(\mathrm d\widetilde H)
\right]
Q^\top  \\
&=
P u_s^{L^2}(X)Q^\top .
\end{aligned}
\]
This proves the claim.
\end{proof}

\section{Additional Numerical Results}
\label{sec:additional_numerics}

\subsection{Real-data benchmarks}
\label{sec:datasets}

We evaluate \HEDGE\ on real hypergraph datasets drawn primarily from the
DHG-Bench/AllSet benchmark family \citep{li2026dhgbench,chien2022you},
together with congressional committee hypergraph data. These datasets were
originally introduced for hypergraph learning tasks, especially node
classification and hyperedge prediction
\citep{yadati2019hypergcn,pinder2021gaussian,sharrock2024tuning}, but in this
paper we use only their incidence structure. Node features, labels, and
task-specific train/validation/test splits are not used by \HEDGE.

A central difficulty in the real-data setting is that each benchmark provides
essentially a \emph{single large observed hypergraph}, rather than a collection
of many hypergraphs that could be used directly for generative training and
evaluation. We therefore construct the learning problem by sampling fixed-size
subhypergraphs from this single observed hypergraph. These sampled
subhypergraphs provide the train/test objects used by the generative benchmark.

Related divide-and-conquer ideas appear, for example, in scalable Monte Carlo methods, where data are partitioned, subposterior simulations are run independently, and the resulting samples are recombined to approximate the full posterior \citep{scott2022bayes,vyner2023swiss}. Our use of subhypergraphs is different in purpose: we do not recombine local simulations to approximate a global posterior. Instead, fixed-size subhypergraph sampling gives a controlled generative benchmark in which all methods are compared at matched node--hyperedge dimensions while preserving local higher-order incidence structure from the original large hypergraph.

For a hypergraph incidence matrix $H\in\{0,1\}^{n\times m}$, rows correspond
to nodes and columns correspond to hyperedges. Table~\ref{tab:real-dataset-stats}
reports basic structural properties of the full benchmark hypergraphs before
subhypergraph sampling. We include the number of nodes $n$, number of
hyperedges $m$, average hyperedge size
\[
\bar d_E
=
\frac{1}{m}\sum_{j=1}^m \sum_{i=1}^n H_{ij},
\]
average node degree
\[
\bar d_V
=
\frac{1}{n}\sum_{i=1}^n \sum_{j=1}^m H_{ij},
\]
and incidence density
\[
\rho
=
\frac{1}{nm}\sum_{i=1}^n\sum_{j=1}^m H_{ij}
=
\frac{\bar d_E}{n}
=
\frac{\bar d_V}{m}.
\]
The feature and class columns are included only to identify the original
benchmark datasets; these quantities are not used by the generative model.

\begin{table}[t]
\centering
\caption{Summary statistics of the real hypergraph datasets used in the paper.
The statistics describe the full benchmark hypergraphs before fixed-size
subhypergraph sampling. Features and classes are part of the original learning
benchmarks but are not used by HEDGE.}
\label{tab:real-dataset-stats}
\resizebox{\textwidth}{!}{
\begin{tabular}{llrrrrrrr}
\toprule
Dataset
& Source family
& Nodes $n$
& Hyperedges $m$
& Avg. edge size $\bar d_E$
& Avg. node degree $\bar d_V$
& Density $\rho$
& Features
& Classes \\
\midrule
Cora
& Cocitation
& 2,708
& 1,579
& 3.03
& 1.77
& $1.12\times 10^{-3}$
& 1,433
& 7 \\
CiteSeer
& Cocitation
& 3,312
& 1,079
& 3.20
& 1.04
& $9.66\times 10^{-4}$
& 3,703
& 6 \\
DBLP
& Coauthorship
& 41,302
& 22,363
& 4.45
& 2.41
& $1.08\times 10^{-4}$
& 1,425
& 6 \\
House-Committees
& Congressional committees
& 1,290
& 340
& 34.73
& 9.18
& $2.69\times 10^{-2}$
& 100
& 2 \\
Actor
& Heterogeneous
& 16,255
& 10,164
& 5.43
& 3.40
& $3.34\times 10^{-4}$
& 50
& 3 \\
Twitch
& Heterogeneous
& 16,812
& 2,627
& 6.23
& 0.97
& $3.71\times 10^{-4}$
& 7
& 2 \\
\bottomrule
\end{tabular}
}
\end{table}

\paragraph{Cora.}
Cora is used in its cocitation hypergraph form
\citep{yadati2019hypergcn,chien2022you}. Nodes correspond to papers, and hyperedges encode citation-induced contexts among papers. Thus, $H_{ij}=1$ indicates that paper $i$ participates in citation context $j$. This dataset is sparse, with small average hyperedge size and low average node degree, making it a useful benchmark for testing whether a generator can reproduce academic incidence patterns without over-densifying the generated hypergraphs. We emphasise that this is the cocitation Cora dataset, not the coauthorship Cora dataset.

\paragraph{CiteSeer.}
CiteSeer is another cocitation-style academic hypergraph \citep{yadati2019hypergcn,chien2022you}. Nodes are papers and hyperedges
correspond to citation-induced groups. It has a similar semantic structure to
Cora but differs in size, feature dimension, class structure, sparsity, and
hyperedge incidence profile. It therefore tests whether performance on
citation-derived hypergraphs is robust across related but distinct
bibliographic datasets. 

\paragraph{DBLP.}
DBLP is used as a coauthorship hypergraph
\citep{yadati2019hypergcn,chien2022you}. Nodes correspond to authors, and
each hyperedge corresponds to the author set of a publication. Thus,
$H_{ij}=1$ means that author $i$ appears on publication $j$. This
dataset differs from the cocitation benchmarks because hyperedges represent
explicit group interactions rather than citation-derived contexts. It is a
natural benchmark for evaluating whether a generator captures collaborative
group-size distributions, repeated author participation, and overlap between
author teams.

\paragraph{House-Committees.}
House-Committees is a congressional committee hypergraph
\citep{chodrow2021generative}. Nodes correspond to members of the United States House
of Representatives, and hyperedges correspond to committees. An incidence entry
$H_{ij}=1$ indicates that representative $i$ serves on committee $j$.
Compared with the academic datasets, House-Committees is much denser and has
substantially larger hyperedges. It therefore provides a useful stress test
for models of group membership, because the generator must reproduce both large
committee sizes and structured overlap between committees. 

\paragraph{Actor.}
Actor is a heterogeneous benchmark derived from the actor-only induced subgraph
of a larger film-director-actor-writer network \citep{tang2009social}. In the
version used here, nodes correspond to actors and hyperedges encode
co-occurrence contexts derived from the benchmark construction. In incidence
form, $H_{ij}=1$ records that actor $i$ participates in context $j$.
This dataset has a different semantic origin from citation, coauthorship, and
committee data: hyperedges arise from media co-appearance. It is therefore
useful for evaluating whether \HEDGE\ captures incidence structure beyond
academic and institutional domains. 

\paragraph{Twitch.}
Twitch is a heterogeneous benchmark derived from social-network data on the
Twitch streaming platform \citep{li2025hyper}. In the version
used here, nodes correspond to users or accounts, while hyperedges encode
shared Twitch-related contexts from the benchmark construction. In this paper,
all attributes and labels are discarded and only the binary incidence relation
is used. Twitch provides an online social-platform domain and tests whether
\HEDGE\ can capture sparse but non-academic incidence patterns with moderate
hyperedge sizes. 

\paragraph{Use in the generative benchmark.}
The original datasets are not used as independent samples from a population of
hypergraphs. Instead, each real dataset is a single observed hypergraph. To
obtain a controlled generative evaluation, we sample a bank of fixed-size
subhypergraphs from the observed hypergraph, train on one subset, and compare
generated samples against held-out subhypergraphs of the same size. This
protocol controls the node and hyperedge dimensions of the incidence matrices,
so the evaluation focuses on whether a method can reproduce incidence-space
structure at a fixed scale.

\subsection{Evaluation metrics}
\label{sec:metrics}

We evaluate generated hypergraphs by comparing a batch of held-out
\emph{real} incidence matrices with a batch of \emph{generated} incidence
matrices. Let
\[
\mathcal H_{\mathrm{real}}
=
\{H^{(r)}_1,\dots,H^{(r)}_{N_r}\},
\qquad
\mathcal H_{\mathrm{gen}}
=
\{H^{(g)}_1,\dots,H^{(g)}_{N_g}\},
\]
where each incidence matrix has fixed size $H\in\{0,1\}^{n\times m}$.
For a hypergraph $H$, write
\[
d_V(H)=H\mathbf 1_m\in\mathbb R^n,
\qquad
d_E(H)=H^\top \mathbf 1_n\in\mathbb R^m,
\]
for the node-degree and hyperedge-size vectors, respectively. Lower is better
for all reported metrics.

In the main real-data comparison tables we report a compact set of
\emph{primary metrics} that target the structural properties most central to
incidence-space hypergraph generation: density calibration, mean node degree,
mean hyperedge size, distributional agreement for node degrees and hyperedge
sizes, and higher-order overlap structure. These are defined below.

\paragraph{Calibration-style summary gaps.}
For a batch $\mathcal H$, define the mean incidence density,
mean node degree, and mean hyperedge size by
\[
\rho(\mathcal H)
=
\frac{1}{|\mathcal H|\,nm}\sum_{H\in\mathcal H}\sum_{i=1}^n\sum_{j=1}^m H_{ij},
\]
\[
\bar k(\mathcal H)
=
\frac{1}{|\mathcal H|\,n}\sum_{H\in\mathcal H}\sum_{i=1}^n d_V(H)_i,
\qquad
\bar e(\mathcal H)
=
\frac{1}{|\mathcal H|\,m}\sum_{H\in\mathcal H}\sum_{j=1}^m d_E(H)_j.
\]
We report the signed gaps
\[
\Delta\rho
=
\rho(\mathcal H_{\mathrm{gen}})-\rho(\mathcal H_{\mathrm{real}}),
\qquad
\Delta k
=
\bar k(\mathcal H_{\mathrm{gen}})-\bar k(\mathcal H_{\mathrm{real}}),
\qquad
\Delta e
=
\bar e(\mathcal H_{\mathrm{gen}})-\bar e(\mathcal H_{\mathrm{real}}).
\]
These quantify whether the generator systematically over- or under-produces
overall incidence density, average node participation, and average hyperedge
size.

\paragraph{Node-degree distribution metric.}
Let
\[
D_V(\mathcal H)
=
\{d_V(H)_i : H\in\mathcal H,\ i=1,\dots,n\}
\]
denote the pooled node-degree sample across the batch. We compare real and
generated batches using the one-dimensional Wasserstein distance
\[
W_1(k)
:=
W_1\!\bigl(D_V(\mathcal H_{\mathrm{real}}),\,D_V(\mathcal H_{\mathrm{gen}})\bigr).
\]
This measures whether the generator reproduces the full distribution of node
participation counts, not only its mean.

\paragraph{Hyperedge-size distribution metric.}
Similarly, let
\[
D_E(\mathcal H)
=
\{d_E(H)_j : H\in\mathcal H,\ j=1,\dots,m\}
\]
denote the pooled hyperedge-size sample. We report
\[
W_1(e)
:=
W_1\!\bigl(D_E(\mathcal H_{\mathrm{real}}),\,D_E(\mathcal H_{\mathrm{gen}})\bigr).
\]
This measures whether the generator reproduces the distribution of group sizes,
which is one of the most basic structural signatures of a hypergraph.

\paragraph{Spectral Wasserstein distances.}
For each hypergraph $H$, let $L_V(H)$ be the node-side hypergraph
Laplacian and $L_E(H)$ the hyperedge-overlap Laplacian used by HEDGE.
Writing their eigenvalues in nondecreasing order as
\[
0\leq \lambda_1^V(H)\leq\cdots\leq \lambda_n^V(H),
\qquad
0\leq \lambda_1^E(H)\leq\cdots\leq \lambda_m^E(H),
\]
we retain the first $K$ eigenvalues from each side and pool them across the
batch:
\[
\Lambda_V^K(\mathcal H)
=
\{\lambda_i^V(H):H\in\mathcal H,\ i=1,\ldots,\min(K,n)\},
\]
\[
\Lambda_E^K(\mathcal H)
=
\{\lambda_i^E(H):H\in\mathcal H,\ i=1,\ldots,\min(K,m)\}.
\]
The node spectral Wasserstein distance and edge spectral Wasserstein distance are
\[
\mathrm{Node\ spec.\ WD}
=
W_1\!\left(
\Lambda_V^K(\mathcal H_{\mathrm{real}}),
\Lambda_V^K(\mathcal H_{\mathrm{gen}})
\right),
\]
\[
\mathrm{Edge\ spec.\ WD}
=
W_1\!\left(
\Lambda_E^K(\mathcal H_{\mathrm{real}}),
\Lambda_E^K(\mathcal H_{\mathrm{gen}})
\right).
\]
The node-side spectrum probes global connectivity and diffusion geometry induced
by node--hyperedge incidence, while the edge-side spectrum probes the geometry of
hyperedge overlap. These metrics therefore assess whether generated samples match
operator-level structure, not only degree, size, or overlap marginals.

\paragraph{Overlap-tail mass.}
To probe genuinely higher-order structure, we measure how often distinct
hyperedges overlap in more than one node. For a hypergraph $H$, define the
pairwise hyperedge-overlap matrix
\[
O(H)=H^\top H,
\]
and let
\[
\mathcal I(H)=\{O_{jk}(H):1\le j<k\le m\}
\]
be the multiset of off-diagonal pairwise hyperedge intersections. We then
define the overlap-tail mass above threshold $2$ by
\[
T_2(H)
=
\frac{1}{|\mathcal I(H)|}
\sum_{z\in\mathcal I(H)} \mathbf 1\{z\ge 2\}.
\]
The reported batch-level tail-gap metric is
\[
\mathrm{Tail\ gap}
=
\left|
\frac{1}{N_r}\sum_{a=1}^{N_r} T_2(H^{(r)}_a)
-
\frac{1}{N_g}\sum_{b=1}^{N_g} T_2(H^{(g)}_b)
\right|.
\]
This statistic is sensitive to repeated multi-node overlap between distinct
hyperedges, and therefore helps distinguish generators that match only marginal
degree/size behaviour from those that also reproduce higher-order shared
membership structure.

\paragraph{Intersection-distribution Wasserstein distance.}
To compare the full distribution of pairwise hyperedge overlaps, not only its
tail mass, we again use the multiset
\[
\mathcal I(H)=\{(H^\top H)_{jk}:1\le j<k\le m\}.
\]
Pooling these values across a batch gives
\[
\mathcal I(\mathcal H)
=
\bigcup_{H\in\mathcal H}\mathcal I(H).
\]
We then define the intersection Wasserstein distance by
\[
\mathrm{Intersection\ WD}
:=
W_1\!\bigl(\mathcal I(\mathcal H_{\mathrm{real}}),\,
\mathcal I(\mathcal H_{\mathrm{gen}})\bigr).
\]
This metric is more stringent than Tail gap because it compares the full
distribution of pairwise hyperedge intersections rather than only the mass in
its upper tail.

\paragraph{Feature MMD.}
To compare broader structural signatures, we compute a feature vector
$\phi(H)\in\mathbb R^d$ for each hypergraph and then measure discrepancy
between the real and generated feature samples using maximum mean discrepancy
(MMD). In our experiments, $\phi(H)$ consists of structural summaries derived
from the incidence pattern, including marginal and overlap-sensitive
quantities. Given feature sets
\[
\Phi_{\mathrm{real}}
=
\{\phi(H):H\in\mathcal H_{\mathrm{real}}\},
\qquad
\Phi_{\mathrm{gen}}
=
\{\phi(H):H\in\mathcal H_{\mathrm{gen}}\},
\]
the reported feature MMD is the empirical kernel MMD between these two samples:
\[
\mathrm{Feature\ MMD}
=
\operatorname{MMD}(\Phi_{\mathrm{real}},\Phi_{\mathrm{gen}}).
\]
Lower values indicate closer agreement between real and generated hypergraphs
in the chosen structural feature space.

\subsection{Ablation results}
\label{sec:ablations}

We use controlled simulated hypergraph distributions to isolate the modelling choices in \HEDGE. The real-data experiments in Section~\ref{sec:experiments} evaluate the final model against external baselines, whereas the simulations here are designed to answer a more specific question: whether the structured heat--OU forward process and the two-sided incidence operator improve generation quality relative to simpler variants under matched training and sampling budgets. All tables report mean $\pm$ standard error across random seeds, and lower values are better for all reported metrics; metric definitions are given in Section~\ref{sec:metrics}.

The simulated datasets cover four qualitatively different regimes. The \textsc{Configuration} regime emphasises heterogeneous degree and hyperedge-size marginals. The \textsc{Overlapping blocks} regime contains latent group structure
with repeated higher-order overlap. The \textsc{Committee} regime is a denser group-membership setting with broad hyperedge sizes and structured overlap. The \textsc{Sparse tail overlap} regime is sparse overall but contains a controlled minority of hyperedge pairs with nontrivial intersections. Together these settings
separate marginal calibration from the higher-order overlap and spectral structure that \HEDGE\ is designed to model.

\paragraph{Summary of the simulated ablations.}
Table~\ref{tab:simulated-hedge-ablations} gives the compact main-paper summary of the simulated ablation study, after first averaging each variant within synthetic dataset. The full \HEDGE\ setting is consistently strongest or near-strongest on the structural metrics most directly aligned with the method's objective: pairwise hyperedge-intersection distributions, overlap-tail mass, node- and edge-side spectra, and multivariate Feature MMD. The pure OU, node-only, and edge-only variants can be competitive on individual metrics, but they do not give the same overall balance across the structural suite of experiments. This
supports the central modelling choice in \HEDGE: the forward process should not be merely an unstructured Gaussian corruption, nor should it smooth only one side of the incidence matrix. The most reliable behaviour is obtained by combining structured heat with the OU terminal mechanism and by using both node-side and
hyperedge-side geometry.

\paragraph{Forward-process ablation.}
Table~\ref{tab:sim-hedge-forward-process} compares the full structured heat--OU model with a pure OU variant. The full model uses the proposed forward process: early-time noising is guided by the hypergraph heat operator, while late-time dynamics transition to the OU mechanism that yields a Gaussian base
law. The \textsc{Pure OU} variant removes the hypergraph heat term and therefore tests whether generic OU corruption, together with the same learned reverse model, is sufficient.

The main pattern is that the heat--OU model is consistently stronger on the metrics most closely aligned with higher-order incidence structure. It improves Intersection WD on all four simulated regimes, with particularly clear gains on \textsc{Committee} and \textsc{Overlapping blocks}. It also improves Feature MMD on all four regimes and gives better node-side spectral agreement throughout. The edge-side spectral metric is also improved in three of four regimes, with the exception of \textsc{Configuration}, where the pure OU variant is competitive on edge-spectrum and degree-marginal quantities. This exception is informative rather than problematic: in the configuration-like setting, much of the data distribution is explained by first-order degree and size heterogeneity, so an unstructured OU perturbation can match some marginals well. However, once the
evaluation focuses on overlap distributions, spectra, and multivariate structural features, the structured heat--OU process gives the better overall match.

The Tail gap metric is deliberately more local than Intersection WD: it measures
only the mass of hyperedge pairs intersecting in at least two nodes. This makes it
somewhat more variable across regimes. For example, pure OU has a smaller Tail
gap on \textsc{Overlapping blocks} and \textsc{Sparse tail overlap}, while the
full model is better on \textsc{Committee} and \textsc{Configuration}. We
therefore interpret the forward-process ablation through the combined structural
suite rather than through a single metric. On this combined view, the full
heat--OU process is preferable because it improves the full intersection
distribution, operator spectra, and feature-space discrepancy without requiring
the degree and size sequences as inputs.

\begin{table}[t]
\centering
\scriptsize
\setlength{\tabcolsep}{2pt}
\begin{tabular}{llrrrrrrr}
\toprule
Dataset & Variant & Deg. WD & Size WD & Intersect. WD & Tail gap & Node spec. WD & Edge spec. WD & Feature MMD \\
\midrule
Committee & Full heat OU & 0.243 $\pm$ 0.033 & 0.305 $\pm$ 0.009 & 0.044 $\pm$ 0.005 & 0.013 $\pm$ 0.000 & 0.005 $\pm$ 0.001 & 0.002 $\pm$ 0.000 & 0.073 $\pm$ 0.031 \\
 & Pure OU & 0.245 $\pm$ 0.074 & 0.310 $\pm$ 0.111 & 0.104 $\pm$ 0.057 & 0.028 $\pm$ 0.012 & 0.009 $\pm$ 0.002 & 0.004 $\pm$ 0.002 & 0.100 $\pm$ 0.027 \\
\midrule
Configuration & Full heat OU & 0.267 $\pm$ 0.013 & 0.282 $\pm$ 0.010 & 0.025 $\pm$ 0.001 & 0.005 $\pm$ 0.000 & 0.022 $\pm$ 0.003 & 0.008 $\pm$ 0.000 & 0.069 $\pm$ 0.022 \\
 & Pure OU & 0.203 $\pm$ 0.005 & 0.301 $\pm$ 0.015 & 0.032 $\pm$ 0.013 & 0.011 $\pm$ 0.005 & 0.035 $\pm$ 0.002 & 0.005 $\pm$ 0.001 & 0.089 $\pm$ 0.019 \\
\midrule
Overlapping blocks & Full heat OU & 0.131 $\pm$ 0.024 & 0.168 $\pm$ 0.032 & 0.333 $\pm$ 0.013 & 0.021 $\pm$ 0.002 & 0.095 $\pm$ 0.003 & 0.124 $\pm$ 0.002 & 1.106 $\pm$ 0.110 \\
 & Pure OU & 0.127 $\pm$ 0.018 & 0.177 $\pm$ 0.024 & 0.366 $\pm$ 0.003 & 0.005 $\pm$ 0.001 & 0.102 $\pm$ 0.001 & 0.129 $\pm$ 0.001 & 1.280 $\pm$ 0.027 \\
\midrule
Sparse tail overlap & Full heat OU & 0.128 $\pm$ 0.019 & 0.164 $\pm$ 0.013 & 0.018 $\pm$ 0.002 & 0.007 $\pm$ 0.001 & 0.016 $\pm$ 0.006 & 0.012 $\pm$ 0.001 & 0.078 $\pm$ 0.020 \\
  & Pure OU & 0.112 $\pm$ 0.017 & 0.213 $\pm$ 0.019 & 0.025 $\pm$ 0.004 & 0.004 $\pm$ 0.001 & 0.031 $\pm$ 0.004 & 0.018 $\pm$ 0.004 & 0.095 $\pm$ 0.030 \\
\bottomrule
\end{tabular}
\vspace{1em}
\caption{Simulated-data HEDGE forward process ablation. Values are mean $\pm$ standard error across seeds; lower is better.}
\label{tab:sim-hedge-forward-process}
\end{table}

\paragraph{Two-sided incidence geometry.}
Table~\ref{tab:sim-hedge-two-sided-geometry} ablates the heat operator itself.
The \textsc{Two-sided} variant uses
\[
\mathcal A_H(X)=L_V(H)X + X L_E(H),
\]
where the first term smooths across nodes and the second smooths across
overlapping hyperedges. The \textsc{Node only} variant removes the hyperedge-side
overlap term, giving $\mathcal A_H(X)=L_V(H)X$. The \textsc{Edge only} variant
removes the node-side term, giving $\mathcal A_H(X)=XL_E(H)$. The
\textsc{Pure OU} row is included as an unstructured reference.

The two-sided operator is most clearly beneficial in regimes where higher-order
overlap is central. On \textsc{Committee}, the two-sided model is best on
Intersection WD, Tail gap, node-spectrum WD, edge-spectrum WD, and hyperedge-size
WD, while remaining competitive on degree WD and Feature MMD. On
\textsc{Overlapping blocks}, it is best on hyperedge-size WD, Intersection WD,
node-spectrum WD, edge-spectrum WD, and Feature MMD. These two regimes are the
closest simulated analogues of the structural phenomena targeted by \HEDGE:
co-membership, block overlap, and repeated shared participation. In these cases,
using both node-side and hyperedge-side geometry gives the best overall
structural fidelity.

The remaining regimes show a more nuanced pattern. On \textsc{Configuration},
simpler variants can match degree, size, and some overlap statistics more
closely, which is expected because the data-generating mechanism is dominated by
marginal degree and size constraints rather than rich overlap geometry. On
\textsc{Sparse tail overlap}, the node-only and edge-only variants perform very
well on some marginal and feature metrics, while the two-sided model remains
competitive on the overlap and spectral quantities. This indicates that when
overlap structure is deliberately sparse and localised, one side of the incidence
geometry can sometimes be sufficient. The important point is that the two-sided
operator is not merely a marginal-matching device: its advantage appears most
clearly in the regimes where node participation and hyperedge overlap jointly
determine the data distribution.

\paragraph{Summary.}
The ablations support the modelling assumptions behind \HEDGE, but they also
show that the result is not a trivial uniform sweep. Pure OU dynamics and
one-sided operators can match some first-order marginals, especially in
configuration-like or very sparse regimes. However, the proposed heat--OU
formulation and two-sided operator give the best balance on the higher-order
criteria that motivate the method: pairwise hyperedge-intersection
distributions, nontrivial overlap mass, node- and edge-side spectra, and
multivariate structural feature discrepancy. This is the desired outcome.
\HEDGE\ is not designed merely to reproduce degree and size distributions; it is
designed to preserve incidence-space geometry while retaining a tractable
Gaussian terminal law for reverse-time generation.

\begin{table}[t]
\centering
\scriptsize
\setlength{\tabcolsep}{2pt}
\begin{tabular}{llrrrrrrr}
\toprule
Dataset & Variant & Deg. WD & Size WD & Intersect. WD & Tail gap & Node spec. WD & Edge spec. WD & Feature MMD \\
\midrule
Committee & Edge only & 0.364 $\pm$ 0.067 & 0.435 $\pm$ 0.108 & 0.107 $\pm$ 0.046 & 0.024 $\pm$ 0.012 & 0.010 $\pm$ 0.002 & 0.003 $\pm$ 0.001 & 0.201 $\pm$ 0.053 \\
 & Node only & 0.212 $\pm$ 0.052 & 0.351 $\pm$ 0.060 & 0.117 $\pm$ 0.022 & 0.036 $\pm$ 0.007 & 0.008 $\pm$ 0.002 & 0.005 $\pm$ 0.001 & 0.053 $\pm$ 0.017 \\
 & Pure OU & 0.245 $\pm$ 0.074 & 0.310 $\pm$ 0.111 & 0.104 $\pm$ 0.057 & 0.028 $\pm$ 0.012 & 0.009 $\pm$ 0.002 & 0.004 $\pm$ 0.002 & 0.100 $\pm$ 0.027 \\
 & Two-sided & 0.243 $\pm$ 0.033 & 0.305 $\pm$ 0.009 & 0.044 $\pm$ 0.005 & 0.013 $\pm$ 0.000 & 0.005 $\pm$ 0.001 & 0.002 $\pm$ 0.000 & 0.073 $\pm$ 0.031 \\
\midrule
Configuration & Edge only & 0.173 $\pm$ 0.012 & 0.210 $\pm$ 0.008 & 0.024 $\pm$ 0.008 & 0.006 $\pm$ 0.002 & 0.016 $\pm$ 0.003 & 0.007 $\pm$ 0.001 & 0.029 $\pm$ 0.006 \\
 & Node only & 0.166 $\pm$ 0.016 & 0.246 $\pm$ 0.011 & 0.031 $\pm$ 0.006 & 0.010 $\pm$ 0.003 & 0.025 $\pm$ 0.002 & 0.005 $\pm$ 0.001 & 0.053 $\pm$ 0.026 \\
 & Pure OU & 0.149 $\pm$ 0.020 & 0.226 $\pm$ 0.023 & 0.023 $\pm$ 0.002 & 0.005 $\pm$ 0.002 & 0.026 $\pm$ 0.002 & 0.003 $\pm$ 0.000 & 0.044 $\pm$ 0.019 \\
 & Two-sided & 0.194 $\pm$ 0.009 & 0.248 $\pm$ 0.016 & 0.043 $\pm$ 0.015 & 0.014 $\pm$ 0.005 & 0.018 $\pm$ 0.006 & 0.008 $\pm$ 0.003 & 0.028 $\pm$ 0.008 \\
\midrule
Overlapping blocks & Edge only & 0.208 $\pm$ 0.034 & 0.237 $\pm$ 0.054 & 0.355 $\pm$ 0.005 & 0.018 $\pm$ 0.004 & 0.098 $\pm$ 0.001 & 0.128 $\pm$ 0.000 & 1.201 $\pm$ 0.076 \\
 & Node only & 0.193 $\pm$ 0.061 & 0.209 $\pm$ 0.061 & 0.385 $\pm$ 0.013 & 0.022 $\pm$ 0.007 & 0.097 $\pm$ 0.000 & 0.127 $\pm$ 0.001 & 1.285 $\pm$ 0.046 \\
 & Pure OU & 0.127 $\pm$ 0.018 & 0.177 $\pm$ 0.024 & 0.366 $\pm$ 0.003 & 0.005 $\pm$ 0.001 & 0.102 $\pm$ 0.001 & 0.129 $\pm$ 0.001 & 1.280 $\pm$ 0.027 \\
 & Two-sided & 0.131 $\pm$ 0.024 & 0.168 $\pm$ 0.032 & 0.333 $\pm$ 0.013 & 0.021 $\pm$ 0.002 & 0.095 $\pm$ 0.003 & 0.124 $\pm$ 0.002 & 1.106 $\pm$ 0.110 \\
\midrule
Sparse tail overlap & Edge only & 0.075 $\pm$ 0.004 & 0.144 $\pm$ 0.015 & 0.019 $\pm$ 0.002 & 0.007 $\pm$ 0.002 & 0.011 $\pm$ 0.007 & 0.012 $\pm$ 0.002 & 0.062 $\pm$ 0.029 \\
 & Node only & 0.067 $\pm$ 0.011 & 0.143 $\pm$ 0.007 & 0.014 $\pm$ 0.002 & 0.007 $\pm$ 0.001 & 0.013 $\pm$ 0.003 & 0.009 $\pm$ 0.001 & 0.023 $\pm$ 0.007 \\
 & Pure OU & 0.112 $\pm$ 0.017 & 0.213 $\pm$ 0.019 & 0.025 $\pm$ 0.004 & 0.004 $\pm$ 0.001 & 0.031 $\pm$ 0.004 & 0.018 $\pm$ 0.004 & 0.095 $\pm$ 0.030 \\
 & Two-sided & 0.128 $\pm$ 0.019 & 0.164 $\pm$ 0.013 & 0.018 $\pm$ 0.002 & 0.007 $\pm$ 0.001 & 0.016 $\pm$ 0.006 & 0.012 $\pm$ 0.001 & 0.078 $\pm$ 0.020 \\
\bottomrule
\end{tabular}
\vspace{1em}
\caption{Simulated-data HEDGE two-sided geometry ablation. Values are mean $\pm$ standard error across seeds; lower is better.}
\label{tab:sim-hedge-two-sided-geometry}
\end{table}

\subsection{Real-data table of results}
\label{sec:real-data-extra-results}

Table~\ref{tab:realdata_full_main} reports the six primary real-data comparison metrics used throughout the paper. The signed quantities $\Delta\rho$, $\Delta e$, and $\Delta k$ measure calibration of global incidence density, mean hyperedge size, and mean node degree relative to held-out real subhypergraphs. The nonnegative discrepancy measures $W_1(e)$ and $W_1(k)$ evaluate whether the full distributions of hyperedge sizes and node degrees are reproduced, rather than merely their means.  Lower is better for all metrics. As for Table~\ref{tab:main-higher-order-uncertainty}, this is an extension of Table~\ref{fig:main-realdata-comparison} from the main paper, but with the inclusion of uncertainties on each of the reported metrics. 

Table~\ref{tab:realdata_full_main} reports the calibration and marginal-distribution
metrics used in the real-data benchmark, while
Table~\ref{tab:main-higher-order-uncertainty} reports the higher-order metrics
emphasised in the main paper. The overall pattern is clear. HCM-MCMC is
typically the strongest baseline on calibration-style quantities
$(\Delta\rho,\Delta e,\Delta k)$ and on hyperedge-size distribution matching
$W_1(e)$, which is consistent with its strong null-model behaviour on marginal
statistics. \HEDGE, however, is the most consistently strong method on the
metrics that directly probe higher-order incidence structure. In particular, it
achieves the best result on all six datasets for Intersection WD and on most
datasets for Overlap Tail Gap and Feature MMD. Thus, relative to HCM-MCMC,
\HEDGE\ trades a small amount of marginal advantage for a substantial gain in
capturing overlap-sensitive and higher-order structure. ER-HG is uniformly
weaker on most datasets, and HYGENE performs poorly throughout, often by a wide
margin.

\begin{table*}[t]
\centering
\scriptsize
\setlength{\tabcolsep}{3pt}
\caption{Real-data direct comparison across the six main datasets. Entries are mean {\tiny$\pm$} standard deviation over independent seeds. Lower is better in all metrics. For each dataset and metric, best is shown in \textbf{bold} and second-best is in \textit{italics}. H.-Comm.\ = House Committees.}
\label{tab:realdata_full_main}
\begin{tabular}{@{}p{1.05cm}lcccccc@{}}
\toprule
Metric & Method & Cora & CiteSeer & Actor & H.-Comm. & DBLP & Twitch \\
\midrule
\multirow{4}{=}{\makecell[l]{$\Delta\rho$}}
& \HEDGE\           & \textit{0.001 {\tiny$\pm$} 0.000} & \textbf{0.000 {\tiny$\pm$} 0.002} & \textit{-0.000 {\tiny$\pm$} 0.001} & \textit{-0.005 {\tiny$\pm$} 0.004} & \textit{0.003 {\tiny$\pm$} 0.002} & \textit{0.000 {\tiny$\pm$} 0.001} \\
& \texttt{HCM-MCMC} & \textbf{0.000 {\tiny$\pm$} 0.001} & \textit{0.002 {\tiny$\pm$} 0.004} & \textbf{-0.001 {\tiny$\pm$} 0.001} & \textbf{-0.007 {\tiny$\pm$} 0.005} & \textbf{-0.000 {\tiny$\pm$} 0.004} & \textbf{-0.001 {\tiny$\pm$} 0.002} \\
& \texttt{ER-HG}    & 0.017 {\tiny$\pm$} 0.001 & 0.016 {\tiny$\pm$} 0.001 & 0.023 {\tiny$\pm$} 0.002 & 0.012 {\tiny$\pm$} 0.004 & 0.021 {\tiny$\pm$} 0.002 & 0.022 {\tiny$\pm$} 0.002 \\
& \texttt{HYGENE}   & 0.053 {\tiny$\pm$} 0.026 & 0.411 {\tiny$\pm$} 0.201 & 0.783 {\tiny$\pm$} 0.165 & 0.060 {\tiny$\pm$} 0.079 & 0.563 {\tiny$\pm$} 0.216 & 0.910 {\tiny$\pm$} 0.004 \\
\midrule
\multirow{4}{=}{\makecell[l]{$\Delta e$}}
& \HEDGE\           & \textit{0.063 {\tiny$\pm$} 0.023} & \textbf{0.009 {\tiny$\pm$} 0.142} & \textit{-0.022 {\tiny$\pm$} 0.067} & \textit{-0.328 {\tiny$\pm$} 0.255} & \textit{0.165 {\tiny$\pm$} 0.099} & \textit{0.001 {\tiny$\pm$} 0.064} \\
& \texttt{HCM-MCMC} & \textbf{0.017 {\tiny$\pm$} 0.039} & \textit{0.116 {\tiny$\pm$} 0.224} & \textbf{-0.040 {\tiny$\pm$} 0.070} & \textbf{-0.432 {\tiny$\pm$} 0.324} & \textbf{-0.030 {\tiny$\pm$} 0.269} & \textbf{-0.059 {\tiny$\pm$} 0.105} \\
& \texttt{ER-HG}    & 1.107 {\tiny$\pm$} 0.045 & 1.034 {\tiny$\pm$} 0.067 & 1.455 {\tiny$\pm$} 0.124 & 0.740 {\tiny$\pm$} 0.235 & 1.317 {\tiny$\pm$} 0.101 & 1.379 {\tiny$\pm$} 0.118 \\
& \texttt{HYGENE}   & 3.398 {\tiny$\pm$} 1.680 & 26.31 {\tiny$\pm$} 12.86 & 50.14 {\tiny$\pm$} 10.59 & 3.844 {\tiny$\pm$} 5.069 & 36.01 {\tiny$\pm$} 13.85 & 58.27 {\tiny$\pm$} 0.28 \\
\midrule
\multirow{4}{=}{\makecell[l]{$\Delta k$}}
& \HEDGE\           & \textit{0.025 {\tiny$\pm$} 0.009} & \textbf{0.004 {\tiny$\pm$} 0.055} & \textit{-0.005 {\tiny$\pm$} 0.017} & \textit{-0.082 {\tiny$\pm$} 0.064} & \textit{0.046 {\tiny$\pm$} 0.028} & \textit{0.000 {\tiny$\pm$} 0.016} \\
& \texttt{HCM-MCMC} & \textbf{0.007 {\tiny$\pm$} 0.015} & \textit{0.045 {\tiny$\pm$} 0.088} & \textbf{-0.010 {\tiny$\pm$} 0.018} & \textbf{-0.108 {\tiny$\pm$} 0.081} & \textbf{-0.008 {\tiny$\pm$} 0.076} & \textbf{-0.015 {\tiny$\pm$} 0.026} \\
& \texttt{ER-HG}    & 0.432 {\tiny$\pm$} 0.018 & 0.404 {\tiny$\pm$} 0.026 & 0.364 {\tiny$\pm$} 0.031 & 0.185 {\tiny$\pm$} 0.059 & 0.371 {\tiny$\pm$} 0.028 & 0.345 {\tiny$\pm$} 0.029 \\
& \texttt{HYGENE}   & 1.327 {\tiny$\pm$} 0.656 & 10.28 {\tiny$\pm$} 5.02 & 12.53 {\tiny$\pm$} 2.65 & 0.961 {\tiny$\pm$} 1.267 & 10.13 {\tiny$\pm$} 3.89 & 14.57 {\tiny$\pm$} 0.07 \\
\midrule
\multirow{4}{=}{\makecell[l]{$W_1(e)$}}
& \HEDGE\           & \textit{0.127 {\tiny$\pm$} 0.038} & \textit{0.358 {\tiny$\pm$} 0.111} & \textbf{0.192 {\tiny$\pm$} 0.082} & \textbf{0.547 {\tiny$\pm$} 0.178} & \textit{0.722 {\tiny$\pm$} 0.155} & \textit{0.354 {\tiny$\pm$} 0.125} \\
& \texttt{HCM-MCMC} & \textbf{0.083 {\tiny$\pm$} 0.035} & \textbf{0.292 {\tiny$\pm$} 0.229} & \textit{0.325 {\tiny$\pm$} 0.122} & \textit{0.661 {\tiny$\pm$} 0.275} & \textbf{0.441 {\tiny$\pm$} 0.100} & \textbf{0.346 {\tiny$\pm$} 0.024} \\
& \texttt{ER-HG}    & 1.117 {\tiny$\pm$} 0.069 & 1.130 {\tiny$\pm$} 0.068 & 1.592 {\tiny$\pm$} 0.122 & 1.245 {\tiny$\pm$} 0.282 & 1.552 {\tiny$\pm$} 0.378 & 1.299 {\tiny$\pm$} 0.068 \\
& \texttt{HYGENE}   & 3.395 {\tiny$\pm$} 1.693 & 26.56 {\tiny$\pm$} 12.82 & 57.33 {\tiny$\pm$} 0.67 & 4.896 {\tiny$\pm$} 4.848 & 36.89 {\tiny$\pm$} 25.39 & 58.14 {\tiny$\pm$} 0.26 \\
\midrule
\multirow{4}{=}{\makecell[l]{$W_1(k)$}}
& \HEDGE\           & \textbf{0.048 {\tiny$\pm$} 0.014} & \textbf{0.105 {\tiny$\pm$} 0.022} & \textbf{0.050 {\tiny$\pm$} 0.023} & \textit{0.133 {\tiny$\pm$} 0.033} & \textit{0.174 {\tiny$\pm$} 0.024} & \textit{0.091 {\tiny$\pm$} 0.011} \\
& \texttt{HCM-MCMC} & \textit{0.056 {\tiny$\pm$} 0.003} & \textit{0.143 {\tiny$\pm$} 0.094} & \textbf{0.050 {\tiny$\pm$} 0.026} & \textbf{0.098 {\tiny$\pm$} 0.037} & \textbf{0.101 {\tiny$\pm$} 0.081} & \textbf{0.054 {\tiny$\pm$} 0.026} \\
& \texttt{ER-HG}    & 0.431 {\tiny$\pm$} 0.028 & 0.441 {\tiny$\pm$} 0.027 & 0.358 {\tiny$\pm$} 0.086 & 0.270 {\tiny$\pm$} 0.073 & 0.403 {\tiny$\pm$} 0.094 & 0.313 {\tiny$\pm$} 0.022 \\
& \texttt{HYGENE}   & 2.644 {\tiny$\pm$} 0.757 & 10.60 {\tiny$\pm$} 4.92 & 14.38 {\tiny$\pm$} 0.15 & 1.625 {\tiny$\pm$} 1.021 & 10.94 {\tiny$\pm$} 6.43 & 14.54 {\tiny$\pm$} 0.07 \\
\bottomrule
\end{tabular}
\end{table*}

\begin{table*}[t]
\centering
\scriptsize
\setlength{\tabcolsep}{3.5pt}
\caption{Higher-order real-data comparison with uncertainty. Entries are reported as mean $\pm$ standard deviation across seeds. Lower is better for all metrics; best mean values are \textbf{bold} and second-best mean values are in \textit{italics}. H.-Comm. = House Committees.}
\label{tab:main-higher-order-uncertainty}
\begin{tabular}{@{}p{1.35cm}lrrrrrr@{}}
\toprule
Metric & Method & Cora & CiteSeer & Actor & H.-Comm. & DBLP & Twitch \\
\midrule
\multirow{4}{=}{\makecell[l]{Overlap\\Tail Gap}}
 & HEDGE
 & \textbf{0.005 $\pm$ 0.001}
 & 0.012 $\pm$ 0.005
 & \textbf{0.004 $\pm$ 0.002}
 & \textbf{0.034 $\pm$ 0.016}
 & \textbf{0.011 $\pm$ 0.011}
 & \textbf{0.008 $\pm$ 0.001} \\
 & HCM-MCMC
 & \textit{0.006 $\pm$ 0.001}
 & \textit{0.012 $\pm$ 0.008}
 & \textit{0.008 $\pm$ 0.003}
 & \textit{0.035 $\pm$ 0.011}
 & 0.016 $\pm$ 0.006
 & \textit{0.018 $\pm$ 0.003} \\
 & ER-HG
 & 0.006 $\pm$ 0.004
 & \textbf{0.005 $\pm$ 0.003}
 & 0.032 $\pm$ 0.018
 & 0.130 $\pm$ 0.029
 & \textit{0.012 $\pm$ 0.014}
 & 0.023 $\pm$ 0.004 \\
 & HYGENE
 & 0.618 $\pm$ 0.301
 & 0.738 $\pm$ 0.171
 & 0.992 $\pm$ 0.003
 & 0.419 $\pm$ 0.133
 & 0.977 $\pm$ 0.006
 & 0.980 $\pm$ 0.003 \\
\midrule
\multirow{4}{=}{\makecell[l]{Intersection\\WD}}
 & HEDGE
 & \textbf{0.017 $\pm$ 0.004}
 & \textbf{0.050 $\pm$ 0.018}
 & \textbf{0.041 $\pm$ 0.020}
 & \textbf{0.223 $\pm$ 0.062}
 & \textbf{0.080 $\pm$ 0.022}
 & \textbf{0.075 $\pm$ 0.010} \\
 & HCM-MCMC
 & \textit{0.026 $\pm$ 0.006}
 & \textit{0.081 $\pm$ 0.014}
 & \textit{0.058 $\pm$ 0.014}
 & \textit{0.320 $\pm$ 0.032}
 & \textit{0.081 $\pm$ 0.021}
 & \textit{0.113 $\pm$ 0.026} \\
 & ER-HG
 & 0.167 $\pm$ 0.003
 & 0.187 $\pm$ 0.007
 & 0.319 $\pm$ 0.043
 & 0.612 $\pm$ 0.083
 & 0.295 $\pm$ 0.040
 & 0.334 $\pm$ 0.027 \\
 & HYGENE
 & 4.390 $\pm$ 2.197
 & 21.647 $\pm$ 15.046
 & 61.557 $\pm$ 0.598
 & 3.249 $\pm$ 2.716
 & 40.132 $\pm$ 26.092
 & 62.615 $\pm$ 0.245 \\
\midrule
\multirow{4}{=}{\makecell[l]{Feature\\MMD}}
 & HEDGE
 & \textit{0.123 $\pm$ 0.050}
 & \textbf{0.053 $\pm$ 0.026}
 & \textbf{0.136 $\pm$ 0.079}
 & \textbf{0.072 $\pm$ 0.073}
 & \textbf{0.083 $\pm$ 0.089}
 & \textbf{0.101 $\pm$ 0.085} \\
 & HCM-MCMC
 & \textbf{0.110 $\pm$ 0.032}
 & \textit{0.194 $\pm$ 0.181}
 & \textit{0.277 $\pm$ 0.145}
 & \textit{0.163 $\pm$ 0.128}
 & \textit{0.102 $\pm$ 0.007}
 & \textit{0.159 $\pm$ 0.019} \\
 & ER-HG
 & 1.059 $\pm$ 0.091
 & 1.082 $\pm$ 0.079
 & 0.928 $\pm$ 0.151
 & 0.699 $\pm$ 0.231
 & 0.951 $\pm$ 0.017
 & 0.932 $\pm$ 0.116 \\
 & HYGENE
 & 0.524 $\pm$ 0.080
 & 0.802 $\pm$ 0.125
 & 1.008 $\pm$ 0.288
 & 0.422 $\pm$ 0.215
 & 0.999 $\pm$ 0.280
 & 0.890 $\pm$ 0.066 \\
\bottomrule
\end{tabular}
\end{table*}

\subsection{Bipartite Representations for Real-data Examples}
\label{sec:bipartite-realdata}

Figures~\ref{fig:bipartite-actor}--\ref{fig:bipartite-house-committees} provide a qualitative view of representative held-out real subhypergraphs and matched generated samples
through their \emph{bipartite representation}. A hypergraph incidence matrix $H\in\{0,1\}^{n\times m}$ may be viewed equivalently as a two-mode graph with one node set corresponding to vertices and the other corresponding to hyperedges: a left-side node $v_i$ and a right-side node $e_j$ are connected if and only if $H_{ij}=1$. Thus each line in the bipartite plot represents exactly one node--hyperedge incidence. The plots are therefore visually equivalent to the incidence-matrix view, but often easier to read in terms of participation patterns, degree heterogeneity, and overlap structure.

In each figure, rows correspond to three representative held-out test subhypergraphs chosen by incidence-count tier (low, median, and high), while the generated examples are matched to the corresponding real example by nearest incidence count. This controls for overall size, so that the visual comparison focuses on \emph{how} incidences are arranged rather than simply on \emph{how many} incidences are present. In particular, these plots make it possible to see whether a method reproduces (i) the overall sparsity level, (ii) variation in node participation and hyperedge sizes, and (iii) repeated concentration of incidences that induces nontrivial overlap between hyperedges. 

The broad qualitative pattern agrees with the quantitative results in Tables~\ref{tab:realdata_full_main} and \ref{tab:main-higher-order-uncertainty}. \HEDGE\ generally reproduces the overall incidence organisation of the held-out examples more faithfully than the weaker baselines: its samples usually have the right visual density and exhibit structured, non-uniform incidence patterns rather than diffuse or nearly collapsed occupancy. HCM-MCMC is often visually competitive, which is consistent with its strong performance on marginal degree- and size-based statistics.
However, \HEDGE\ more often preserves the richer overlap-sensitive organisation that is harder to capture from marginals alone. By contrast, ER-HG tends to produce overly homogeneous incidence patterns, while HYGENE frequently appears degenerate on these fixed-size real-data examples, with incidences spread in a way that is visually inconsistent with the held-out subhypergraphs.

These figures should therefore be read as qualitative support for the main empirical claim of the paper. They do not replace the numerical evaluation, but they help explain \emph{why} the higher-order metrics favour \HEDGE: the model is not merely matching coarse sparsity, but is more consistently reproducing the structured incidence organisation that gives rise to realistic hyperedge overlap.

\begin{figure}
    \centering
    \includegraphics[width=0.9\linewidth]{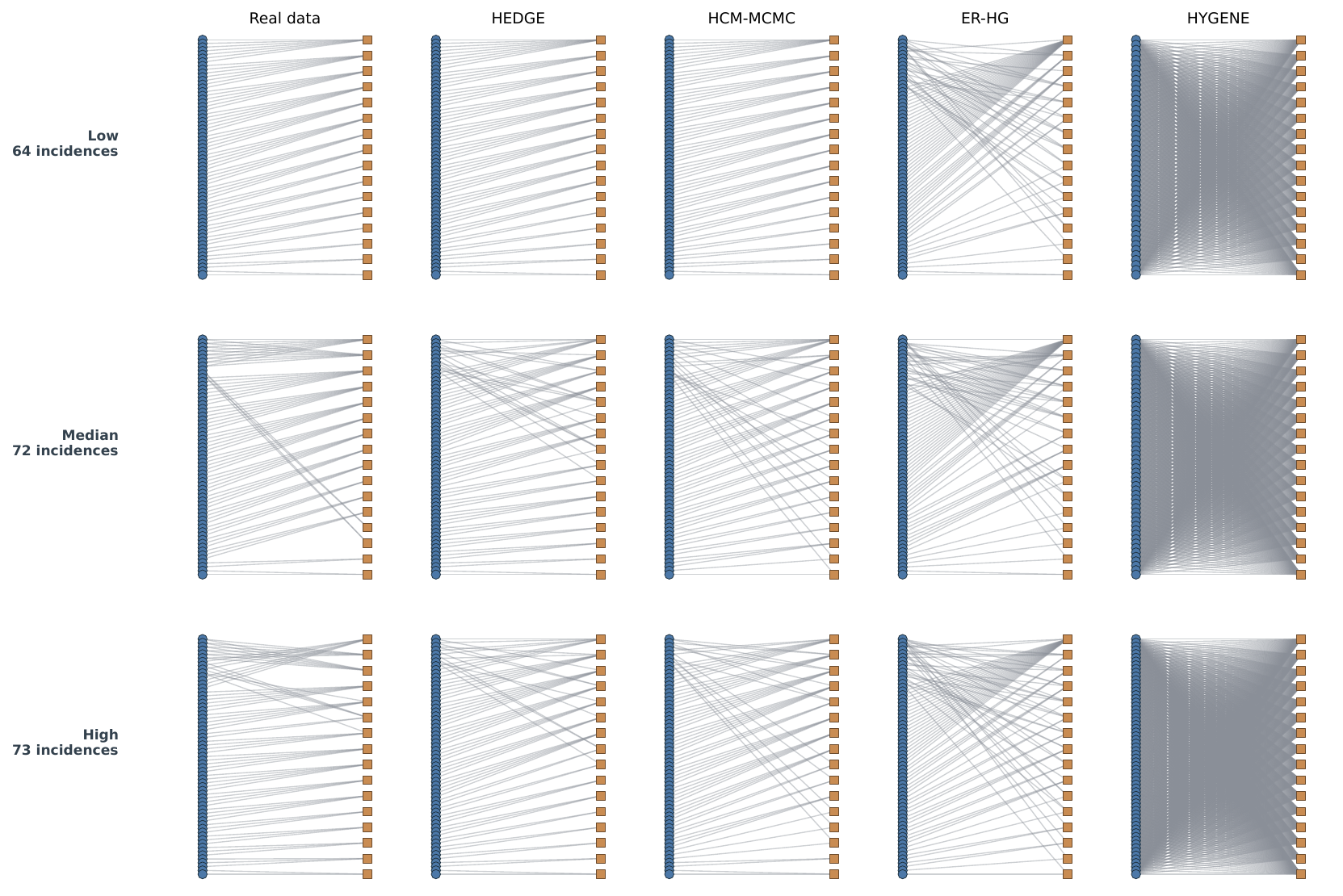}
    \caption{Actor dataset: Bipartite visualisation of held-out subhypergraph incidence matrices and matched generated samples. Each row shows a representative real test subhypergraph selected by incidence-count tier: low, median, or high, where an incidence is one node-hyperedge membership, shown as a line in the bipartite plot. Generated samples in the same row are size-matched to the real example by nearest incidence count. }
    \label{fig:bipartite-actor}
\end{figure}

\begin{figure}
    \centering
    \includegraphics[width=0.9\linewidth]{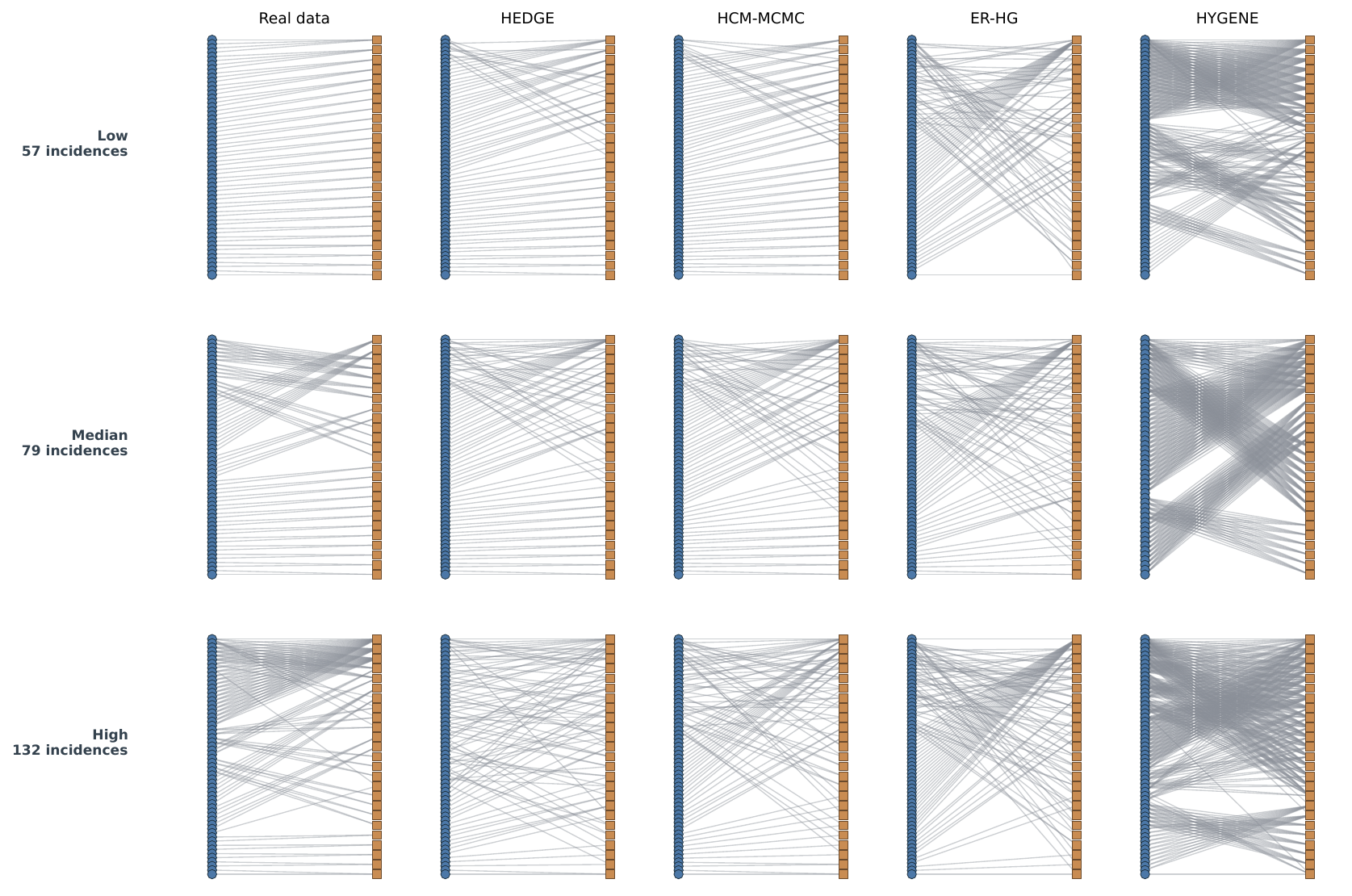}
    \caption{Citeseer dataset: Bipartite visualisation of held-out subhypergraph incidence matrices and matched generated samples. Each row shows a representative real test subhypergraph selected by incidence-count tier: low, median, or high, where an incidence is one node-hyperedge membership, shown as a line in the bipartite plot. Generated samples in the same row are size-matched to the real example by nearest incidence count.}
    \label{fig:bipartite-citeseer}
\end{figure}

\begin{figure}
    \centering
    \includegraphics[width=0.9\linewidth]{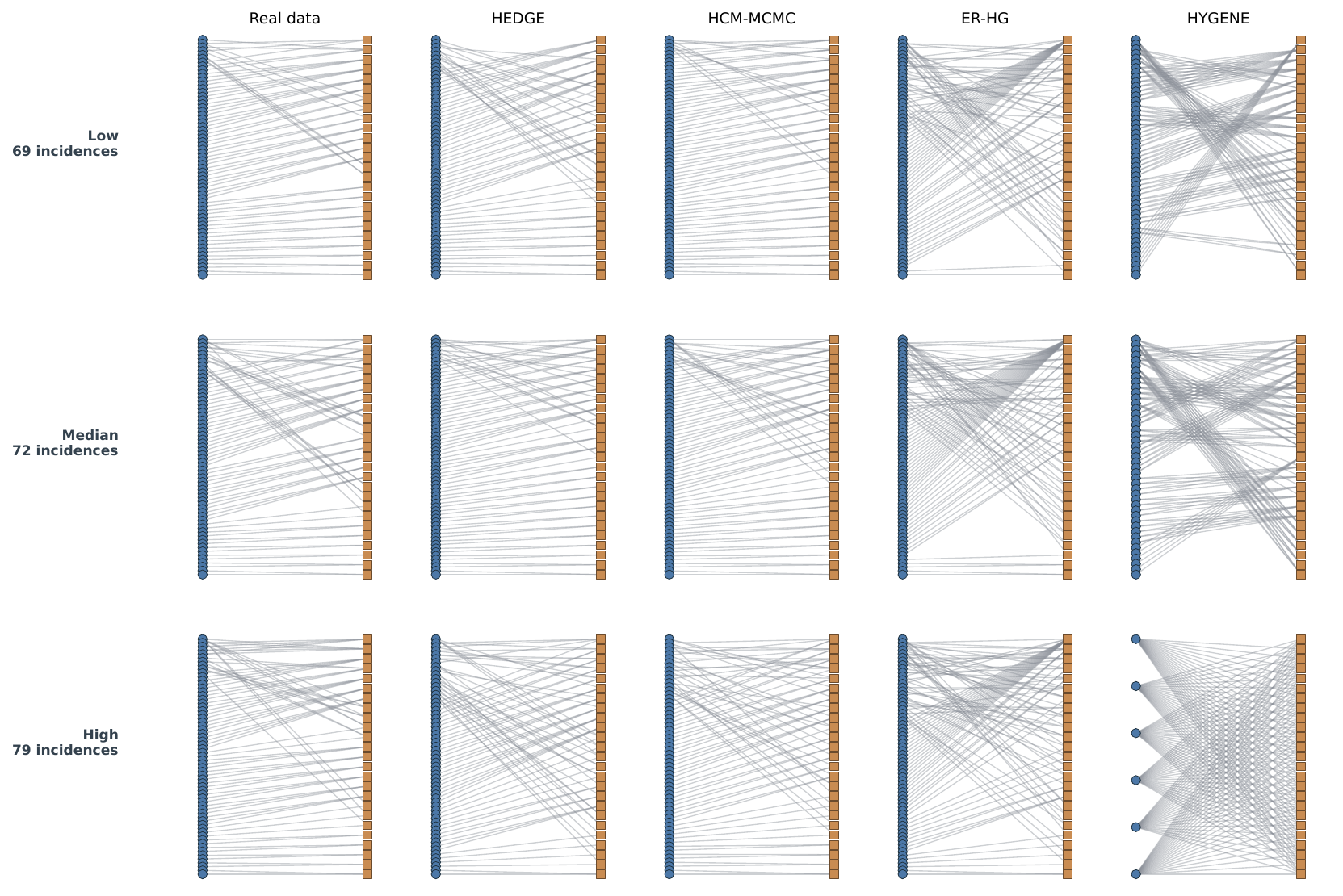}
    \caption{Cora dataset: Bipartite visualisation of held-out subhypergraph incidence matrices and matched generated samples. Each row shows a representative real test subhypergraph selected by incidence-count tier: low, median, or high, where an incidence is one node-hyperedge membership, shown as a line in the bipartite plot. Generated samples in the same row are size-matched to the real example by nearest incidence count.}
    \label{fig:bipartite-cora}
\end{figure}

\begin{figure}
    \centering
    \includegraphics[width=0.9\linewidth]{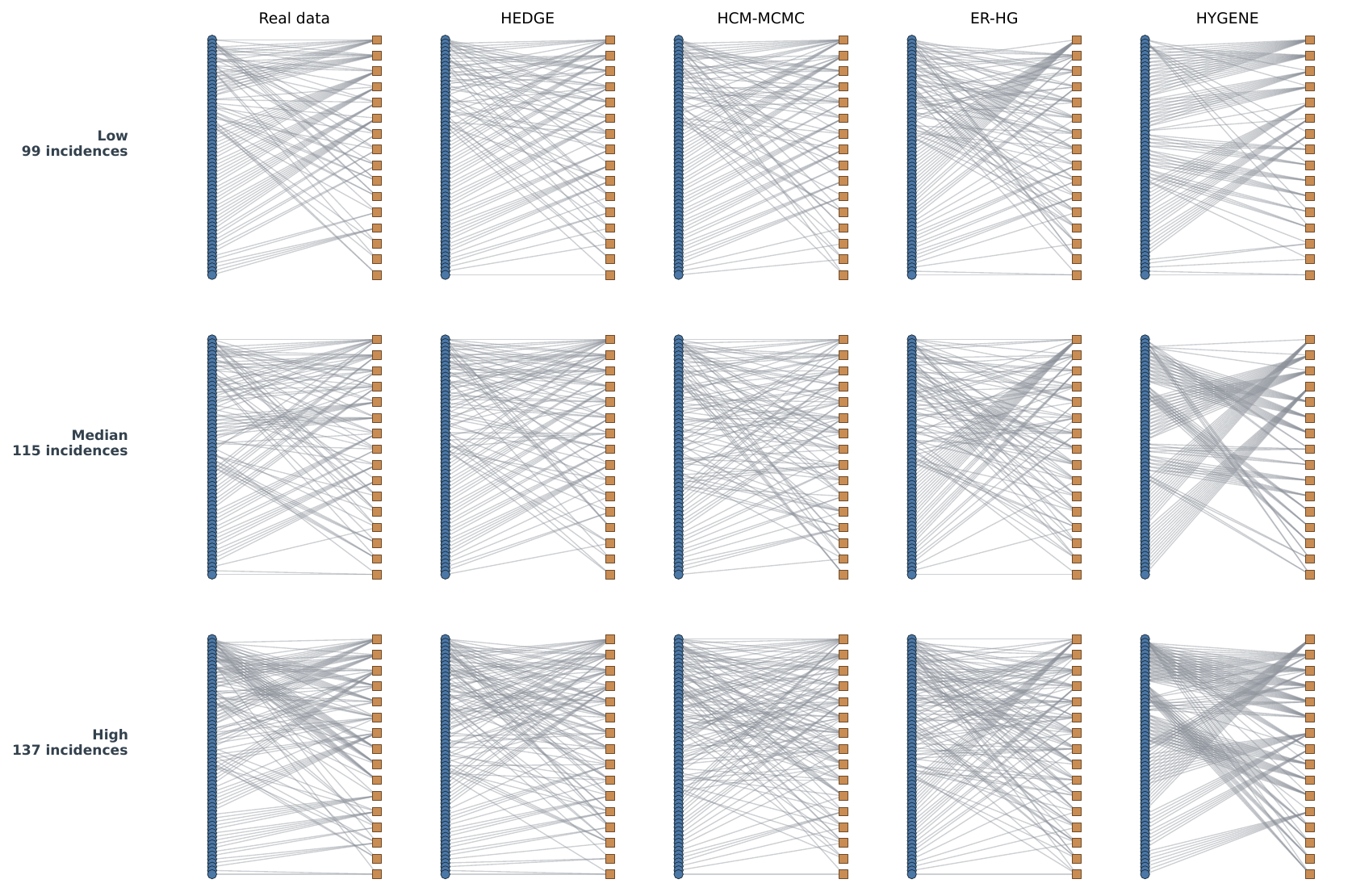}
    \caption{House committees dataset: Bipartite visualisation of held-out subhypergraph incidence matrices and matched generated samples. Each row shows a representative real test subhypergraph selected by incidence-count tier: low, median, or high, where an incidence is one node-hyperedge membership, shown as a line in the bipartite plot. Generated samples in the same row are size-matched to the real example by nearest incidence count.}
    \label{fig:bipartite-house-committees}
\end{figure}

\end{document}